\def\eqref#1{equation~\ref{#1}}
\def\ceil#1{\lceil #1 \rceil}
\def\1{\bm{1}}
\def\0{\bm{0}}
\def\vone{{\bm{1}}}
\def\vs{{\bm{s}}}
\def\vv{{\bm{v}}}
\def\vw{{\bm{w}}}
\def\vx{{\bm{x}}}
\def\vy{{\bm{y}}}
\DeclareMathAlphabet{\mathsfit}{\encodingdefault}{\sfdefault}{m}{sl}
\SetMathAlphabet{\mathsfit}{bold}{\encodingdefault}{\sfdefault}{bx}{n}
\def\gD{{\mathcal{D}}}
\def\gU{{\mathcal{U}}}
\def\sN{{\mathbb{N}}}
\newcommand{\E}{\mathop{\mathbb{E}}}
\begin{document}

\newtheorem{theorem}{Theorem}
\newtheorem{definition}{Definition}
\newtheorem{corollary}{Corollary}
\newtheorem{lemma}{Lemma}
\newtheorem{theorempart}{Part}
\twocolumn[

\aistatstitle{Provable Adversarial Robustness for Fractional $\ell_p$ Threat Models}

\aistatsauthor{Alexander Levine \And Soheil Feizi }

\aistatsaddress{ University of Maryland  \And University of Maryland } ]

\begin{abstract}
In recent years, researchers have extensively studied adversarial robustness in a variety of threat models, including $\ell_0$, $\ell_1$, $\ell_2$, and $\ell_\infty$-norm bounded adversarial attacks. However, attacks bounded by fractional $\ell_p$ ``norms" (quasi-norms defined by the $\ell_p$ distance with $0<p<1$) have yet to be thoroughly considered. We proactively propose a defense with several desirable properties: it provides provable (certified) robustness, scales to ImageNet, and yields deterministic (rather than high-probability) certified guarantees when applied to quantized data (e.g., images). Our technique for fractional $\ell_p$ robustness constructs expressive, deep classifiers that are globally Lipschitz with respect to the $\ell_p^p$ metric, for any $0<p<1$. However, our method is even more general: we can construct classifiers which are globally Lipschitz with respect to any metric defined as the sum of concave functions of components. Our approach builds on a recent work, Levine and Feizi (2021), which provides a provable defense against $\ell_1$ attacks. However, we demonstrate that our proposed guarantees are highly non-vacuous, compared to the trivial solution of using (Levine and Feizi, 2021) directly and applying norm inequalities. Code is available at \url{https://github.com/alevine0/fractionalLpRobustness}.
\end{abstract}

\section{Introduction}
Adversarial attacks \citep{szegedy2013intriguing,goodfellow2014explaining,Athalye2018ObfuscatedGG, carlini2017towards, tramer2017ensemble} represent a significant security vulnerability in deep learning. In these attacks, small (often imperceptible) perturbations of the input to a machine-learning system (such as a classifier) are made which change the behavior of the system in an undesirable way. Concretely, for example, a small change to an image belonging to one class (e.g., an image of a cat) can be crafted in order to cause a classifier to misclassify the image as belonging to a different class (e.g., the class `dogs').

One line of work to ameliorate this threat has been to propose certifiably (provably) robust classifiers, where each classification is paired with a certificate, specifying a radius in input space (with respect to some distance function) around the input in which the classification is guaranteed to be constant.  Of these certification techniques, in general, \textit{randomized smoothing} approaches (\cite{pmlr-v97-cohen19c}, among others) have shown to be uniquely promising for large-scale tasks in the scale of ImageNet. However, these techniques also have drawbacks: they provide only probabilistic, rather than deterministic, certificate results, and rely on Monte-Carlo sampling at test time, requiring a large number of evaluations of the ``base classifier'' neural network. 

Recently, \cite{Levine2021ImprovedDS} proposed a randomized smoothing-inspired technique for certifying robustness against the $\ell_1$ threat model, which provides deterministic certificates at ImageNet scale. That work demonstrates that the averaged output of a bounded function (i.e., a classifier logit) over a specially-designed \textit{finite, tractable} set of noise samples must be Lipschitz with respect to the $\ell_1$-norm. Because the averaged logits are Lipschitz, a robustness certificate can be computed simply by dividing the difference between the top logit and the runner-up logit by the Lipschitz constant, and dividing by 2 (this gives the minimum radius required for the runner-up logit to overtake the top logit, and hence to change the classification.)

In this work, we extend the results of \cite{Levine2021ImprovedDS} to cover $\ell_p$ ``norms'' for $p<1$. More precisely, we develop a deterministic smoothing method that guarantees Lipschitzness with respect to the $\ell^p_p$ metric for $p\in (0,1)$, defined as: 
\begin{equation}
    \ell^p_p(\vx,\vy) = \sum_{i=1}^d  |x_i -y_i|^p
\end{equation}
This immediately provides $\ell^p_p$-metric certificates, which can be converted to $\ell_p$ certificates by simply raising the radius to the power of $1/p$. Our technique is in fact more general than this, and can be applied to ensure the Lipschitz continuity of a function to a larger family of ``elementwise-concave metrics'' defined as the sum of concave functions of coordinate differences.

While not as frequently encountered as other $\ell_p$ norms, $\ell_p$ ``norms'' for $p\in (0,1)$ (which are in fact quasi-norms, because they violate the triangle inequality) are used in several machine-learning applications (see Related Works, below).  While $\ell_p$, $ p \in (0,1)$ adversarial attacks have yet to  emerge in practice, \cite{wang2021hybrid} have recently proposed an algorithm for $\ell_p$-constrained optimization with $p<1$: the authors mention that this could be used to generate adversarial examples. This suggests that developing defenses to such attacks is a valuable exercise. Fractional $\ell_p$ threat models can also be thought of as ``soft'' versions of the  widely-considered $\ell_0$ threat model, allowing the attacker, in addition to entirely changing some pixels, to slightly impact additional pixels at a ``discount'', without paying the full price in perturbation budget for modifying them. This may be relevant, for example, in physical $\ell_0$ attacks. Furthermore, readers may find other uses for ensuring that a trained function is $\ell^p_p$-Lipschitz for $p<1$.

Our technique inherits some of the limitations of \cite{Levine2021ImprovedDS}: notably that the deterministic variant applies exclusively to bounded, quantized input domains: that is, inputs where the value in each dimension only assumes values in $[0,1]$ which are multiples of $1/q$, for some quantization parameter $q$. However, this applies to many domains of practical interest in machine learning, such as image classification, which typically uses $q=255.$  Because the image domain is perhaps the most widely-studied domain of adversarial robustness, this restriction does not pose a significant limitation in practice. (Even in their randomized variants, both \cite{Levine2021ImprovedDS} and this work assume bounded input domains: that is, inputs $\vx \in [0,1]^d$.)

In Appendix \ref{sec:l0}, we also consider the $p=0$ limit of our algorithm. In that case, we show that our method simplifies to essentially a deterministic variant of the ``randomized ablation'' $\ell_0$ smoothing defense proposed by \cite{Levine2020RobustnessCF}. In fact, this deterministic variant was already implicitly discussed in \cite{levine2020deep}, where a specialized form of it was used to provably defend against poisoning attacks. Here, we apply it to evasion attacks directly. While this simplified $\ell_0$ defense somewhat under-performs the randomized variant, it provides deterministic certificate results at greatly reduced runtime.

In summary, in this work, we propose a novel, deterministic method for ensuring  that a trained function on bounded, quantized inputs is Lipschitz with respect to any $\ell^p_p$ metric for $p \in (0,1)$. This has immediate applications to provable adversarial robustness: we use our method to generate robustness certificates for fractional $\ell_p$ quasi-norms on CIFAR-10 and ImageNet.

\section{Related Works}

 Many prior works have proposed techniques for certifiable robust classification, under various $\ell_p$ norms, This includes many techniques that provide deterministic certification results, for small-scale image classification tasks. \citep{wong2018provable,gowal2018effectiveness,Raghunathan2018,tjeng2018evaluating,NEURIPS2018_d04863f1,Li2019PreventingGA,pmlr-v97-anil19a, Jordan2019ProvableCF, pmlr-v139-singla21a,singla2020second, trockman2021orthogonalizing}.
As mentioned in the introduction, randomized smoothing approaches \citep{salman2019provably,pmlr-v97-cohen19c,lecuyer2019certified, li2019certified,lee2019tight, pmlr-v119-yang20c,zhai2019macer,jeong2020consistency} are the only certification approaches that are practical at ImageNet scale. However, in general, these techniques do not produce deterministic certificates: for $\ell_p$ norms, the only ImageNet-scale deterministic certification result is \cite{Levine2021ImprovedDS}.

Some known applications of $\ell_p$, $(p<1)$ ``norms'' in machine learning include clustering \citep{10.1007/3-540-44503-X_27, 10.1145/997817.997857}, dimensionality reduction \citep{9287335}, and image retrieval \citep{10.1007/978-3-540-31865-1_32}.

\section{Notation and Preliminaries}
We first specify some notation. Let $\gU(a,b)$ represent the uniform distribution on the range $[a,b]$, and let $\text{Beta}(\alpha,\beta)$ represent the beta distribution with parameters $\alpha,\beta$.
Let $\lfloor\cdot\rfloor$ and $\lceil\cdot\rceil$ represent the floor and ceiling functions. Let $[d]$ be the set ${1,...,d}$. Let $\1_{\text{(condition)}}$ be the indicator function. Following \cite{Levine2021ImprovedDS}, we use `$ a (\text{mod } b)$' with real-valued $a,b$ to indicate $a -b\lfloor a/b\rfloor$.

Next, we define the general set of metrics our technique applies to, of which $\ell_p^p$ metrics are an example.
\begin{definition}[Elementwise-concave metric (ECM)]
For any $\vx,\vy$, let $\delta_i := |x_i-y_i|$. An elementwise-concave metric (ECM) is a metric on $[0,1]^d$ in the form:
\begin{equation}
    d(\vx,\vy) := \sum_{i=1}^d g_i(\delta_i),
\end{equation}
where ${g_1, ...,g_d} \in [0,1] \to [0,1]$ are increasing, concave functions with $g_i(0) = 0$.
\end{definition}
Note that the $\ell^p_p$ metrics with $p \leq 1$ are ECM's, with $\forall i, \,\,\, g_i(z) = z^p $. Note also that any distance function meeting the definition of an ECM is in fact a metric, unless some $g_i$ is the zero function.

We also introduce the main theorem from \cite{Levine2021ImprovedDS}, which our work extends upon.
\begin{theorem}[\cite{Levine2021ImprovedDS}]  \label{thm:old}
For any $f: [0,1]^d \times [0,1]^d   \rightarrow [0,1]$, and $\Lambda> 0$, let $\vs \in [0,\Lambda]^d$ be a random variable,  with a fixed distribution such that:
\begin{equation}
      s_i \sim \gU(0,\Lambda), \,\,\,\, \forall i. \label{eq:oldssamplecont}
\end{equation}
Note that the components $s_1, ..., s_d$ are \textbf{not} required to be  distributed independently from each other. Then, define:
\begin{align}
x^\text{upper}_i &:= \min(\Lambda \ceil{\frac{x_i - s_i}{\Lambda}}+s_i, 1)\
,\,\,\,\, \forall i \\
x^\text{lower}_i &:= \max(\Lambda \ceil{\frac{x_i - s_i}{\Lambda}} +s_i- \Lambda, 0)\
,\,\,\,\, \forall i \\
p(\vx) &:=\mathop{\E}_{\vs}\left[ f(\vx^\text{lower}, \vx^\text{upper})\right]. \label{eq:oldp}
\end{align}
Then, $p(.)$ is $1/\Lambda$-Lipschitz with respect to the $\ell_1$ norm.
\end{theorem}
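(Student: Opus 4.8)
The plan is to reduce the $\ell_1$-Lipschitz claim to a single-coordinate estimate, and then to exploit the fact that for a frozen noise vector $\vs$ the integrand is a $[0,1]$-valued step function whose breakpoints lie on a grid of spacing $\Lambda$.

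First I would observe that it suffices to prove the one-coordinate bound: whenever $\vx,\vx'\in[0,1]^d$ agree in every coordinate except the $i$-th, $|p(\vx)-p(\vx')|\le\frac{1}{\Lambda}|x_i-x_i'|$. Granting this, for arbitrary $\vx,\vy\in[0,1]^d$ one interpolates from $\vx$ to $\vy$ by changing one coordinate at a time through intermediate points that stay in $[0,1]^d$, and the triangle inequality gives $|p(\vx)-p(\vy)|\le\frac{1}{\Lambda}\sum_{i=1}^d|x_i-y_i|=\frac{1}{\Lambda}\|\vx-\vy\|_1$.

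For the one-coordinate bound, fix $i$ and $\vx,\vx'$ differing only in coordinate $i$, say $x_i<x_i'$. For each realization of $\vs$, let $F_\vs(t)$ denote the value of $f(\vx^\text{lower},\vx^\text{upper})$ when $x_i$ is replaced by $t$ and all other coordinates are held fixed; since $\vx^\text{lower}$ and $\vx^\text{upper}$ are built coordinatewise, $p(\vx)-p(\vx')=\E_\vs[F_\vs(x_i)-F_\vs(x_i')]$, and $F_\vs$ takes values in $[0,1]$. The key structural fact is that $F_\vs$ is piecewise constant with possible discontinuities only at the grid $\{s_i+k\Lambda:k\in\mathbb{Z}\}$: the map $t\mapsto\Lambda\lceil(t-s_i)/\Lambda\rceil+s_i$ is constant on each interval $(s_i+(k-1)\Lambda,\,s_i+k\Lambda]$, hence so are $x_i^\text{lower}(t)=\max(\cdot-\Lambda,0)$ and $x_i^\text{upper}(t)=\min(\cdot,1)$, and therefore so is $F_\vs$, since applying any further functions to quantities that are constant on a piece leaves them constant on that piece. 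Consequently $|F_\vs(x_i)-F_\vs(x_i')|\le\1\{\,s_i+k\Lambda\in[x_i,x_i')\text{ for some }k\in\mathbb{Z}\,\}$: if $x_i$ and $x_i'$ fall in the same interval of the grid the two values are equal, and otherwise the difference of two numbers in $[0,1]$ is at most $1$.

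Finally I would integrate. The indicator depends on $\vs$ only through $s_i$, so only the marginal law of $s_i$ enters the computation — this is exactly why the coordinates of $\vs$ need not be independent. Because $s_i\sim\gU(0,\Lambda)$ and the grid is $\Lambda$-periodic, the probability that some grid point lands in the length-$(x_i'-x_i)$ interval $[x_i,x_i')$ equals $\min\!\big(1,(x_i'-x_i)/\Lambda\big)\le(x_i'-x_i)/\Lambda$, so $|p(\vx)-p(\vx')|\le\E_\vs|F_\vs(x_i)-F_\vs(x_i')|\le|x_i-x_i'|/\Lambda$, which is the desired one-coordinate bound. I expect the only real work to be the careful verification of the step-function claim — checking that the clippings to $[0,1]$ introduce no new breakpoints and that the half-open-interval conventions are consistent, so that ``no grid point strictly between $x_i$ and $x_i'$'' genuinely forces $F_\vs(x_i)=F_\vs(x_i')$; the interpolation reduction and the probability computation are then routine. (One also sees that the width-$1$ range of $f$ is what produces the clean constant $1/\Lambda$: a range of width $w$ would give $w/\Lambda$.)
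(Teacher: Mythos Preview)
Your proposal is correct and follows essentially the same approach as the paper's proof sketch: both identify that $(\vx^\text{lower},\vx^\text{upper})$ is constant on bins delimited by the grid $\{s_i+k\Lambda\}$, and both use that the probability a grid point separates $x_i$ from $y_i$ is $\min(\delta_i/\Lambda,1)$, together with the $[0,1]$ range of $f$, to obtain the Lipschitz bound. The only stylistic difference is that the paper handles all coordinates simultaneously via a union bound over the events ``$x_i$ and $y_i$ land in different bins,'' whereas you reduce to a single coordinate by interpolation and then sum via the triangle inequality; these are equivalent bookkeeping devices for the same argument.
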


We provide a visual explanation of this theorem in Figure \ref{fig:oldexplained}. The basic intuition is that the  $[0,1]$ domain of each dimension is divided into ``bins'',  with dividers at each value $s_i+n\Lambda, \forall n\in \sN$. Then, $x^\text{lower}_i$ and $x^\text{upper}_i$ are the lower- and upper-limits of the bin which $x_i$ is assigned to. For two points $\vx$ and $\vy$,  let $\delta_i := |x_i-y_i|$: the probability of a divider separating $x_i$ and $y_i$ is $\min(\delta_i/\Lambda, 1)$. this means that  the probability that $(\vy^\text{lower}, \vy^\text{upper})$  differs from $(\vx^\text{lower}, \vx^\text{upper})$ is at most  $\|x-y\|_1/\Lambda$. The Lipschitz property follows from this.

 Note that we have modified the notation from the original statement of the theorem: in particular, we use $\Lambda$ instead of $2\lambda$. Additionally, we pass both $\vx^\text{lower}$ and  $\vx^\text{upper}$ to the base classifier $f$, even though these are redundant when $\Lambda$ is fixed: this is because we are about to break this assumption. (We include a proof sketch in the modified notation in Appendix \ref{sec:proofsketch}.) Note also that this is a \textit{randomized} algorithm; we will discuss the derandomization in Section \ref{sec:derand}, where we present the derandomization of our proposed method.
\begin{figure}[h]
    \centering
    \includegraphics[width=3.25in]{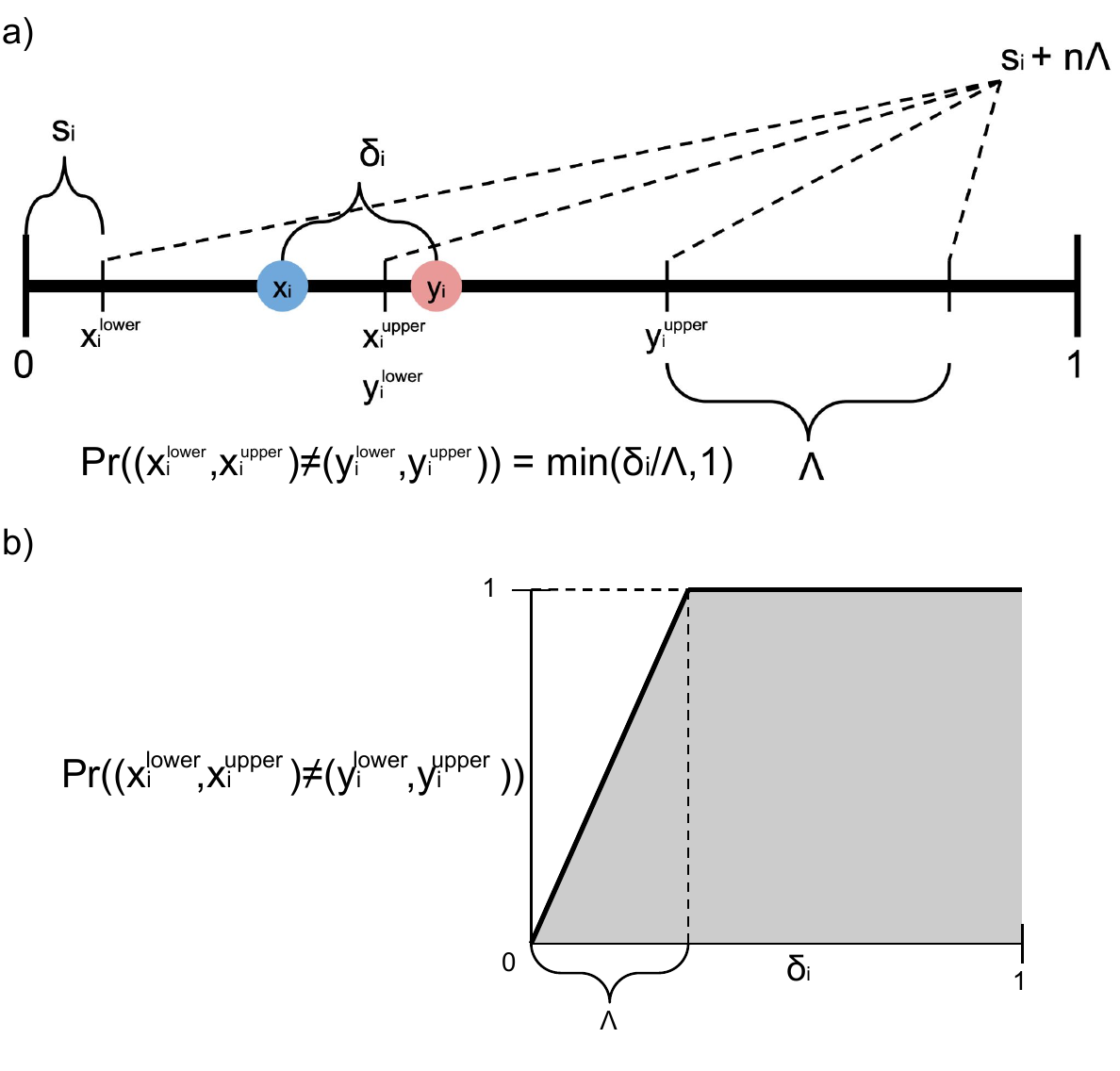}
    \caption{A visual explanation of Theorem \ref{thm:old} from \cite{Levine2021ImprovedDS}. (a) Whether $x_i$ and $y_i$ belong to the same bin depends on the value of the bin-divider offset $s_i$. However, because this is uniformly distributed, the probability that they are mapped to different bins is simply $\delta_i/\Lambda$, if $\delta_i < \Lambda$, and 1 otherwise. (b) Graph of the probability that $x_i$ and $y_i$ are assigned to different bins, as a function of their difference $\delta_i$.}
    \label{fig:oldexplained}
\end{figure}

\section{Proposed Method} \label{sec:main}
In this paper, we modify the algorithm described in Theorem \ref{thm:old} by allowing $\Lambda$ itself to vary randomly in each dimension, according to a fixed distribution $\gD_i$:
\begin{equation} \label{eq:lambstakeone}
    \begin{split}
           \Lambda_i &\sim \gD_i\\
    s_i &\sim \gU(0,\Lambda_i)  
    \end{split}
\end{equation}
The reason for doing this is that, by mixing the smoothing distributions for various $\Lambda$ in each dimension, the probability that $x_i$ and $y_i$ are assigned to different ``bins'' assumes a concave relationship to their difference $\delta_i$, as illustrated in Figure \ref{fig:mainexplanation}. In fact, by doing this, we are able to make the probability of splitting  $x_i$ and $y_i$ to be any arbitrary smooth concave increasing function of $\delta_i$, allowing us to enforce Lipschitzness with respect to arbitrary ECMs, as shown in the upcoming Theorem \ref{thm:vls}.

\begin{figure}
    \centering
    \includegraphics[width=3.25in]{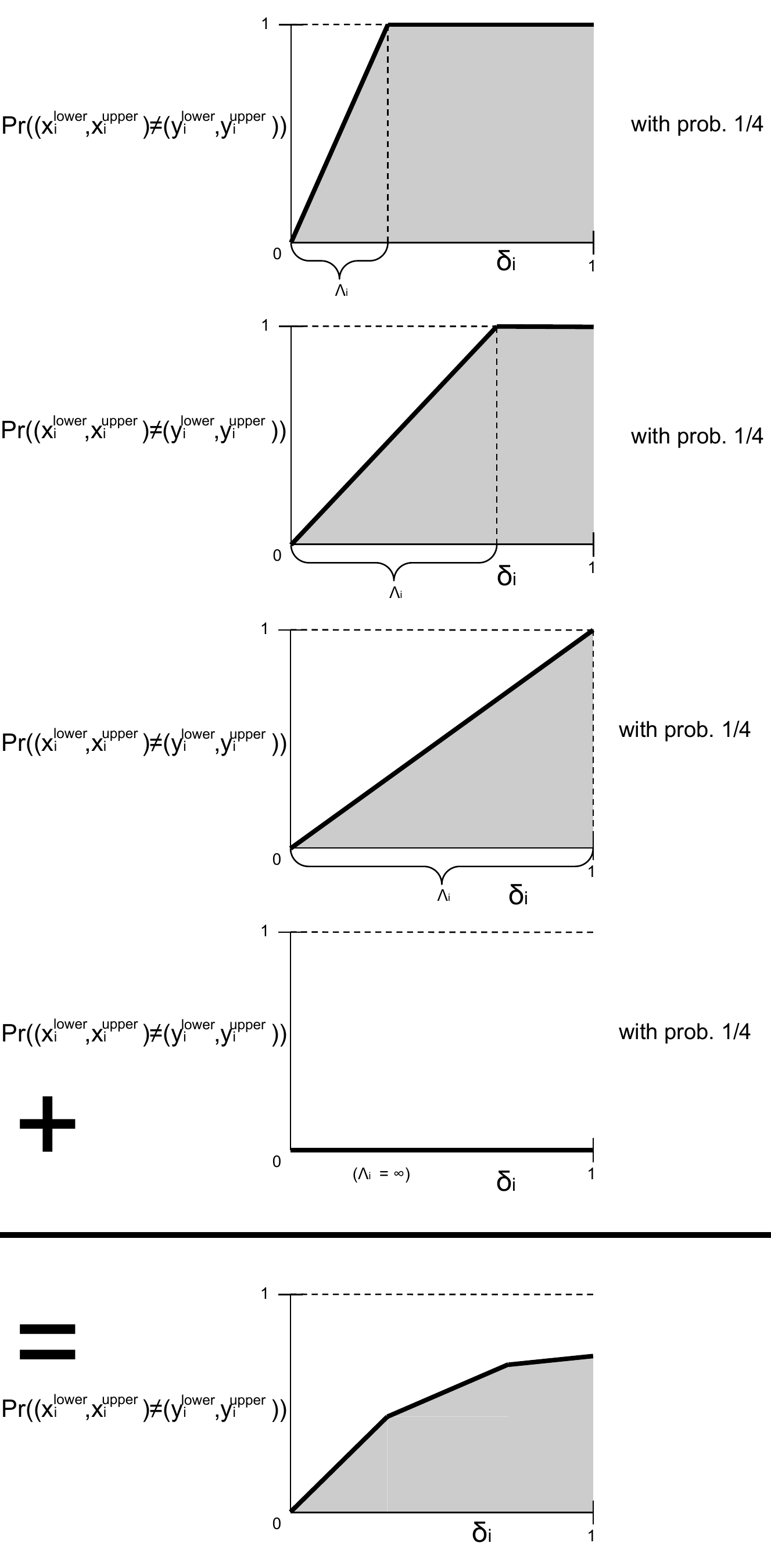}
    \caption{A mixture of various values of $\Lambda$ creates a concave relationship between the probability that $x_i$ and $y_i$ are distinguishable to the base classifier and the difference $\delta_i$ between their values. This is because the slope of each of the curves in the mixture goes to zero at $\delta_i = \Lambda_i$.}
    \label{fig:mainexplanation}
\end{figure}

Note that, if we allow the support of $\gD_i$ to be $(0,\infty)$, there is some redundancy in the noise model specified by Equation \ref{eq:lambstakeone}: in particular, whenever $\Lambda_i > 1$, there are at most two bins, with the single divider $s_i$ uniformly on the range $[0,1]$ with probability $1/\Lambda_i$ and otherwise with the entire range $[0,1]$ falling into one bin. 
For simplicity, therefore, we can allow the support of  $\gD_i$ to be $(0,1] \cup \{\infty\}$, where $ \Lambda_i = \infty $ signifies to consider the entire domain as one bin. Formally, our noise process is defined as  follows:

\begin{definition}[Variable-$\Lambda$ smoothing] \label{def:vls}
For any $f: [0,1]^d \times [0,1]^d   \rightarrow [0,1]$, and distribution $\gD=\{\gD_1,...\gD_d\}$, such that each $\gD_i$ has support $(0,1] \cup \{\infty\}$, let:
\begin{equation}
    \Lambda_i \sim \gD_i\\
\end{equation}
If $\Lambda_i = \infty$, then $x^\text{upper}_i := 1$,  $x^\text{lower}_i := 0$, otherwise:
\begin{align}
 s_i &\sim \gU(0,\Lambda_i)\\ \label{eq:sivl}
x^\text{upper}_i &:= \min(\Lambda_i \ceil{\frac{x_i - s_i}{\Lambda_i}}+s_i, 1) \\
x^\text{lower}_i &:= \max(\Lambda_i \ceil{\frac{x_i - s_i}{\Lambda_i}} +s_i- \Lambda_i, 0)\\
\end{align}
The smoothed function is defined as:
\begin{equation}
    p_{\gD,f}(\vx) :=\mathop{\E}_{\vs}\left[ f(\vx^\text{lower}, \vx^\text{upper})\right]. \label{eq:olddefp}
\end{equation}
Note that we make no assumptions about the joint distributions of $\Lambda$ or of $\vs$.
\end{definition}
We can now present our main theorem, describing how to ensure Lipschitzness with respect to an ECM:
\begin{theorem} \label{thm:vls}
Let $d(\cdot,\cdot)$ be an ECM defined by concave functions $g_1, ...,g_d$. Let $\gD$ and $f(\cdot)$ be the $\Lambda$-distribution and base function used for Variable-$\Lambda$ smoothing, respectively. Let $\vx,\vy \in [0,1]^d$ be two points. For each dimension $i$, let $\delta_i := |x_i-y_i| $. Then:

\begin{enumerate}[label=(\alph*)]
    \item The probability that $(x_i^\text{lower}, x_i^\text{upper}) \neq (y_i^\text{lower}, y_i^\text{upper})$ is given by $\Pr^{\text{split}}_i(\delta_i)$, where:
    \begin{equation}
        \Pr\nolimits^{\text{split}}_i(z) := \Pr_{\gD_i}(\Lambda_i \leq z) + z \E_{\gD_i}\left[ \frac{\1_{( \Lambda_i \in (z,1])}}{\Lambda_i} \right]
    \end{equation}
    \item If $\forall i\in [d]$ and $\forall z \in [0,1]$, 
    \begin{equation}
        \Pr\nolimits^{\text{split}}_i(z) \leq g_i(z),
    \end{equation}
    then, the smoothed function $ p_{\gD,f}(\cdot) $ is 1-Lipschitz with respect to the metric $d(\cdot, \cdot)$.
    \item Suppose $g_i$ is continuous and twice-differentiable on the interval $(0,1]$. Let $\gD_i$ be constructed as follows:
    \begin{itemize}
        \item On the interval $(0,1)$, $\Lambda_i$ is distributed continuously, with pdf function:
        \begin{equation}
            \text{pdf}_{\Lambda_i}(z) = -z g_i''(z)
        \end{equation}
        \item $\Pr(\Lambda_i = 1) = g_i'(1)$
        \item $\Pr(\Lambda_i = \infty) = 1- g_i(1)$
    \end{itemize}
    then, \begin{equation}
         \Pr\nolimits_i^{\text{split}}(z) = g_i(z)\,\,\,\,\,\forall z \in [0,1].
    \end{equation}
    If all $\gD_i$ are constructed this way, then the conclusion of part (b) above applies.
\end{enumerate}
\end{theorem}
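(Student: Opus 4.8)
The plan is to establish the three parts in sequence: part~(a) is a per-coordinate probability computation, part~(b) assembles the coordinates via a coupling argument and a union bound, and part~(c) is a direct verification (two integrations by parts) that the prescribed $\gD_i$ turns the inequality of part~(b) into an equality.

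For part~(a), I would condition on the value $\Lambda_i \sim \gD_i$. When $\Lambda_i = \infty$, $(x_i^\text{lower},x_i^\text{upper}) = (0,1) = (y_i^\text{lower},y_i^\text{upper})$ with certainty, contributing nothing. When $\Lambda_i \in (0,1]$, observe that $(x_i^\text{lower},x_i^\text{upper})$ is a function only of the bin index $\lceil (x_i-s_i)/\Lambda_i\rceil$, and that two distinct bin indices can never produce the same clipped pair: if $x_i$ lies in bin $k$ and $y_i$ in bin $k'>k$, then $x_i^\text{upper}=y_i^\text{upper}$ forces $\Lambda_i k + s_i \ge 1$ while $x_i^\text{lower}=y_i^\text{lower}$ forces $\Lambda_i(k'-1)+s_i\le 0$, and these contradict each other since $k\le k'-1$ and $\Lambda_i>0$. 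Hence, off a measure-zero set, the pair differs iff $x_i$ and $y_i$ fall in different bins, i.e.\ iff some divider $s_i+n\Lambda_i$ lies strictly between them; since $s_i\sim\gU(0,\Lambda_i)$ this has probability $1$ if $\delta_i\ge\Lambda_i$ (a length-$\delta_i$ interval then spans at least one full period) and $\delta_i/\Lambda_i$ if $\delta_i<\Lambda_i$. Averaging these cases over $\Lambda_i\sim\gD_i$ yields exactly $\Pr\nolimits^{\text{split}}_i(\delta_i)$. The step I expect to be the main obstacle is precisely this bookkeeping — characterizing when the clipped pair changes and ruling out that the truncations $\min(\cdot,1)$, $\max(\cdot,0)$ ever merge genuinely different bins, plus confirming that boundary coincidences (a point landing exactly on a divider) occur only on a null set.

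For part~(b), I would couple the randomness: since $p_{\gD,f}(\vx)$ and $p_{\gD,f}(\vy)$ are expectations under the \emph{same} law of $(\Lambda,\vs)$, I can write $p_{\gD,f}(\vx)-p_{\gD,f}(\vy)=\E[f(\vx^\text{lower},\vx^\text{upper})-f(\vy^\text{lower},\vy^\text{upper})]$ using one shared draw. Because $f$ maps into $[0,1]$, the integrand is bounded by $1$ in absolute value and vanishes whenever $(\vx^\text{lower},\vx^\text{upper})=(\vy^\text{lower},\vy^\text{upper})$, so $|p_{\gD,f}(\vx)-p_{\gD,f}(\vy)|\le\Pr[(\vx^\text{lower},\vx^\text{upper})\neq(\vy^\text{lower},\vy^\text{upper})]$. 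A union bound over the $d$ coordinates together with part~(a) bounds this by $\sum_i\Pr\nolimits^{\text{split}}_i(\delta_i)$, and the hypothesis $\Pr\nolimits^{\text{split}}_i(z)\le g_i(z)$ evaluated at $z=\delta_i\in[0,1]$ gives $\sum_i g_i(\delta_i)=d(\vx,\vy)$. Note this uses no independence across coordinates, consistent with Definition~\ref{def:vls}.

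For part~(c), I would first confirm $\gD_i$ is a valid distribution: $-zg_i''(z)\ge 0$ on $(0,1)$ by concavity, $\Pr(\Lambda_i=1)=g_i'(1)\ge 0$ by monotonicity, $\Pr(\Lambda_i=\infty)=1-g_i(1)\ge 0$ since $g_i$ maps into $[0,1]$, and $\int_0^1 -tg_i''(t)\,dt = g_i(1)-g_i'(1)$ by parts (using $g_i(0)=0$), so the masses sum to $1$. Substituting into the formula of part~(a): for $z\in(0,1)$, integration by parts gives $\Pr_{\gD_i}(\Lambda_i\le z)=g_i(z)-zg_i'(z)$, and $z\,\E_{\gD_i}[\1_{(\Lambda_i\in(z,1])}/\Lambda_i]=z\big(\int_z^1 -g_i''(t)\,dt + g_i'(1)\big)=z g_i'(z)$; these sum to $g_i(z)$, while $z=0$ and $z=1$ are checked directly. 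Hence $\Pr\nolimits^{\text{split}}_i\equiv g_i$ on $[0,1]$, so the hypothesis of part~(b) holds and $p_{\gD,f}$ is $1$-Lipschitz with respect to $d$. Parts~(b) and~(c) are routine once part~(a) is in place.
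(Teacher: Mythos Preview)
Your proposal is correct and follows essentially the same route as the paper's proof: conditioning on $\Lambda_i$ and reducing to the bin-separation probability $\min(\delta_i/\Lambda_i,1)$ for part~(a), the coupling plus union-bound argument for part~(b), and two integrations by parts for part~(c). Your argument that distinct bin indices cannot yield the same clipped pair is a slightly cleaner phrasing of the paper's observation that both box constraints cannot be active at once, but the substance is the same.
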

Here, $\Pr\nolimits^{\text{split}}_i(\delta_i)$ represents the probability that the base classifier is given the information necessary to distinguish $x_i$ from $y_i$; in order to ensure that the base classifier receives as much information as possible, we would like to design $\gD_i$ to make
$\Pr\nolimits^{\text{split}}_i(z)$ as large as possible, for all $z\in [0,1]$. However, if we want our smoothed classifier to have the desired Lipschitz property, $\Pr\nolimits^{\text{split}}_i(z)$ can be no larger than $g_i(z)$, as stated in part (b) of the theorem.  Part (c) of the theorem shows how to design $\gD_i$ such that $\Pr\nolimits^{\text{split}}_i(z)$ takes exactly its maximum allowed value, $g_i(z)$, everywhere.

We can apply part (c) of Theorem \ref{thm:vls} to derive smoothing distributions for Lipschitzness on fractional-$p$ $\ell^p_p$ metrics, simply by taking $g_\cdot(z) = \frac{z^p}{\alpha}$:
\begin{corollary} \label{cor:betadis}
For all $p \in (0,1]$, $\alpha \in [1,\infty)$, if we perform Variable-$\Lambda$ smoothing with all $\Lambda_i$'s distributed identically (but not necessarily independently) as follows:
        \begin{equation}
            \begin{split}
            \Lambda_i \sim& \text{Beta}(p,1) , \text{   with prob.  } \frac{1-p}{\alpha}\\
            \Lambda_i =& 1, \text{   with prob.  } \frac{p}{\alpha} \\
            \Lambda_i =& \infty, \text{   with prob.  } 1- \frac{1}{\alpha}\\
            \end{split}
        \end{equation}
then, the resulting smoothed function will be $1/\alpha$-Lipschitz with respect to the $\ell^p_p$ metric\footnote{If we desire \textit{weaker} Lipschitz guarantees, i.e., with $\alpha < 1$, the assumptions of Theorem \ref{thm:vls}-c no longer hold. We deal with this case in Appendix \ref{sec:alpha_lt_1}, but it is not particularly relevant for our application: note that, as long as the classifier's accuracy remains high, then certificates scale with $1/\alpha$, so larger $\alpha$ is generally desirable. In our experiments, we find that the classifier's accuracy remains high even for $\alpha$ much greater than 1.}.
\end{corollary}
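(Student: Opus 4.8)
The plan is to apply Theorem~\ref{thm:vls}-c directly, taking $g_i(z) := z^p/\alpha$ for every coordinate $i$, and then simply reading off the induced distribution $\gD_i$ and the induced metric. So the first step is to verify that this $g_i$ is a legitimate ECM component function meeting the hypotheses of Theorem~\ref{thm:vls}-c: since $p\in(0,1]$ and $\alpha\geq 1$ we have $g_i(0)=0$, $g_i(1)=1/\alpha\in(0,1]$, $g_i'(z)=pz^{p-1}/\alpha>0$ (so $g_i$ is increasing and maps $[0,1]$ into $[0,1]$), $g_i''(z)=p(p-1)z^{p-2}/\alpha\leq 0$ (so $g_i$ is concave), and $g_i$ is smooth, hence continuous and twice-differentiable, on $(0,1]$.

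The second step is to compute the distribution $\gD_i$ prescribed by Theorem~\ref{thm:vls}-c for this $g_i$. On $(0,1)$ the prescribed density is $\text{pdf}_{\Lambda_i}(z)=-zg_i''(z)=p(1-p)z^{p-1}/\alpha$; integrating over $(0,1)$ gives total mass $(1-p)/\alpha$, and conditioning on that event renormalizes the density to $pz^{p-1}$, which is exactly the density of $\text{Beta}(p,1)$. The prescribed atoms are $\Pr(\Lambda_i=1)=g_i'(1)=p/\alpha$ and $\Pr(\Lambda_i=\infty)=1-g_i(1)=1-1/\alpha$. These three masses sum to $1$ and all lie in $[0,1]$ precisely because $\alpha\geq1$ and $p\in(0,1]$, so $\gD_i$ is exactly the distribution stated in the corollary (identical across $i$, with no independence assumed, consistent with Definition~\ref{def:vls}).

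The final step is bookkeeping with the conclusion of Theorem~\ref{thm:vls}. Part (c) gives $\Pr\nolimits^{\text{split}}_i(z)=g_i(z)=z^p/\alpha$ for all $z\in[0,1]$, so part (b) applies and the smoothed function $p_{\gD,f}$ is $1$-Lipschitz with respect to the ECM $d(\vx,\vy)=\sum_{i=1}^d g_i(\delta_i)=\frac{1}{\alpha}\sum_{i=1}^d \delta_i^p=\frac{1}{\alpha}\,\ell^p_p(\vx,\vy)$; being $1$-Lipschitz with respect to $\frac{1}{\alpha}\ell^p_p$ is the same statement as being $\frac{1}{\alpha}$-Lipschitz with respect to $\ell^p_p$, which is the claim.

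I do not expect a genuine obstacle here: the entire argument is a substitution into Theorem~\ref{thm:vls}-c plus the observation that the renormalized continuous part of the prescribed $\gD_i$ is $\text{Beta}(p,1)$. The only subtlety worth flagging is the role of $\alpha\geq1$: this is exactly the condition that keeps $\Pr(\Lambda_i=\infty)=1-1/\alpha$ nonnegative (equivalently, keeps $g_i(1)\leq1$), and it is why the regime $\alpha<1$ falls outside Theorem~\ref{thm:vls}-c and must be treated separately in Appendix~\ref{sec:alpha_lt_1}, as noted in the footnote.
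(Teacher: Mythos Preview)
Your proof is correct and follows essentially the same approach as the paper: set $g_i(z)=z^p/\alpha$, verify it is a valid ECM component satisfying the hypotheses of Theorem~\ref{thm:vls}-c, compute the prescribed distribution (recognizing the continuous part as a scaled $\text{Beta}(p,1)$ density), and then convert the $1$-Lipschitz guarantee for $d(\cdot,\cdot)=\frac{1}{\alpha}\ell^p_p(\cdot,\cdot)$ into a $\frac{1}{\alpha}$-Lipschitz guarantee for $\ell^p_p$. Your explicit remark that $\alpha\geq 1$ is exactly what keeps $\Pr(\Lambda_i=\infty)\geq 0$ (equivalently $g_i(1)\leq 1$) is a nice clarification that the paper leaves implicit.
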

We use the fact that 1-Lipschitzness with respect to $d(\cdot,\cdot)/\alpha$ is equivalent to $1/\alpha$-Lipschitzness with respect to $d(\cdot,\cdot)$.
We can verify that taking $p=1$, $\alpha = \Lambda$ recovers Theorem \ref{thm:old} for $\Lambda \geq 1$.

\section{Quantization and Derandomization} \label{sec:derand}
While the previous section describes a \textit{randomized smoothing} scheme for guaranteeing $\ell^p_p$-Lipschitz behavior of a function, in this section, we would like to derandomize this algorithm to ensure an exact, rather than high-probability, guarantee. For the fixed-$\Lambda$ case, \cite{Levine2021ImprovedDS} derives such a derandomization in a two-step argument. First, a \textit{quantized} form of Theorem \ref{thm:old} is proposed. To explain this, we introduce some notation from \cite{Levine2021ImprovedDS}. Let $q$ be the number of quantizations (e.g., 255 for images). 
Let 
\begin{equation}
    [a,b]_{(q)} := \left\{i/q \,\,\big| \,\, \lceil aq \rceil \leq i \leq  \lfloor bq \rfloor \right\}.
\end{equation}

For example, $[0,1]_{(q)}$ represents the set $\{0, \frac{1}{q}, \frac{2}{q}, ..., \frac{q-1}{q}, 1\}$. Departing slightly from \cite{Levine2021ImprovedDS}, we define $\gU_{(q)}(a,b)$ as the uniform distribution on the set $[a,b- \frac{1}{q} ]_{(q)}   + \frac{1}{2q}$. (e.g.,  $\gU_{(q)}(0,1)$ is uniform on $\{\frac{1}{2q},\frac{3}{2q}, ... \frac{2q-1}{2q}\}$: these are the \textit{midpoints between} the quanizations in $[0,1]_{(q)}$).

\cite{Levine2021ImprovedDS} show that Theorem \ref{thm:old} applies essentially unchanged in the quantized case: in particular, if the domain of $p(\cdot)$ is restricted to $[0,1]_{(q)}$,  (and assuming that $\Lambda$ is a multiple of 1/q)  then the theorem still applies when Equation \ref{eq:oldssamplecont} is replaced with:
\begin{equation}
      s_i \sim \gU_{(q)}(0,\Lambda) , \,\,\,\, \forall i. \label{eq:oldssamplequant}
\end{equation}
When this quantized form is used, there are only a discrete number of outcomes  ($=\Lambda q$) for each $s_i$. 

Building on this, the second step in the argument is to leverage the fact that Theorem \ref{thm:old} makes no assumption on the joint distribution of $s_i$'s to \textit{couple} all of the elements of $\vs$. In particular, $s_i$'s are set to have fixed offsets from one another (mod $\Lambda$). In other words, the outcomes for each $s_i$ are \textit{cyclic permutations} of each other. This preserves the property that each $s_i$ is uniformly distributed, while also ensuring that there are now only $\Lambda q$ outcomes of the smoothing process \textit{in total}. Then expectation in Equation \ref{eq:olddefp} can be evaluated exactly and efficiently (See Figure 
\ref{fig:derand}-a.)

\begin{figure}[]
    \centering
    \includegraphics[width=3.25in]{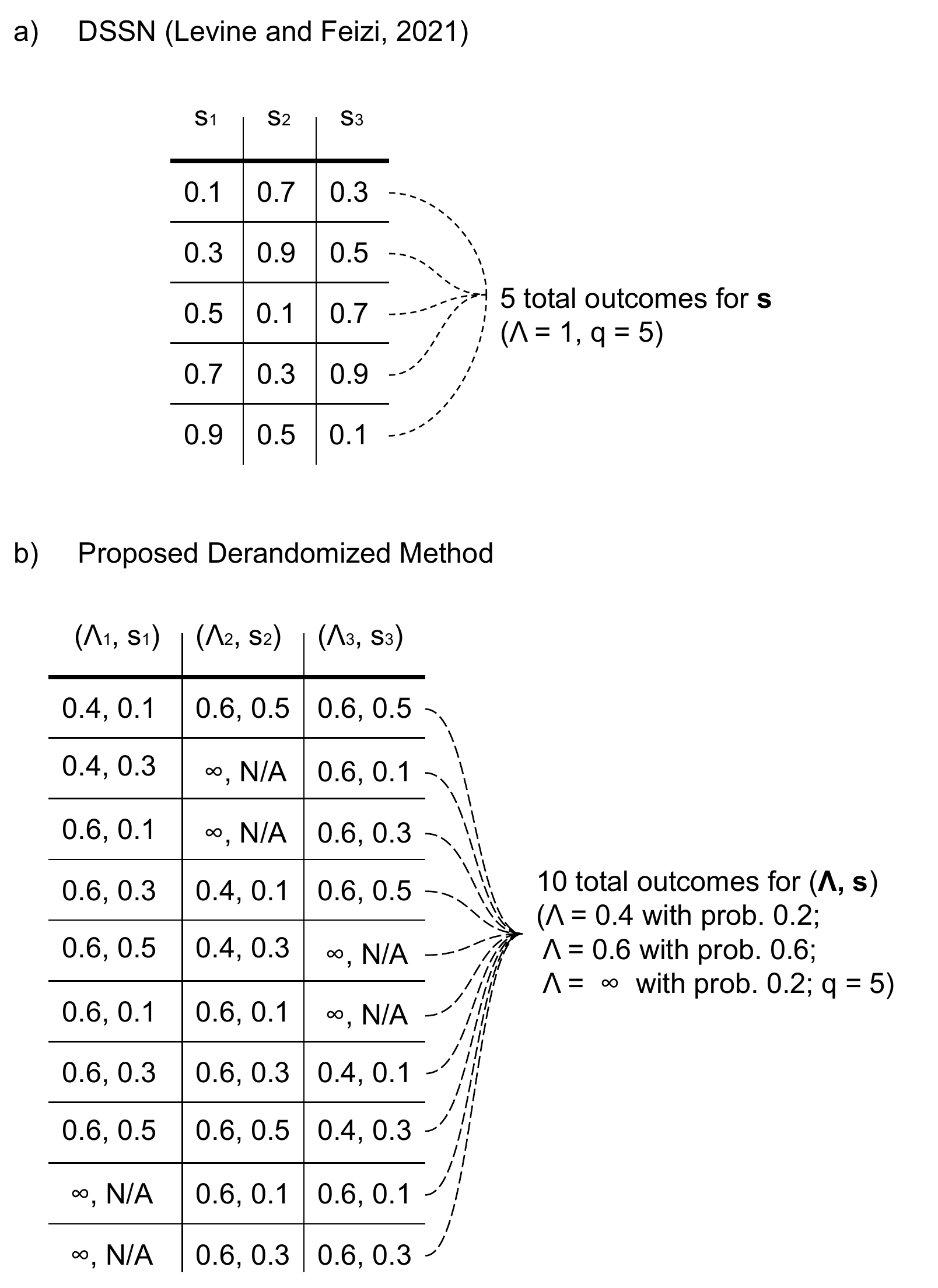}
    \caption{(a) ``Fixed offset'' method of sampling outcomes in \cite{Levine2021ImprovedDS}. In this case, the fixed offset is that $s_1 = s_2 - 0.6 = s_3 - 0.2\, (\text{mod }\Lambda) $ Note that in the sample of 5 outcomes, each $s_i$ is uniform on $\{0.1,0.3,0.5,0.7,0.9\}$, which is to say, $s_i \sim \gU_{(5)}(0,1)$, as desired. (b) Fixed-offset sampling applied to  variable-$\Lambda$ smoothing, for a given distribution of $\Lambda$. For each $(\Lambda_i,s_i)$, we list out each possible outcome, in some cases repeated in order to achieve the desired distribution over $\Lambda$. Outcomes are then cyclically permuted for each dimension $i$ to define the coupling.  As in \cite{Levine2021ImprovedDS}, the offsets for the cyclic permutations are arbitrary,  but fixed throughout training and testing. Specifically, the offsets are chosen pseudorandomly using a fixed seed. We use a seed of $0$ for experiments in the main text; other values are explored on CIFAR-10 in Appendix \ref{sec:seed}. Note that Theorem \ref{thm:quantizedzls} does not require cyclic permutations: choosing arbitrary permutations for each $s_i$ would work. However, storing such arbitrary permutations for each dimension would be highly memory-intensive. In Appendix \ref{sec:cyclic}, we show (at small-scale: CIFAR-10) that using arbitrary (pseudorandom) permutations as opposed to cyclic permutations confers no practical benefit. }
    \label{fig:derand}
\end{figure}
For our Variable-$\Lambda$ method, we use a similar strategy for derandomization: We quantize the smoothing process in a similar way, modifying Definition \ref{def:vls} by redefining the support of $\gD_i$ as $[\frac{1}{q}, 1]_{(q)}\cup \{\infty\}$ and replacing Equation \ref{eq:sivl} with:
\begin{equation}
          s_i \sim \gU_{(q)}(0,\Lambda_i).
\end{equation}
We also define a quanitized version of ECM's as a metric on $[0, 1]_{(q)}^d$ where the domain of each $g_i$ is restricted to  $[0, 1]_{(q)}$.
This yields a quantized version of Theorem \ref{thm:vls} which we spell out fully in Appendix \ref{sec:proof_derand}. The most significant difference occurs in part (c) where we use quantized forms of derivatives:
\begin{theorem}[c] \label{thm:quantizedzls}
    If $\gD_i$ is constructed as follows:
    \begin{itemize}
        \item On the interval $[\frac{1}{q},\frac{q-1}{q}]_{(q)}$, $\Lambda_i$ is distributed as:
        \begin{equation} 
        \begin{split}\label{eq:quantizedderiv}
               \Pr(&\Lambda_i = z) = -qz \Big[g_i(z-\frac{1}{q})\\
               +& g_i(z+ \frac{1}{q}) - 2g_i(z) \Big]  \,\,\,\, \forall z \in [\frac{1}{q},\frac{q-1}{q} ]_{(q)}        
        \end{split}
        \end{equation} 
        \item $\Pr(\Lambda_i = 1) = q \left[g_i(1) - g_i(\frac{q-1}{q})\right]$
        \item $\Pr(\Lambda_i = \infty) = 1- g_i(1)$
    \end{itemize}
    then \begin{equation}
         \Pr\nolimits_i^{\text{split}}(z) = g_i(z),\,\,\,\,\,\forall z \in [0,1].
    \end{equation}
\end{theorem}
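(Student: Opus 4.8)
The plan is to reduce Theorem~\ref{thm:quantizedzls} to the quantized analogue of Theorem~\ref{thm:vls}(a) (spelled out in Appendix~\ref{sec:proof_derand}), namely the identity $\Pr\nolimits^{\text{split}}_i(z) = \Pr_{\gD_i}(\Lambda_i \le z) + z\,\E_{\gD_i}\big[\1_{(\Lambda_i \in (z,1])}/\Lambda_i\big]$ read at grid points $z\in[0,1]_{(q)}$, and then to verify by a direct summation that the prescribed $\gD_i$ makes the right-hand side equal $g_i(z)$. The quantized version of part~(a) follows from essentially the same counting as the continuous case: write $x_i=a/q$, $y_i=b/q$ with $a<b$, so $\delta_i=(b-a)/q$; since $s_i\sim\gU_{(q)}(0,\Lambda_i)$ is a grid-midpoint and $\Lambda_i$ is a multiple of $1/q$, no divider $s_i+n\Lambda_i$ ever coincides with $x_i$ or $y_i$, and $x_i,y_i$ land in different bins exactly when some divider lies strictly inside $(a/q,b/q)$. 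That interval contains exactly $b-a=\delta_i q$ grid-midpoints; each such midpoint equals $s_i+n\Lambda_i$ for exactly one of the $\Lambda_i q$ equally likely offsets $s_i$, and when $\delta_i<\Lambda_i$ no offset contributes two dividers to the interval, so exactly $\delta_i q$ offsets cause a split. Hence the conditional split probability is $\delta_i/\Lambda_i$ when $\delta_i<\Lambda_i$, is $1$ when $\delta_i\ge\Lambda_i$, and is $0$ when $\Lambda_i=\infty$; averaging over $\gD_i$ gives the displayed formula.

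For the main step, abbreviate $G_j := g_i(j/q)$ and $D_j := G_{j+1}-G_j$, so $G_0 = 0$, and note the quantized-ECM hypotheses force $g_i$ increasing and discretely concave, i.e.\ $D_0 \ge D_1 \ge \cdots \ge D_{q-1} \ge 0$. Then $g_i(z-\tfrac1q)+g_i(z+\tfrac1q)-2g_i(z) = D_j-D_{j-1}$ at $z=j/q$, so the construction reads $\Pr(\Lambda_i=j/q) = -j(D_j-D_{j-1})$ for $1\le j\le q-1$, $\Pr(\Lambda_i=1)=qD_{q-1}$, $\Pr(\Lambda_i=\infty)=1-G_q$; discrete concavity and $G_q\le 1$ make all three nonnegative. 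The key elementary identity is the Abel summation $\sum_{j=1}^{k}-j(D_j-D_{j-1}) = G_k - kD_k$ for $k\le q-1$ (rearrange and telescope $\sum_{j=0}^{k-1}D_j=G_k$); with $k=q-1$, adding the $\Lambda_i=1$ and $\Lambda_i=\infty$ masses gives total $G_{q-1}+D_{q-1}+(1-G_q)=1$, so $\gD_i$ is a genuine distribution. The same identity gives $\Pr_{\gD_i}(\Lambda_i\le k/q) = G_k-kD_k$ for $k\le q-1$, while telescoping $\sum_{j=k+1}^{q-1}(D_j-D_{j-1})=D_{q-1}-D_k$ gives $\tfrac{k}{q}\E_{\gD_i}\big[\1_{(\Lambda_i\in(k/q,1])}/\Lambda_i\big] = k\big[-(D_{q-1}-D_k)+D_{q-1}\big]=kD_k$; adding yields $\Pr\nolimits^{\text{split}}_i(k/q)=G_k=g_i(k/q)$. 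The endpoint $k=q$ is separate: $\Pr_{\gD_i}(\Lambda_i\le 1) = (G_{q-1}-(q-1)D_{q-1})+qD_{q-1}=G_q$ and the second term vanishes, so $\Pr\nolimits^{\text{split}}_i(1)=G_q=g_i(1)$; and $z=0$ is trivial. Since $\Pr\nolimits^{\text{split}}_i(\cdot)$ is piecewise linear with breakpoints only on $[0,1]_{(q)}$, agreement on the grid is all one can ask (and all that is needed, as $\delta_i\in[0,1]_{(q)}$); this is the sense in which ``$\forall z\in[0,1]$'' should be read.

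I expect the only real friction to be bookkeeping around the boundary atoms: the mass $\Pr(\Lambda_i=1)=q[g_i(1)-g_i(\tfrac{q-1}{q})]$ plays the role of a one-sided ``quantized derivative at $1$'' and must be peeled off from the generic $j\le q-1$ terms both when checking normalization and when evaluating the $\Lambda_i\in(z,1]$ expectation, and one must avoid double-counting $\Lambda_i=z$ between the two sums. Structurally there is no obstacle: $\Pr(\Lambda_i=z)$ is, up to the factor $qz$, the negative discrete second difference of $g_i$, so summing it telescopes back to $g_i$ exactly as $\int_0^z(-t\,g_i''(t))\,dt+\text{boundary}=g_i(z)$ does in the continuous Theorem~\ref{thm:vls}(c) --- and, unlike that case, no differentiability of $g_i$ is needed. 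I would carry out the non-negativity and normalization checks for $\gD_i$ explicitly, since a sign slip in the second-difference formula would surface there first.
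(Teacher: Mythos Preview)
Your proposal is correct and follows essentially the same approach as the paper: verify normalization, handle $z=1$ separately, and for $z=k/q\le (q-1)/q$ compute $\Pr_{\gD_i}(\Lambda_i\le z)$ and $z\,\E[\1_{\Lambda_i\in(z,1]}/\Lambda_i]$ by telescoping the discrete second differences of $g_i$. Your forward-difference notation $D_j=G_{j+1}-G_j$ and the Abel identity $\sum_{j=1}^k -j(D_j-D_{j-1})=G_k-kD_k$ compress what the paper does via explicit index shifts, and your remark that ``$\forall z\in[0,1]$'' must be read as $[0,1]_{(q)}$ matches the paper's appendix statement.
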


Now that we have defined a quantized version of our smoothing method, we attempt the coupling step (using the fact that Theorem \ref{thm:vls} also makes no assumptions about joint distributions of $\vs$ or $\Lambda$). However, this presents greater challenges than the $\ell_1$ case. In the $\ell_1$ case, all outcomes for each $s_i$ occur with equal probability $1/(\Lambda q)$ so we can arbitrarily associate each outcome for $s_1$ with a unique outcome for $s_2$, and so on (for example using the fixed offset method described above). However, Equation \ref{eq:quantizedderiv} assigns real-number probabilities to each value of $\Lambda_i$. This means that the outcomes $(\Lambda_i,s_i)$ for each dimension occur with non-uniform probabilities, making the coupling process more difficult. 

One naive solution (at least in the case where $g_i$'s are all the same function, for example for $\ell^p_p$ metrics) is to couple the $\Lambda_i$'s such that they are all equal to one another; in other words, the sampling process becomes:
\begin{equation} \label{eq:globallambda}
    \begin{split}
        \Lambda &\sim \gD_{\cdot}\\
    s_i &\sim \gU(0,\Lambda) \,\,\,\,\forall i     
    \end{split}
\end{equation}
We can then apply the fixed-offset coupling  of $\vs$ for each possible value of $\Lambda$, evaluating $q\Lambda$ outcomes for each value. We then exactly compute the final expectation  $p(\cdot)$ as the \textit{weighted} average of $f(\cdot)$ over these outcomes, with the weights for each $\Lambda$ being determined by Theorem \ref{thm:quantizedzls}-c. However, this naive ``Global $\Lambda$'' method underperforms in practice (see Figure \ref{fig:global_lambda}) and has significant theoretical drawbacks (e.g., notice that this method simply produces the average of several $\ell_1$-Lipschitz functions).
We explain this further in Appendix \ref{sec:global_lambda}. 

\begin{figure}[]
    \centering
    \includegraphics[width=3.25in]{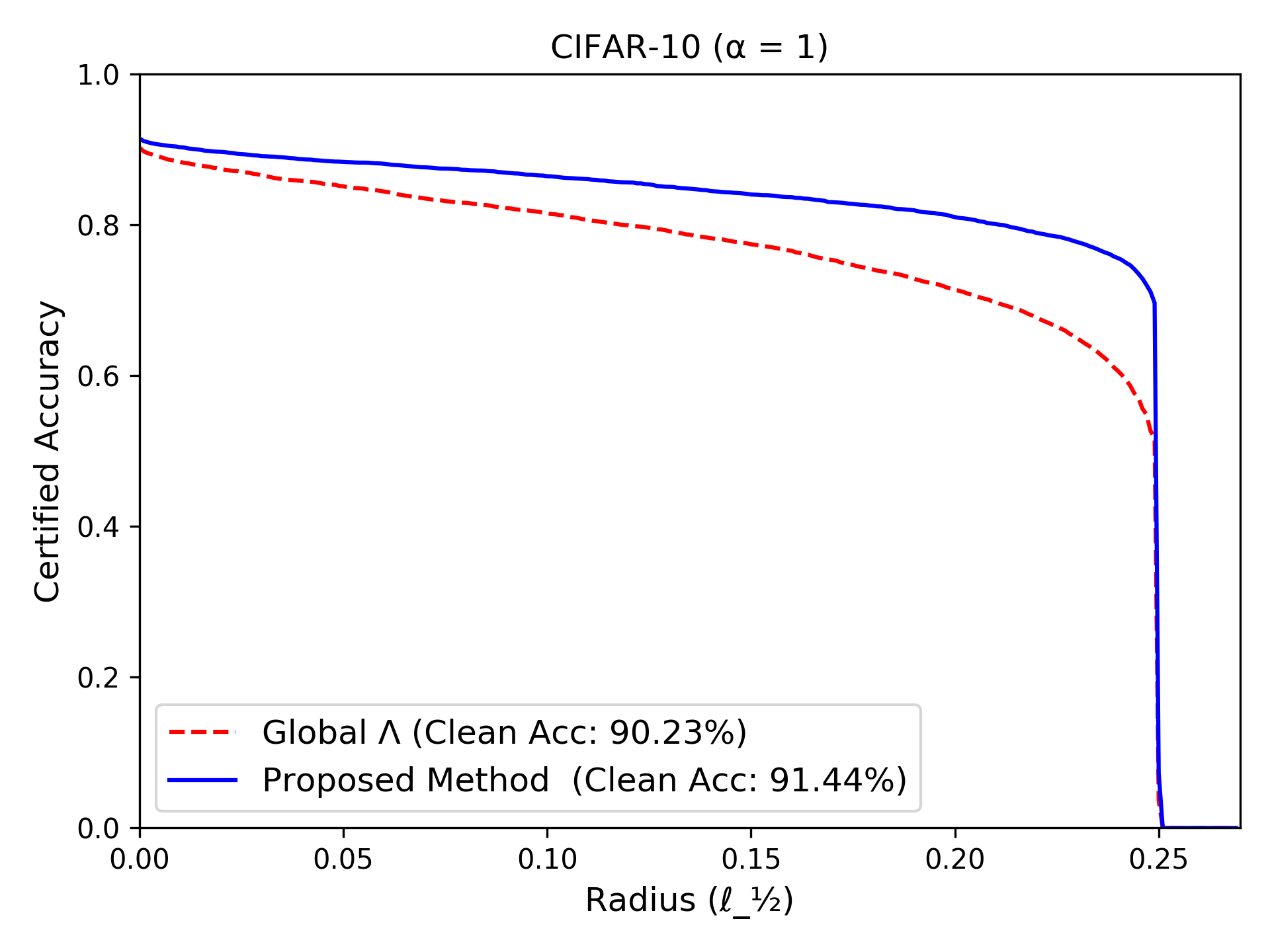}
    \caption{Using a global value for $\Lambda$ as suggested in Equation \ref{eq:globallambda} leads to suboptimal certified robustness.}
    \label{fig:global_lambda}
\end{figure}

What we do instead is to design $\gD$ such that, for some constant integer $B$, all outcomes for $(\Lambda_i,s_i)$ each have probability in the form $n/B$, where $n\in \sN$. By repeating each outcome $n$ times, this allows us to generate a list of $B$ total outcomes which occur with uniform probability. We then couple these using cyclic permutations as in \cite{Levine2021ImprovedDS}, so that we require a total of $B$ smoothing samples (See Figure \ref{fig:derand}-b.) 

Note that the distribution $\gD$ given by Equation \ref{eq:quantizedderiv} is not necessarily of this form. However, even though the distribution given by Equation \ref{eq:quantizedderiv} is in some sense ``optimal'' in that it causes $\Pr\nolimits^{\text{split}}(z)$ to perfectly match $g(z)$, thereby providing the most information to the base classifier, it is only necessary for the Lipschitz guarantee that $\Pr\nolimits^{\text{split}}(z)$ is nowhere greater than $g(z)$.  It turns out (as explained in full detail in Appendix \ref{sec:milp}) that for a fixed budget $B$, finding a distribution over $\Lambda$ with all outcomes in the form $n/B$ such that  $\Pr\nolimits^{\text{split}}(z)$ approximates but never exceeds a given $g(z)$ can be formulated as a mixed integer linear program. Solving these MILP's yields distributions for $\Lambda$ that cause $\Pr\nolimits^{\text{split}}(z)$ to satisfyingly approximate $g(z)$ (see Figure \ref{fig:lambapproc}.) We also show in the appendix that an arbitrarily close approximation can always be obtained with $B$ sufficiently large. 

\begin{figure*}
    \centering
    \includegraphics[width=6.75in]{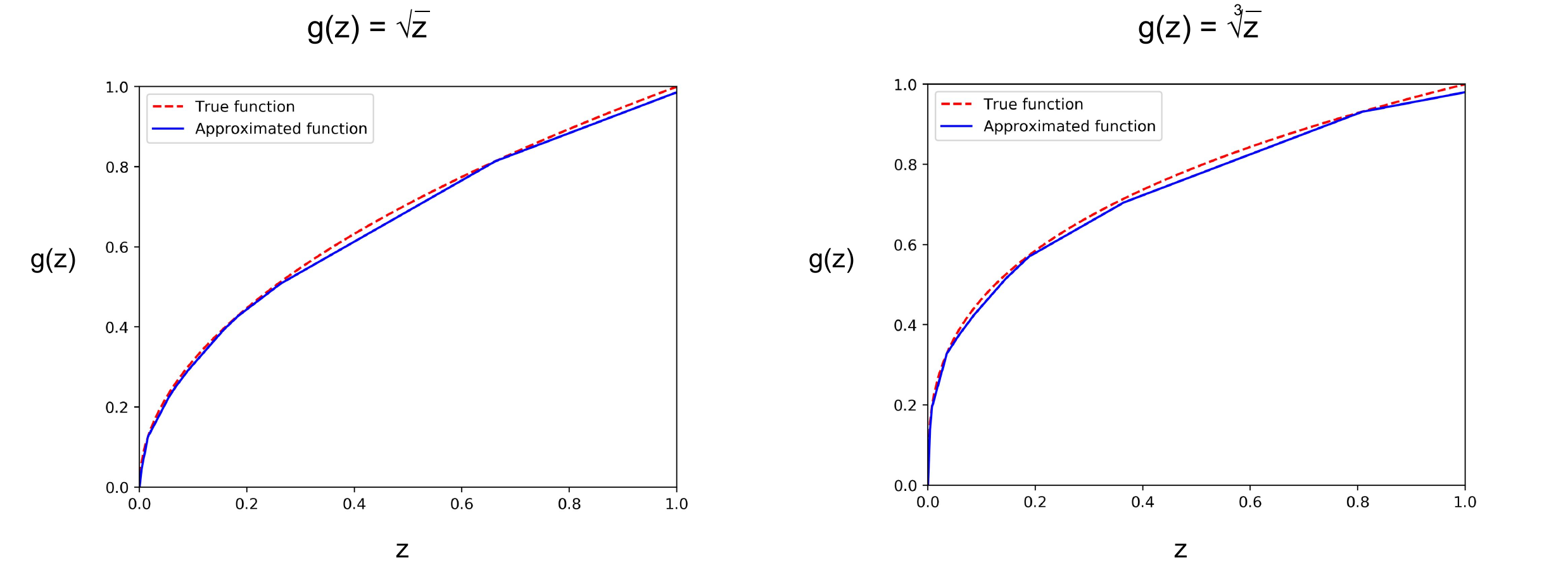}
    \caption{Approximations of $g$ for $p=\frac{1}{2}$ and $p=\frac{1}{3}$ using a budget of $B=1000$ smoothing samples. In both cases, the true and approximated functions differ by at most 0.02. }
    \label{fig:lambapproc}
\end{figure*}
\begin{table*}
    \centering
    \begin{tabular}{|c|c|c|c|c|c|c|c|c|}
\hline
\multicolumn{9}{c}{$\ell_{1/2}$}\\
\hline
$\rho$&10&20&30&40&50&60&70&80\\
\hline
L\&F (2021)&42.69\%&35.04\%&28.89\%&23.46\%&18.81\%&13.76\%&8.38\%&1.27\%\\
(From $\ell_1$)&(60.42\%&(60.42\%&(60.42\%&(60.42\%&(60.42\%&(60.42\%&(60.42\%&(60.42\%\\
&@ $\alpha$=18)&@ $\alpha$=18)&@ $\alpha$=18)&@ $\alpha$=18)&@ $\alpha$=18)&@ $\alpha$=18)&@ $\alpha$=18)&@ $\alpha$=18)\\
\hline
L\&F (2021)&41.32\%&35.56\%&32.07\%&28.70\%&24.95\%&20.79\%&16.20\%&6.98\%\\
(From $\ell_1$)&(55.38\%&(50.11\%&(50.11\%&(50.11\%&(50.11\%&(50.11\%&(50.11\%&(50.11\%\\
(Stab. Training)&@ $\alpha$=12)&@ $\alpha$=18)&@ $\alpha$=18)&@ $\alpha$=18)&@ $\alpha$=18)&@ $\alpha$=18)&@ $\alpha$=18)&@ $\alpha$=18)\\
\hline
\textbf{Variable-$\Lambda$} &\textbf{56.74\%}&\textbf{49.80\%}&43.60\%&37.97\%&32.37\%&25.83\%&18.19\%&5.02\%\\
&(73.22\%&(70.57\%&(70.57\%&(70.57\%&(70.57\%&(70.57\%&(70.57\%&(70.57\%\\
&@ $\alpha$=15)&@ $\alpha$=18)&@ $\alpha$=18)&@ $\alpha$=18)&@ $\alpha$=18)&@ $\alpha$=18)&@ $\alpha$=18)&@ $\alpha$=18)\\
\hline
\textbf{Variable-$\Lambda$}&55.21\%&48.72\%&\textbf{45.05\%}&\textbf{42.26\%}&\textbf{38.62\%}&\textbf{34.42\%}&\textbf{29.01\%}&\textbf{16.28\%}\\
(Stab. Training)&(69.87\%&(62.74\%&(60.44\%&(60.44\%&(60.44\%&(60.44\%&(60.44\%&(60.44\%\\
&@ $\alpha$=9)&@ $\alpha$=15)&@ $\alpha$=18)&@ $\alpha$=18)&@ $\alpha$=18)&@ $\alpha$=18)&@ $\alpha$=18)&@ $\alpha$=18)\\
\hline
\multicolumn{9}{c}{$\ell_{1/3}$}\\
\hline
$\rho$&90&180&270&360&450&540&630&720\\
\hline
L\&F (2021)&34.98\%&27.86\%&22.69\%&18.49\%&14.32\%&10.37\%&5.99\%&0.89\%\\
(From $\ell_1$)&(60.42\%&(60.42\%&(60.42\%&(60.42\%&(60.42\%&(60.42\%&(60.42\%&(60.42\%\\
&@ $\alpha$=18)&@ $\alpha$=18)&@ $\alpha$=18)&@ $\alpha$=18)&@ $\alpha$=18)&@ $\alpha$=18)&@ $\alpha$=18)&@ $\alpha$=18)\\
\hline
L\&F (2021)&35.54\%&31.30\%&28.06\%&24.75\%&21.33\%&18.27\%&13.97\%&6.07\%\\
(From $\ell_1$)&(50.11\%&(50.11\%&(50.11\%&(50.11\%&(50.11\%&(50.11\%&(50.11\%&(50.11\%\\
(Stab. Training)&@ $\alpha$=18)&@ $\alpha$=18)&@ $\alpha$=18)&@ $\alpha$=18)&@ $\alpha$=18)&@ $\alpha$=18)&@ $\alpha$=18)&@ $\alpha$=18)\\
\hline
\textbf{Variable-$\Lambda$}&\textbf{55.66\%}&49.04\%&43.27\%&38.21\%&33.37\%&27.17\%&20.27\%&6.87\%\\
&(74.57\%&(74.57\%&(74.57\%&(74.57\%&(74.57\%&(74.57\%&(74.57\%&(74.57\%\\
&@ $\alpha$=18)&@ $\alpha$=18)&@ $\alpha$=18)&@ $\alpha$=18)&@ $\alpha$=18)&@ $\alpha$=18)&@ $\alpha$=18)&@ $\alpha$=18)\\
\hline
\textbf{Variable-$\Lambda$}&54.63\%&\textbf{49.88\%}&\textbf{46.92\%}&\textbf{44.11\%}&\textbf{41.03\%}&\textbf{37.56\%}&\textbf{32.46\%}&\textbf{20.84\%}\\
(Stab. Training)&(70.21\%&(64.30\%&(64.30\%&(64.30\%&(64.30\%&(64.30\%&(64.30\%&(64.30\%\\
&@ $\alpha$=12)&@ $\alpha$=18)&@ $\alpha$=18)&@ $\alpha$=18)&@ $\alpha$=18)&@ $\alpha$=18)&@ $\alpha$=18)&@ $\alpha$=18)\\

\hline
    \end{tabular}
    \caption{Certified accuracy as a function of fractional $\ell_p$ distance $\rho$, for $p = 1/2$ and $1/3$, on CIFAR-10. We train using standard smoothed training \citep{pmlr-v97-cohen19c} as well as with stability training \citep{li2019certified}. As a baseline, we compare to certificates computed from the $\ell_1$ certificates given by \cite{Levine2021ImprovedDS}. We test with $\alpha = \{1,3,6,9,12,15,    18\}$ where 1/$\alpha$ is the Lipschitz constant of the model (as mentioned in Section \ref{sec:main}, for \cite{Levine2021ImprovedDS}, $\Lambda = \alpha$), and report the highest certificate for each technique over all of the models. In parentheses, we report the the clean accuracy and the $\alpha$ parameter for the associated model. Complete results for all models are reported in Appendix \ref{sec:complete_cifar}, as are base classifier accuracies for each model. For $p=1/2$, we also provide results for larger values of $\alpha$ (up to $\alpha = 30$) in Appendix \ref{sec:expanded_cifar}. }
    \label{tab:cifar}
\end{table*}

\section{Results} \label{sec:results}
Our results are presented in  Table \ref{tab:cifar} and Figure \ref{fig:derand}.

In Table \ref{tab:cifar}, we present certificates that our algorithm generates on CIFAR-10 for $\ell_{1/2}$ and $\ell_{1/3}$ quasi-norms. As a baseline, we compare to \cite{Levine2021ImprovedDS}'s certificates for $\ell_1$, using norm inequalities to derive certificates for $\ell_p$ ($p<1$). In particular we use the standard norm inequality:
\begin{equation}
   \ell_p(\vx,\vy) \geq \ell_1(\vx,\vy),\,\,\forall p \in (0,1)
\end{equation}
Moreover, since our domain is $[0,1]$, we have:
\begin{equation}
\begin{split}
       \ell_p (\vx,\vy) = \left(\sum_{i=1} \delta_i^p\right)^{1/p}  \geq& \\
       \left(\sum_{i=1}^d \delta_i\right)^{1/p}  =& (\ell_1 (\vx,\vy) )^{1/p}, \\ &\,\,\,\,\,\,\,\,\,\,\,\forall p \in (0,1)
\end{split}
\end{equation}
This means that we can compute the baseline, $\ell_1$-based certificate as:

\begin{equation}
    \text{Cert.}(\ell_p) = \max(\text{Cert.}(\ell_1), \text{Cert.}(\ell_1)^{1/p})
\end{equation}

Table \ref{tab:cifar} shows that our method outperforms this baseline significantly on CIFAR-10 at a wide range of scales. For example, at an $\ell_{1/3}$ radius of 720, the proposed method has over 14 percentage points higher certified-robust accuracy, using a model that also has over 14 percentage points higher clean accuracy.

In Figure \ref{fig:imagenet}, we present certificate results of our method on ImageNet-1000, showing that our method scales to high-dimensional datasets, with one model able to certify many samples as robust to an  $\ell_{1/2}$ radius close to 80 while maintaining over 50\% clean accuracy.

Our architecture and training settings were largely borrowed from \cite{Levine2021ImprovedDS}, using WideResNet-40 for CIFAR-10 and ResNet-50 for ImageNet. One additional challenge was the presence of both $\vx^{\text{lower}}$ and $\vx^{\text{upper}}$ as inputs to the base classifier $f(\cdot)$. \cite{Levine2021ImprovedDS} does not need this, because when $\Lambda$ is fixed, $\vx^{\text{lower}}$ and $\vx^{\text{upper}}$ can both be computed from their mean. In order to use the extra information, we doubled the input channels to the first convolutions layer, and represented the two images in different channels. We explore other representation techniques in Appendix \ref{sec:representations}. An explicit description of our certification procedure is provided in Appendix \ref{sec:cert_procedure}.

We use $1000\alpha$ smoothing samples to approximate the metric functions $g = \frac{z^p}{\alpha}$, where $1/\alpha$ is the Lipschitz constant. Note that, unlike in randomized smoothing, this ``sampling'' does not mean that our final certificates are non-deterministic or approximate: they are exact certificates for the fractional $\ell_p$-quasi-norm.
\begin{figure*}
    \centering
    \includegraphics[width=\textwidth]{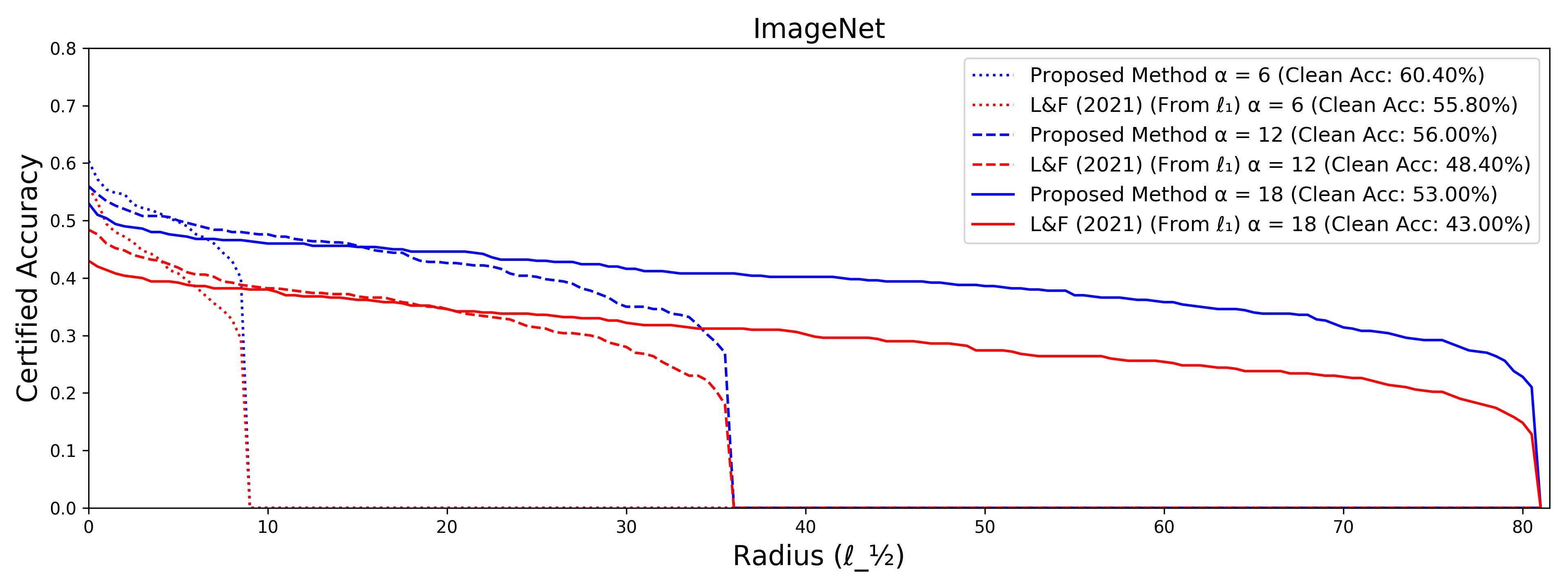}
    \caption{Certified Accuracy for variable-$\Lambda$ smoothing as a function of $\ell_{1/2}$ norm on ImageNet-1000. We use a subset consisting of 500 samples from the validation set for evaluation. $1/\alpha$ is the $\ell^{1/2}_{1/2}$ Lipschitz constant of the classifier logits. As a baseline, we compare to certificates computed from the $\ell_1$ certificates given by \cite{Levine2021ImprovedDS}. Base classifier accuracies are reported in Appendix \ref{sec:base_imagenet}.}
    \label{fig:imagenet}
\end{figure*}

\section{Societal Impacts, Limitations, and Environmental Impact}
Adversarial attacks represent an important potential security concerns in machine learning: although fractional-$\ell_p$ attacks have not yet emerged as threats themselves, we consider defending against these attacks proactively as a responsible step. However, it is imperative that practitioners using any defense understand its limitations: we acknowledge that the proposed defense provides provable robustness to only a narrow threat model. We note that we do not present any new attacks or security vulnerabilities in this work.

While our technique does rely on (non-random) sampling to compute certificates, the $1000\alpha$ smoothing samples we used, with $\alpha$ at most 18, is significantly fewer than the 100,000 smoothing samples typically used in randomized smoothing results (e.g. \cite{pmlr-v97-cohen19c,salman2019provably,pmlr-v119-yang20c}). The sampling involved in randomized smoothing is computationally expensive and thus environmentally impactful as a forward-pass is computed on each sample: this means that de-randomization of smoothing techniques can have a positive environmental impact.
\section{Conclusion}
In this work, we presented a novel technique for implementing trainable models that have Lipschitz continuity under a wide family of elementwise-concave metrics, including fractional $\ell^p_p$ metrics. This allows us to develop certifiably robust classifiers with robustness guarantees under $\ell_p$, $p<1$ attacks, a domain that has not been thoroughly studied in the adversarial robustness literature. Our method is fully deterministic and is demonstrated on both CIFAR-10 and ImageNet, showing that it efficiently scales.
This leaves the open problem of deterministic certification at ImageNet scale for $\ell_p$, $p>1$ attacks, and in particular the widely studied $\ell_2$ norm. While our technique cannot be directly applied in this domain (because of the concavity requirement of our results), we hope that it can provide a starting point for future exploration into this problem.
\section{Acknowledgements}
This project was supported in part by NSF CAREER AWARD 1942230, a grant from NIST 60NANB20D134, HR001119S0026 (GARD) and ONR grant 13370299.
\bibliography{main}
\bibliographystyle{unsrtnat}
\onecolumn 
\appendix

\section{Proofs} \label{sec:proofs}
\subsection{Proof Sketch of Theorem \ref{thm:old} (from \cite{Levine2021ImprovedDS}) using modified notation} \label{sec:proofsketch}

 Suppose the $[0,1]$ domain of dimension $i$ is divided into ``bins'', with dividers at each value $s_i+n\Lambda, \forall n\in \sN$ (See Figure \ref{fig:oldexplained} in the main text) Then $x^\text{lower}_i$ and $x^\text{upper}_i$ are the lower- and upper-limits of the bin which $x_i$ is assigned to. Note that the bins are of size $\Lambda$, and that the offset $s_i$ of the dividers is uniformly random. Consider two points $\vx$ and $\vy$, and let $\delta_i := |x_i-y_i|$. Then the probability of a divider separating $x_i$ and $y_i$ is $\min(\delta_i/\Lambda, 1)$. By union bound, the probability that $(\vy^\text{lower}, \vy^\text{upper})$  and $(\vx^\text{lower}, \vx^\text{upper})$ differ at all is at most $\Sigma \delta_i/\Lambda = \|x-y\|_1/\Lambda$. If $\vx$ and $\vy$ are mapped to the same bin in every dimension, $f(\vx^\text{lower}, \vx^\text{upper}) = f(\vy^\text{lower}, \vy^\text{upper}) $. Because the range of $f$ is restricted to the interval $[0,1]$, this implies that $|p(\vx)-p(\vy)| = |\mathop{\E}_{\vs}\left[ f(\vx^\text{lower}, \vx^\text{upper})\right] - \mathop{\E}_{\vs}\left[ f(\vy^\text{lower}, \vy^\text{upper})\right]| \leq  \|x-y\|_1/\Lambda$.

\subsection{Proof of Theorem \ref{thm:vls}}
We first need the following lemma, which is implicit in the proofs in \cite{Levine2021ImprovedDS}, but which we prove explicitly here for completeness. Note that we closely follow the proof of Theorem 1 in \cite{Levine2021ImprovedDS}
\begin{lemma} \label{lem:hinge}
For any $\Lambda_i  \in (0,1] \cup \{\infty\}$, let $s_i \sim \gU(0,\Lambda_i)$. For any  $x_i,y_i \in [0,1]$, let $\delta_i := |x_i-y_i|$ and define $x^\text{upper}_i$, $x^\text{lower}_i$ as follows:
If $\Lambda_i = \infty$, then $x^\text{upper}_i := 1$,  $x^\text{lower}_i := 0$, otherwise:
\begin{align}
x^\text{upper}_i &:= \min(\Lambda_i \ceil{\frac{x_i - s_i}{\Lambda_i}}+s_i, 1) \\
x^\text{lower}_i &:= \max(\Lambda_i \ceil{\frac{x_i - s_i}{\Lambda_i}} +s_i- \Lambda_i, 0)
\end{align}
and define $y^\text{upper}_i$,  $y^\text{lower}_i$ similarly. Then:
\begin{equation}
    \Pr_{s_i}( (x^\text{lower}_i,  x^\text{upper}_i) \neq (y^\text{lower}_i,  y^\text{upper}_i)) = \min\left(\frac{\delta_i}{\Lambda_i}, 1\right)
\end{equation}
\end{lemma}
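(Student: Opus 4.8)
The plan is to analyze, for a single coordinate $i$, the event that $x_i$ and $y_i$ land in different bins, splitting on the value of $\Lambda_i$ (which is fixed in this lemma, so really we just condition on the randomness of $s_i$). First I would handle the degenerate case $\Lambda_i = \infty$: then $x^\text{upper}_i = y^\text{upper}_i = 1$ and $x^\text{lower}_i = y^\text{lower}_i = 0$ deterministically, so the probability of a split is $0$; and since $\delta_i \le 1 = \Lambda_i$ in the extended sense, $\min(\delta_i/\Lambda_i, 1) = 0$ as well, matching. (One should be slightly careful with the convention $\delta_i/\infty = 0$, but this is the natural reading.) So the substance is the case $\Lambda_i \in (0,1]$.

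For $\Lambda_i \in (0,1]$, the key observation is that the map $x_i \mapsto (x^\text{lower}_i, x^\text{upper}_i)$ is determined entirely by which ``bin'' $[s_i + n\Lambda_i - \Lambda_i,\ s_i + n\Lambda_i)$ contains $x_i$ (intersected with $[0,1]$), where the bin dividers sit at the points $s_i + n\Lambda_i$ for $n \in \mathbb{Z}$. WLOG assume $x_i \le y_i$, so $\delta_i = y_i - x_i$. Then $(x^\text{lower}_i, x^\text{upper}_i) \ne (y^\text{lower}_i, y^\text{upper}_i)$ if and only if at least one bin divider lies in the half-open interval $(x_i, y_i]$ — i.e., iff there exists $n$ with $x_i < s_i + n\Lambda_i \le y_i$. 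I would then compute this probability directly: since $s_i \sim \gU(0,\Lambda_i)$, the dividers $\{s_i + n\Lambda_i : n \in \mathbb{Z}\}$ form a stationary point process on $\mathbb{R}$ with spacing $\Lambda_i$, and the probability that this lattice hits a fixed interval of length $\delta_i$ is exactly $\min(\delta_i/\Lambda_i, 1)$: if $\delta_i \ge \Lambda_i$ the interval has length at least one full period so it always contains a divider (probability $1$); if $\delta_i < \Lambda_i$, then modulo $\Lambda_i$ the event ``some divider in $(x_i, y_i]$'' corresponds to $s_i \bmod \Lambda_i$ landing in an interval of length $\delta_i$ inside a period of length $\Lambda_i$, which has probability $\delta_i/\Lambda_i$ by uniformity.

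The one genuine subtlety — and I expect this to be the main (minor) obstacle — is the boundary clipping with $\min(\cdot, 1)$ and $\max(\cdot, 0)$: when a bin sticks out past $[0,1]$, its recorded endpoints are truncated to $1$ or $0$. I need to check this truncation does not create a ``false split'' (two points in the same true bin getting different recorded pairs) or erase a ``true split''. It cannot: truncation is a deterministic function of the bin index, so two points with the same bin index always get the same truncated $(x^\text{lower}_i, x^\text{upper}_i)$, and two points with different bin indices — which, given $\Lambda_i \le 1$ and $x_i, y_i \in [0,1]$, differ by exactly one divider — get distinct pairs since the shared divider is an endpoint of both bins and $\Lambda_i > 0$ forces the nonshared endpoints (before clipping) to differ, and at most one side can be clipped to the same boundary value. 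A clean way to phrase this: define $k_i := \ceil{(x_i - s_i)/\Lambda_i}$, the bin index; then $(x^\text{lower}_i, x^\text{upper}_i)$ is a function of $k_i$ alone (for fixed $s_i, \Lambda_i$), and on the relevant range this function is injective, so the split event equals $\{k^x_i \ne k^y_i\}$, whose probability is the lattice-hitting probability computed above. I would then assemble these pieces and conclude.
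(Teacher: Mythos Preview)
Your approach is correct and structurally the same as the paper's: reduce the split event to inequality of bin indices $k_i = \lceil (x_i - s_i)/\Lambda_i \rceil$, argue that clipping does not destroy the bijection between bin index and recorded pair, and then compute $\Pr(k_i^x \neq k_i^y)$. The paper establishes the last step by a somewhat laborious case analysis on $\lceil x_i/\Lambda_i\rceil - \lceil y_i/\Lambda_i\rceil \in \{0,1\}$, whereas your stationary-lattice / modular argument (``a uniformly shifted $\Lambda_i$-lattice hits an interval of length $\delta_i$ with probability $\min(\delta_i/\Lambda_i,1)$'') reaches the same conclusion more directly; this is a genuine simplification. The clipping argument is essentially identical in both: the paper phrases it as ``both box constraints cannot be simultaneously active,'' you phrase it as injectivity of $k_i \mapsto (x_i^{\text{lower}}, x_i^{\text{upper}})$.

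One small caution: your parenthetical that two points with different bin indices ``differ by exactly one divider'' is only true when $\delta_i < \Lambda_i$; for $\delta_i \geq \Lambda_i$ the indices can differ by two or more, so the ``shared divider'' picture does not apply. Your final, cleaner formulation---that the map $k_i \mapsto (x_i^{\text{lower}}, x_i^{\text{upper}})$ is injective on the relevant range---is the right statement and does cover the general case (if both pairs coincided, both uppers would have to be clipped to $1$ and both lowers to $0$, forcing $\Lambda_i k_i^x + s_i \geq 1$ and $\Lambda_i(k_i^y - 1) + s_i \leq 0$, which is impossible for $k_i^x < k_i^y$). Just make sure your write-up leans on that injectivity claim rather than the adjacency picture.
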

\begin{proof}
We first assume $\Lambda_i  \in (0,1]$. Without loss of generality, assume $x_i \geq y_i$, so that $\delta_i = x_i - y_i$.

Note that, with probability 1, $(x^\text{lower}_i,  x^\text{upper}_i) \neq (y^\text{lower}_i,  y^\text{upper}_i)$ iff $\ceil{\frac{x_i - s_i}{\Lambda_i}} \neq \ceil{\frac{y_i - s_i}{\Lambda_i}}$. 

To see this, note that $\ceil{\frac{x_i - s_i}{\Lambda_i}} = \ceil{\frac{y_i - s_i}{\Lambda_i}} \implies (x^\text{lower}_i,  x^\text{upper}_i) = (y^\text{lower}_i,  y^\text{upper}_i)$ directly from the definitions.

For the converse, $\ceil{\frac{x_i - s_i}{\Lambda_i}} \neq \ceil{\frac{y_i - s_i}{\Lambda_i}} \implies (x^\text{lower}_i,  x^\text{upper}_i) \neq (y^\text{lower}_i,  y^\text{upper}_i)$, first consider the case where $\Lambda_i < 1$. Because the first terms in the ``min'' or ``max'' of the definitions of $x^\text{lower}_i$ and $  x^\text{upper}_i$ differ by a most $\Lambda_i < 1$, both of the $[0,1]$ box constraints cannot be active simultaneously: either $x^\text{lower}_i = \Lambda_i \ceil{\frac{x_i - s_i}{\Lambda_i}}+s_i - \Lambda_i$ (and not $0$) and/or $x^\text{upper}_i = \Lambda_i \ceil{\frac{x_i - s_i}{\Lambda_i}}+s_i$ (and not $1$). Therefore if $\ceil{\frac{x_i - s_i}{\Lambda_i}} \neq \ceil{\frac{y_i - s_i}{\Lambda_i}}$, whichever of $x^\text{upper}_i$ or $x^\text{lower}_i $ is not affected by the box constraint will necessarily differ from $y^\text{upper}_i$ or $y^\text{lower}_i$, so $ (x^\text{lower}_i,  x^\text{upper}_i) \neq (y^\text{lower}_i,  y^\text{upper}_i)$. For the case where  $\Lambda_i = 1$, both constraints can only be simultaneously active if $s_i = 0$ or $s_i=1$, which both occur with probability zero, and otherwise the same argument from the $\Lambda_i < 1$ case applies.

Therefore, it is sufficient to show that 
\begin{equation}
    \Pr_{s_i}( \ceil{\frac{x_i - s_i}{\Lambda_i}} \neq \ceil{\frac{y_i - s_i}{\Lambda_i}}) = \min\left(\frac{\delta_i}{\Lambda_i}, 1\right)
\end{equation}
First, if $\delta_i/\Lambda_i \geq 1$, then $\frac{x_i - s_i}{\Lambda_i}$ and $\frac{y_i - s_i}{\Lambda_i}$ differ by at least one, so their ceilings must differ. Then $\Pr_{s_i}( \ceil{\frac{x_i - s_i}{\Lambda_i}} \neq \ceil{\frac{y_i - s_i}{\Lambda_i}}) = 1 = \min\left(\frac{\delta_i}{\Lambda_i}, 1\right)$.

Otherwise, if $\delta_i/\Lambda_i < 1$, then $\frac{x_i }{\Lambda_i}$ and $\frac{y_i}{\Lambda_i}$ differ by less than one, so
\begin{equation}
\ceil{\frac{x_i }{\Lambda_i} }-\ceil{\frac{y_i}{\Lambda_i} } \in \{0,1\}
\end{equation}
And similarly:
\begin{equation}
\ceil{\frac{x_i - s_i}{\Lambda_i} }-\ceil{\frac{y_i - s_i}{\Lambda_i} } \in \{0,1\}
\end{equation}
Also, because $s_i/\Lambda_i$ is at most one,

\begin{equation}
\begin{split}
\ceil{\frac{x_i}{\Lambda_i} }-\ceil{\frac{x_i - s_i}{\Lambda_i} } \in \{0,1\}\\
\ceil{\frac{y_i}{\Lambda_i} }-\ceil{\frac{y_i - s_i}{\Lambda_i} } \in \{0,1\}\\
\end{split}
\end{equation}

We consider cases on $\ceil{\frac{x_i}{\Lambda_i} }-\ceil{\frac{y_i }{\Lambda_i} }$:
    \begin{itemize}
        \item Case $\ceil{\frac{x_i}{\Lambda_i} }-\ceil{\frac{y_i}{\Lambda_i} } = 0$.  Then $\ceil{\frac{x_i - s_i}{\Lambda_i} }=\ceil{\frac{y_i - s_i}{\Lambda_i} }$ only in two cases:
        \begin{itemize}
            \item $\ceil{\frac{x_i-s_i}{\Lambda_i}} = \ceil{\frac{y_i-s_i}{\Lambda_i}} = \ceil{\frac{x_i}{\Lambda_i} }$ iff $\frac{s_i}{\Lambda_i} < \frac{y_i}{\Lambda_i} -(\ceil{\frac{y_i}{\Lambda_i} } -1)$   $ (\leq \frac{x_i}{\Lambda_i} -(\ceil{\frac{x_i}{\Lambda_i} } -1))$.
            \item  $\ceil{\frac{y_i-s_i}{\Lambda_i}} = \ceil{\frac{x_i-s_i}{\Lambda_i}} = \ceil{\frac{x_i}{\Lambda_i} }-1$ iff $\frac{s_i}{\Lambda_i} >\frac{x_i}{\Lambda_i} -(\ceil{\frac{x_i}{\Lambda_i} }-1)$ ( $\geq  \frac{y_i}{\Lambda_i} -(\ceil{\frac{y_i}{\Lambda_i} }-1)$).
        \end{itemize}
 Then  $\ceil{\frac{y_i - s_i}{\Lambda_i} } \neq \ceil{\frac{x_i - s_i}{\Lambda_i} }$ iff $\frac{y_i}{\Lambda_i} -(\ceil{\frac{x_i}{\Lambda_i} }-1) < \frac{s_i}{\Lambda_i} < \frac{x_i}{\Lambda_i} -(\ceil{\frac{x_i}{\Lambda_i} }-1)$. There are exactly $q(x_i -y_i) = q\delta_i$ values of $s_i$ for which this occurs, out of a total $\Lambda_i q$ values of $s_i$, so this occurs with probability $ \frac{\delta_i}{\Lambda_i}$.
        
        \item Case $\ceil{\frac{x_i}{\Lambda_i} }-\ceil{\frac{y_i}{\Lambda_i} } = 1$.  Then $\ceil{\frac{x_i - s_i}{\Lambda_i} } \neq \ceil{\frac{y_i - s_i}{\Lambda_i} }$ only in two cases:
        \begin{itemize}
            \item $\ceil{\frac{y_i-s_i}{\Lambda_i}} = \ceil{\frac{y_i}{\Lambda_i} }$ and $\ceil{\frac{x_i-s_i}{\Lambda_i}} = \ceil{\frac{x_i}{\Lambda_i} } =  \ceil{\frac{y_i}{\Lambda_i} }+1$. This happens iff $\frac{s_i}{\Lambda_i} < \frac{x_i}{\Lambda_i} -\ceil{\frac{y_i}{\Lambda_i} }$ ($ \leq \frac{y_i}{\Lambda_i} -(\ceil{\frac{y_i}{\Lambda_i} }-1)$).
            \item $\ceil{\frac{y_i-s_i}{\Lambda_i}} = \ceil{\frac{y_i}{\Lambda_i} }-1$ and $\ceil{\frac{x_i-s_i}{\Lambda_i}} = \ceil{\frac{y_i}{\Lambda_i} }$. This happens iff $\frac{s_i}{\Lambda_i} >  \frac{y_i}{\Lambda_i} -(\ceil{\frac{y_i}{\Lambda_i} }-1)$ ($ \geq \frac{x_i}{\Lambda_i} -\ceil{\frac{y_i}{\Lambda_i} }$).
        \end{itemize}
        Therefore, $\ceil{\frac{y_i-s_i}{\Lambda_i}} = \ceil{\frac{x_i-s_i}{\Lambda_i}}$ iff:
        \begin{equation}
            \frac{x_i}{\Lambda_i} -\ceil{\frac{y_i}{\Lambda_i} } < \frac{s_i}{\Lambda_i} < \frac{y_i}{\Lambda_i} -(\ceil{\frac{y_i}{\Lambda_i} }-1) 
        \end{equation}
        Which is:
        \begin{equation}
            \frac{y_i}{\Lambda_i} -\ceil{\frac{y_i}{\Lambda_i} }  + \frac{\delta_i}{\Lambda_i} < \frac{s_i}{\Lambda_i} < \frac{y_i}{\Lambda_i} -\ceil{\frac{y_i}{\Lambda_i} } + 1.
        \end{equation} 
         There are exactly $q( 1- (x_i -y_i)) = q(1-\delta_i)$ values of $s_i$ for which this occurs, out of a total $\Lambda_i q$ values of $s_i$, so this occurs with probability $ 1- \frac{\delta_i}{\Lambda_i}$. Then $\ceil{\frac{y_i-s_i}{\Lambda_i}} \neq \ceil{\frac{x_i-s_i}{\Lambda_i}}$  with probability $\frac{\delta_i}{\Lambda_i}$.
    \end{itemize}
So in all cases, for  $\delta_i/\Lambda_i < 1$, $\Pr_{s_i}( \ceil{\frac{x_i - s_i}{\Lambda_i}} \neq \ceil{\frac{y_i - s_i}{\Lambda_i}}) = \frac{\delta_i}{\Lambda_i} = \min\left(\frac{\delta_i}{\Lambda_i}, 1\right)$.

Finally, we consider $\Lambda_i = \infty$. In this case, $(x^\text{lower}_i,  x^\text{upper}_i) = (y^\text{lower}_i,  y^\text{upper}_i)) = (0,1)$ with probability 1, so 
\begin{equation}
    \Pr_{s_i}( (x^\text{lower}_i,  x^\text{upper}_i) \neq (y^\text{lower}_i,  y^\text{upper}_i)) = 0 = \frac{\delta_i}{\infty}  =\min\left(\frac{\delta_i}{\Lambda_i}, 1\right)
\end{equation}
\end{proof}

We can now prove each part of the theorem:
\begin{theorempart}
 Let $\gD$ and $f(\cdot)$ be the $\Lambda$-distribution and base function used for Variable-$\Lambda$ smoothing, respectively. Let $\vx,\vy \in [0,1]^d$ be two points. For each dimension $i$, let $\delta_i := |x_i-y_i| $. The probability that $(x_i^\text{lower}, x_i^\text{upper}) \neq (y_i^\text{lower}, y_i^\text{upper})$ is given by $\Pr^{\text{split}}_i(\delta_i)$, where:
    \begin{equation}
        \Pr\nolimits^{\text{split}}_i(z) := \Pr_{\gD_i}(\Lambda_i \leq z) + z \E_{\gD_i}\left[ \frac{\1_{( \Lambda_i \in (z,1])}}{\Lambda_i} \right]
    \end{equation}
\end{theorempart}
\begin{proof}
\begin{equation}
    \begin{split}
        \Pr((x_i^\text{lower}, x_i^\text{upper}) \neq (y_i^\text{lower}, y_i^\text{upper})) = &\\
         \E [\1 _{(x_i^\text{lower}, x_i^\text{upper}) \neq (y_i^\text{lower}, y_i^\text{upper}) } ]= &\\
         \E_{\Lambda_i\sim \gD_i}[\E [\1 _{(x_i^\text{lower}, x_i^\text{upper}) \neq (y_i^\text{lower}, y_i^\text{upper}) } | \Lambda_i  ]]= &\\
         \E_{\Lambda_i\sim \gD_i}[\Pr[(x_i^\text{lower}, x_i^\text{upper}) \neq (y_i^\text{lower}, y_i^\text{upper})  | \Lambda_i  ]]= &\\
         \E_{\Lambda_i\sim \gD_i}[\min\left(\frac{\delta_i}{\Lambda_i}, 1\right)]= &\\
          \E_{\Lambda_i\sim \gD_i}[\frac{\delta_i}{\Lambda_i} \cdot \1_{\Lambda_i > \delta_i}] +       
\E_{\Lambda_i\sim \gD_i}[1 \cdot \1_{\Lambda_i \leq \delta_i}]= &\\        
          \E_{\Lambda_i\sim \gD_i}[\frac{\delta_i}{\Lambda_i} \cdot \1_{\Lambda_i > \delta_i}] +       
\E_{\Lambda_i\sim \gD_i}[1 \cdot \1_{\Lambda_i \leq \delta_i}]= &\\
\delta_i \E_{\gD_i}\left[ \frac{\1_{( \Lambda_i \in (\delta_i,1])}}{\Lambda_i} \right] + \Pr_{\gD_i}(\Lambda_i \leq \delta_i ) = &\Pr\nolimits^{\text{split}}_i(\delta_i)
    \end{split}
\end{equation}
Where we use the law of total expectation in the third line, Lemma \ref{lem:hinge} in the fifth line, and in the last line, we use that $\delta_i$ is finite, so $\delta_i/\infty = 0$
\end{proof}
\begin{theorempart}
Let $d(\cdot,\cdot)$ be an ECM defined by concave functions $g_1, ...,g_d$. Let $\gD$ and $f(\cdot)$ be the $\Lambda$-distribution and base function used for Variable-$\Lambda$ smoothing, respectively.  If $\forall i\in [d]$ and $\forall z \in [0,1]$, 
    \begin{equation}
        \Pr\nolimits^{\text{split}}_i(z) \leq g_i(z),
    \end{equation}
    then, the smoothed function $ p_{\gD,f}(\cdot) $ is 1-Lipschitz with respect to the metric $d(\cdot, \cdot)$.
\end{theorempart}
\begin{proof}
 Let $\vx,\vy \in [0,1]^d$ be two points. For each dimension $i$, let $\delta_i := |x_i-y_i| $.  By union bound:
\begin{equation} \label{eq:union_bound}
\begin{split}
       \Pr_{\vs}( (\vx^\text{lower},  \vx^\text{upper}) \neq (\vy^\text{lower},  \vy^\text{upper})) & =\\
       \Pr_\vs \left[\bigcup_{i=1}^d (x^\text{lower}_i,  x^\text{upper}_i) \neq (y^\text{lower}_i,  y^\text{upper}_i)\right] 
      &\leq \\
      \sum_{i=1}^d  \Pr\nolimits^{\text{split}}_i(\delta_i)  &\leq \\  \sum_{i=1}^d  g_i(\delta_i)   &= d(\vx,\vy)
\end{split}
\end{equation}
Then:
\begin{equation}
    \begin{split}
        &|p_{\gD,f}(\vx)- p_{\gD,f}(\vy)|\\ &= \left|\mathop{\E}_{\vs}\left[ f(\vx^\text{lower}, \vx^\text{upper})\right] - \mathop{\E}_{\vs}\left[ f(\vy^\text{lower}, \vy^\text{upper})\right]\right| \\&=
         \left|\mathop{\E}_{\vs}\left[ f(\vx^\text{lower}, \vx^\text{upper}) - f(\vy^\text{lower}, \vy^\text{upper})\right]\right| \\&=
          \Bigg|\Pr_{\vs}( (\vx^\text{lower},  \vx^\text{upper}) \neq (\vy^\text{lower},  \vy^\text{upper}))\mathop{\E}_{\vs}
          \left[ f(\vx^\text{lower}, \vx^\text{upper}) - f(\vy^\text{lower}, \vy^\text{upper}) |(\vx^\text{lower},  \vx^\text{upper}) \neq (\vy^\text{lower},  \vy^\text{upper})\right]
          \\&+ 
          \Pr_{\vs}( (\vx^\text{lower},  \vx^\text{upper}) = (\vy^\text{lower},  \vy^\text{upper}))\left[ f(\vx^\text{lower}, \vx^\text{upper}) - f(\vy^\text{lower}, \vy^\text{upper}) |(\vx^\text{lower},  \vx^\text{upper}) = (\vy^\text{lower},  \vy^\text{upper})\right]
          \Bigg| \\
              \end{split}
\end{equation}
Because $\mathop{\E}_{\vs}\left[ f(\vx^\text{lower}, \vx^\text{upper}) - f(\vy^\text{lower}, \vy^\text{upper}) | (\vx^\text{lower},  \vx^\text{upper}) = (\vy^\text{lower},  \vy^\text{upper})\right]$ is zero, we have:
\begin{equation}
    \begin{split}
          & |p_{\gD,f}(\vx)- p_{\gD,f}(\vy)|\\
         & =\Pr_{\vs}( (\vx^\text{lower},  \vx^\text{upper}) \neq (\vy^\text{lower},  \vy^\text{upper})) \left|\mathop{\E}_{\vs}
          \left[ f(\vx^\text{lower}, \vx^\text{upper}) - f(\vy^\text{lower}, \vy^\text{upper}) |(\vx^\text{lower},  \vx^\text{upper}) \neq (\vy^\text{lower},  \vy^\text{upper})\right]
         \right| \\
         &\leq d(\vx,\vy) \cdot 1
    \end{split}
\end{equation}
In the last step, we use Equation \ref{eq:union_bound} and the assumption that $f(\cdot, \cdot) \in [0,1]$. Therefore, by the definition of Lipschitz-continuity,  $p_{\gD,f}$ is  1-Lipschitz with respect to $d(\cdot,\cdot)$.

\end{proof}
\begin{theorempart}
Suppose $g_i$ is continuous and twice-differentiable on the interval $(0,1]$. Let $\gD_i$ be constructed as follows:
    \begin{itemize}
        \item On the interval $(0,1)$, $\Lambda_i$ is distributed continuously, with pdf function:
        \begin{equation}
            \text{pdf}_{\Lambda_i}(z) = -z g_i''(z)
        \end{equation}
        \item $\Pr(\Lambda_i = 1) = g_i'(1)$
        \item $\Pr(\Lambda_i = \infty) = 1- g_i(1)$
    \end{itemize}
    then, \begin{equation}
         \Pr\nolimits_i^{\text{split}}(z) = g_i(z)\,\,\,\,\,\forall z \in [0,1].
    \end{equation}
    If all $\gD_i$ are constructed this way, then the conclusion of part (b) above applies.
\end{theorempart}
\begin{proof}
We first show that this is in fact a normalized probability distribution:
\begin{equation}
    \begin{split}
        \int_0^1\text{pdf}_{\Lambda_i}(z) dz + \Pr(\Lambda_i = 1) + \Pr(\Lambda_i = \infty) &=\\
        \int_0^1 -zg_i''(z) dz + g_i'(1)+ 1 - g_i(1) &=\\
        -\left(1\cdot g_i'(1) - 0\cdot g_i'(0) - \int_0^1 1\cdot g_i'(z) dz\right)  + g_i'(1)+ 1 - g_i(1) &=\\
        - g_i'(1)  + \int_0^1  g_i'(z) dz  + g_i'(1)+ 1 - g_i(1) &=\\  
         g_i(1)  -g_i(0) +  1 - g_i(1) &= 1\\
    \end{split}
\end{equation}
Where we use integration by parts in the third line, and the fact that $g_i(0) = 0$ in the last line.

We now show that $\Pr\nolimits_i^{\text{split}}(z) = g_i(z)$ in the special case of $z= 1$:
\begin{equation}
    \begin{split}
        \Pr\nolimits^{\text{split}}_i(1) &= \Pr_{\gD_i}(\Lambda_i \leq 1) + 1 \E_{\gD_i}\left[ \frac{\1_{( \Lambda_i \in (1,1])}}{\Lambda_i} \right]\\
        &= \Pr_{\gD_i}(\Lambda_i \leq 1)\\
        &= 1-  \Pr_{\gD_i}(\Lambda_i = \infty)\\
          &= 1- (1-g_i(1)) = g_i(1)\\   
    \end{split}
\end{equation}
Where in the second line, we use that $(1,1]$ represents the empty set, so the term in the expectation is always zero.

Now, we handle the remaining case of $z\in [0,1)$:
\begin{equation}
    \begin{split}
        \Pr\nolimits^{\text{split}}_i(z) &= \Pr_{\gD_i}(\Lambda_i \leq z) + z \E_{\gD_i}\left[ \frac{\1_{( \Lambda_i \in (z,1])}}{\Lambda_i} \right]\\
        &= \int_0^z\text{pdf}_{\Lambda_i}(w) dw + z \left[ \int_z^1  \text{pdf}_{\Lambda_i}(w) \cdot \frac{1}{w} dw + \Pr(\Lambda = 1) \frac{1}{1} \right]\\
        &= \int_0^z -wg_i''(w) dw + z \left[ \int_z^1  -wg_i''(w) \cdot \frac{1}{w} dw + g_i'(1) \right] \\
        &= -\left[zg_i'(z) -0\cdot g_i'(0) -\int^z_0 1\cdot g_i'(w) dw \right] + z \left[ - \int_z^1  g_i''(w) dw + g_i'(1) \right] \\
        &= -\left[zg_i'(z)  -(g_i(z) -g_i(0)) \right] + z \left[ - [g_i'(1) -g_i'(z)] + g_i'(1) \right] \\
        &= -zg_i'(z)  +g_i(z)  - zg_i'(1) +zg_i'(z) + zg_i'(1) = g_i(z)
    \end{split}
\end{equation}
Where we use integration by parts in the fourth line, and the fact that $g_i(0) = 0$ in the last line. 

Now we have that $\Pr\nolimits_i^{\text{split}}(z) = g_i(z)\,\,\forall z \in [0,1]$, as desired. The final statement follows directly from Part b.
\end{proof}
\subsection{Proof of Corollary \ref{cor:betadis}}
\addtocounter{corollary}{-1}
\begin{corollary}
For all $p \in (0,1]$, $\alpha \in [1,\infty)$, if we perform Variable-$\Lambda$ smoothing with all $\Lambda_i$'s distributed identically (but not necessarily independently) as follows:
        \begin{equation}
            \begin{split}
            \Lambda_i \sim& \text{Beta}(p,1) , \text{   with prob.  } \frac{1-p}{\alpha}\\
            \Lambda_i =& 1, \text{   with prob.  } \frac{p}{\alpha} \\
            \Lambda_i =& \infty, \text{   with prob.  } 1- \frac{1}{\alpha}\\
            \end{split}
        \end{equation}
then, the resulting smoothed function will be $1/\alpha$-Lipschitz with respect to the $\ell^p_p$ metric
\end{corollary}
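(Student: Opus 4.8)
The plan is to reduce the corollary to Theorem~\ref{thm:vls}(c) by taking $g_i(z) := z^p/\alpha$ for every coordinate $i$ and checking that the $\Lambda_i$-distribution prescribed there coincides with the mixture in the statement.

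First I would verify the hypotheses. For $p \in (0,1]$ the function $g(z) = z^p/\alpha$ is increasing, satisfies $g(0) = 0$, has $g''(z) = \frac{p(p-1)}{\alpha}z^{p-2} \le 0$ on $(0,1]$ hence is concave, and maps $[0,1]$ into $[0,1]$ since $g(1) = 1/\alpha \le 1$ (this is exactly where $\alpha \ge 1$ is used); it is continuous and twice differentiable on $(0,1]$. So $d(\vx,\vy) := \sum_i g_i(\delta_i) = \frac{1}{\alpha}\ell^p_p(\vx,\vy)$ is an ECM and Theorem~\ref{thm:vls}(c) applies.

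Next I would compute the three pieces of the distribution from Theorem~\ref{thm:vls}(c). On $(0,1)$ the prescribed density is $\text{pdf}_{\Lambda_i}(z) = -z g''(z) = \frac{p(1-p)}{\alpha}z^{p-1} = \frac{1-p}{\alpha}\cdot\left(p z^{p-1}\right)$, and $p z^{p-1}$ is precisely the density of $\text{Beta}(p,1)$ on $(0,1)$; so the continuous part carries total mass $\frac{1-p}{\alpha}$ and, conditioned on it, $\Lambda_i \sim \text{Beta}(p,1)$. The atom at $1$ has mass $g'(1) = p/\alpha$ and the atom at $\infty$ has mass $1 - g(1) = 1 - 1/\alpha$, and these three masses sum to $1$. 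This is exactly the mixture in the corollary, so Theorem~\ref{thm:vls}(c) yields $\Pr\nolimits^{\text{split}}_i(z) = z^p/\alpha$ for all $z \in [0,1]$ and all $i$.

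Finally, Theorem~\ref{thm:vls}(b) gives that $p_{\gD,f}$ is $1$-Lipschitz with respect to $d(\cdot,\cdot) = \frac{1}{\alpha}\ell^p_p(\cdot,\cdot)$, which is the same as being $1/\alpha$-Lipschitz with respect to $\ell^p_p$. I expect no real obstacle: the only slightly delicate point is recognizing the continuous part of $\text{pdf}_{\Lambda_i}$ as a rescaled $\text{Beta}(p,1)$ density, together with noting the degenerate case $p = 1$ (where the $\text{Beta}$ component has zero weight and one recovers Theorem~\ref{thm:old} with $\Lambda = \alpha$); all remaining manipulations are bookkeeping.
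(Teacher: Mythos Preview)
Your proposal is correct and follows essentially the same route as the paper's own proof: set $g_i(z)=z^p/\alpha$, verify the ECM hypotheses (noting $\alpha\ge 1$ is what keeps $g_i$ in $[0,1]$), plug into Theorem~\ref{thm:vls}(c) to identify the three pieces of the $\Lambda_i$-distribution (recognizing $-zg''(z)$ as $\tfrac{1-p}{\alpha}$ times the $\text{Beta}(p,1)$ density), and then pass from $1$-Lipschitz in $d=\tfrac{1}{\alpha}\ell_p^p$ to $1/\alpha$-Lipschitz in $\ell_p^p$. Your extra remarks on the $p=1$ degenerate case and the role of $\alpha\ge1$ are accurate and slightly more explicit than the paper's write-up.
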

\begin{proof}
We consider the ECM defined as $\forall i ,\,\,g_i(z) = \frac{z^p}{\alpha}$. One can easily verify that this is a valid ECM, and that it is twice-differentiable on $(0,1]$. 

We then apply Theorem \ref{thm:vls}-c:
 \begin{itemize}
        \item On the interval $(0,1)$, we distribute $\Lambda_i$  continuously, with pdf function:
        \begin{equation}
            \text{pdf}_{\Lambda_i}(z) = -z g_i''(z) = \frac{-p(p-1)z^{p-1}}{\alpha} = \frac{1-p}{\alpha} \cdot pz^{p-1} = \frac{1-p}{\alpha} \cdot \text{pdf}_{\text{Beta}(p,1)}(z) 
        \end{equation}
        \item $\Pr(\Lambda_i = 1) = g_i'(1) =  \frac{p \cdot 1^{p-1}}{\alpha} = \frac{p}{\alpha}$
        \item $\Pr(\Lambda_i = \infty) = 1- g_i(1) = 1 - \frac{1}{\alpha}$
    \end{itemize}

So distributing $\Lambda$ as stated in the Corollary will result in  $\Pr\nolimits_i^{\text{split}}(z) = g_i(z)\,\,\forall z \in [0,1]$, and therefore the resulting smoothed function will be 1-Lipschitz w.r.t.  the ECM. Then, from the definition of Lipschitzness and of the ECM, we have, for all $\vx$, $\vy$:
\begin{equation}
   | p_{\gD, f}(\vx)  -p_{\gD, f}(\vy) | \leq \sum^d_{i=1} \frac{|x_i-y_i|^p}{\alpha}  = \frac{1}{\alpha} \ell^p_p(\vx,\vy)
\end{equation}
So $p_{\gD, f}$ is also $1/\alpha$-Lipschitz w.r.t. the $\ell^p_p$ metric.
\end{proof}
\subsubsection{$\alpha < 1$ Case for Corollary \ref{cor:betadis}} \label{sec:alpha_lt_1}
In a footnote in the main text, we mentioned that this technique cannot be applied directly to the $\alpha < 1$ case. To explain, note that taking 
\begin{equation} \label{eq:smallalphaecmbad}
    g_i(z) := \frac{z^p}{\alpha}, \,\,\, \forall i
\end{equation}
with $\alpha < 1$ is not a properly-defined ECM, because $g_i \not \in [0,1] \to [0,1]$: for example, $g_i(1) = 1/\alpha > 1$. However, for the purpose of building a Lipschitz classifier with range $[0,1]$, we can instead define:
\begin{equation} \label{eq:smallalphaecm}
    g_i(z) := \min(\frac{z^p}{\alpha}, 1), \,\,\, \forall i
\end{equation}
This is a proper ECM. Furthermore, for functions $p(\vx) \in [0,1]^d \to [0,1]$, it is equivalent to be 1-Lipschitz with respect to the ECM defined above in Equation \ref{eq:smallalphaecm} and to be  1-Lipschitz with respect to the ``improper'' ECM defined in Equation \ref{eq:smallalphaecmbad}.
To show that 1-Lipschitzness with respect to Equation \ref{eq:smallalphaecm} 
implies  1-Lipschitzness with respect to Equation \ref{eq:smallalphaecmbad},
simply note that, $\forall \vx,\vy$:

\begin{equation}
     | p(\vx)  -p(\vy) | \leq \sum^d_{i=1} \min (\frac{|x_i-y_i|^p}{\alpha} , 1)   \leq \sum^d_{i=1} \frac{|x_i-y_i|^p}{\alpha} 
\end{equation}
To show the opposite direction, consider a function $p$ which is 1-Lipschitz w.r.t. Equation \ref{eq:smallalphaecmbad}, and note that  $\forall \vx,\vy$, either:
\begin{itemize}
    \item $\exists i: \frac{|x_i-y_i|^p}{\alpha}  > 1$. Then $d(\vx,\vy) \geq 1$ for both metrics, so the 1-Lipschitz constraint is vacuously true regardless of the values of $p(\vx), p(\vy)$.
    \item  $\not \exists i: \frac{|x_i-y_i|^p}{\alpha}  > 1$.  Then 
    \begin{equation}
     | p(\vx)  -p(\vy) |   \leq \sum^d_{i=1} \frac{|x_i-y_i|^p}{\alpha}  = \sum^d_{i=1} \min (\frac{|x_i-y_i|^p}{\alpha} , 1)  
\end{equation}

Therefore, we can consider the ECM in Equation \ref{eq:smallalphaecm} to derive an appropriate Lipschitz constraint for the $\ell^p_p$ metric. However, note that this is not twice-differentiable, so Theorem \ref{thm:vls}-c does not directly apply. We can however derive an ad-hoc distribution $\gD_i$ such that, according to Theorem \ref{thm:vls}-a, $\Pr\nolimits^{\text{split}}_i(z) = g_i(z), \,\,\, \forall z,i$.

In particular, we use:
 \begin{itemize}
        \item On the interval $(0, \alpha^{1/p})$, we distribute $\Lambda_i$  continuously, with pdf function:
        \begin{equation}
            \text{pdf}_{\Lambda_i}(z) = \frac{1-p}{\alpha} \cdot pz^{p-1} 
        \end{equation}
        \item $\Pr(\Lambda_i = \alpha^{1/p}) = p$
    \end{itemize}
\end{itemize}

We first show that $\Pr\nolimits_i^{\text{split}}(z) = g_i(z)$ in the  case of $z \geq \alpha^{1/p}$:
\begin{equation}
    \begin{split}
        \Pr\nolimits^{\text{split}}_i(z) &= \Pr_{\gD_i}(\Lambda_i \leq z) + 1 \E_{\gD_i}\left[ \frac{\1_{( \Lambda_i \in (z,1])}}{\Lambda_i} \right]\\
        &= \Pr_{\gD_i}(\Lambda_i \leq 1) + 0\\
        &= 1 = \min (\frac{z^p}{\alpha} , 1)  = g_i(z)\\
    \end{split}
\end{equation}

Now, we handle the remaining case of $z\in [0,\alpha^{1/p})$:
\begin{equation}
    \begin{split}
        \Pr\nolimits^{\text{split}}_i(z) &= \Pr_{\gD_i}(\Lambda_i \leq z) + z \E_{\gD_i}\left[ \frac{\1_{( \Lambda_i \in (z,1])}}{\Lambda_i} \right]\\
        &= \int_0^z\text{pdf}_{\Lambda_i}(w) dw + z \left[ \int_z^{\alpha^{1/p}}  \text{pdf}_{\Lambda_i}(w) \cdot \frac{1}{w} dw + \Pr(\Lambda = \alpha^{1/p}) \frac{1}{\alpha^{1/p}} \right]\\
        &= \int_0^z  \frac{1-p}{\alpha} \cdot pw^{p-1}  dw + z \left[ \int_z^{\alpha^{1/p}}  \frac{1-p}{\alpha} \cdot pw^{p-1}  \frac{1}{w} dw +  \frac{p}{\alpha^{1/p}} \right]\\
        &=  \frac{1-p}{\alpha} z^p + z \left[ \frac{1}{\alpha} (pz^{p-1} - p\alpha^{(p-1)/p} ) +  \frac{p}{\alpha^{1/p}} \right]\\
        &=  \frac{1-p}{\alpha} z^p + \frac{z}{\alpha} (pz^{p-1})\\
            &=  \frac{ z^p}{\alpha} = g_i(z)\\    
    \end{split}
\end{equation}
So we have that $\Pr\nolimits_i^{\text{split}}(z) = g_i(z)\,\,\forall z \in [0,1]$, as desired.
\subsection{Theorem \ref{thm:quantizedzls}} \label{sec:proof_derand}
This is the ``quantized'' form of Theorem \ref{thm:vls}. In order to introduce it, we need to define a quantized from of ECMs, as well as a quantized form of our smoothing method:
\stepcounter{definition}
\stepcounter{definition}
\begin{definition}[Quantized Elementwise-concave metric (QECM)]
For any $\vx,\vy$, let $\delta_i := |x_i-y_i|$. A quantized elementwise-concave metric (QECM) is a metric on $[0,1]_{(q)}^d$ in the form:
\begin{equation}
    d(\vx,\vy) := \sum_{i=1}^d g_i(\delta_i),
\end{equation}
where ${g_1, ...,g_d} \subset [0,1]_{(q)} \to [0,1]$ are increasing, concave functions with $g_i(0) = 0$.
\end{definition}
\begin{definition}[Quantized Variable-$\Lambda$ smoothing] 
For any $f: [0,1]^d \times [0,1]^d   \rightarrow [0,1]$, and distribution $\gD=\{\gD_1,...\gD_d\}$, such that each $\gD_i$ has support $[1/q,1]_{(q)} \cup \{\infty\}$, let:
\begin{equation}
    \Lambda_i \sim \gD_i\\
\end{equation}
If $\Lambda_i = \infty$, then $x^\text{upper}_i := 1$,  $x^\text{lower}_i := 0$, otherwise:
\begin{align}
 s_i &\sim \gU(0,\Lambda_i)_{(q)}\\ 
x^\text{upper}_i &:= \min(\Lambda_i \ceil{\frac{x_i - s_i}{\Lambda_i}}+s_i, 1) \\
x^\text{lower}_i &:= \max(\Lambda_i \ceil{\frac{x_i - s_i}{\Lambda_i}} +s_i- \Lambda_i, 0)\\
\end{align}
The quantized smoothed function $p_{D,f} \in [0,1]^d_{(q)} \to [0,1]$ is defined as:
\begin{equation}
    p_{\gD,f}(\vx) :=\mathop{\E}_{\vs}\left[ f(\vx^\text{lower}, \vx^\text{upper})\right]. 
\end{equation}
Note that we make no assumptions about the joint distributions of $\Lambda$ or of $\vs$.
\end{definition}

Before we state and prove each part of the theorem, we will need a ``quantized'' form of Lemma \ref{lem:hinge}:
Note again that we closely follow the proof of Corollary 1 in \cite{Levine2021ImprovedDS}, which implicitly contains the same result.
\begin{lemma} \label{lem:hingequant}
For any $\Lambda_i  \in [1/q,1]_{q} \cup \{\infty\}$, let $s_i \sim \gU(0,\Lambda_i)_{(q)}$. For any  $x_i,y_i \in [0,1]_{(q)}$, let $\delta_i := |x_i-y_i|$ and define $x^\text{upper}_i$, $x^\text{lower}_i$ as follows:
If $\Lambda_i = \infty$, then $x^\text{upper}_i := 1$,  $x^\text{lower}_i := 0$, otherwise:
\begin{align}
x^\text{upper}_i &:= \min(\Lambda_i \ceil{\frac{x_i - s_i}{\Lambda_i}}+s_i, 1) \\
x^\text{lower}_i &:= \max(\Lambda_i \ceil{\frac{x_i - s_i}{\Lambda_i}} +s_i- \Lambda_i, 0)
\end{align}
and define $y^\text{upper}_i$,  $y^\text{lower}_i$ similarly. Then:
\begin{equation}
    \Pr_{s_i}( (x^\text{lower}_i,  x^\text{upper}_i) \neq (y^\text{lower}_i,  y^\text{upper}_i)) = \min\left(\frac{\delta_i}{\Lambda_i}, 1\right)
\end{equation}
\end{lemma}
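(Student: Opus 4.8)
The plan is to prove Lemma~\ref{lem:hingequant} by adapting the proof of Lemma~\ref{lem:hinge} to the quantized setting, carefully tracking how counting arguments over the discrete support of $s_i$ replace the Lebesgue-measure arguments of the continuous case. As before, I would split into three regimes: $\Lambda_i = \infty$, $\Lambda_i \in [1/q,1]_{(q)}$ with $\delta_i/\Lambda_i \geq 1$, and $\Lambda_i \in [1/q,1]_{(q)}$ with $\delta_i/\Lambda_i < 1$. The first two are immediate and identical to the continuous proof: if $\Lambda_i = \infty$ then $(x_i^\text{lower},x_i^\text{upper}) = (y_i^\text{lower},y_i^\text{upper}) = (0,1)$ deterministically, giving probability $0 = \delta_i/\infty$; and if $\delta_i \geq \Lambda_i$ then $\frac{x_i-s_i}{\Lambda_i}$ and $\frac{y_i-s_i}{\Lambda_i}$ differ by at least $1$, so their ceilings always differ and the probability is $1 = \min(\delta_i/\Lambda_i,1)$. (One should also note that when $\delta_i = 0$ the claim is trivial, so WLOG $x_i > y_i$ and $\delta_i = x_i - y_i > 0$.)

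The substantive case is $\Lambda_i \in [1/q,1]_{(q)}$ with $0 < \delta_i < \Lambda_i$. The first step is to re-establish the equivalence ``$(x_i^\text{lower},x_i^\text{upper}) \neq (y_i^\text{lower},y_i^\text{upper})$ iff $\ceil{\frac{x_i-s_i}{\Lambda_i}} \neq \ceil{\frac{y_i-s_i}{\Lambda_i}}$,'' and here the quantization actually helps: since $s_i \sim \gU_{(q)}(0,\Lambda_i)$ is supported on midpoints $\{\frac{1}{2q},\frac{3}{2q},\dots\}$ and $x_i,y_i$ are multiples of $1/q$, the quantities $\frac{x_i - s_i}{\Lambda_i}$ and $\frac{y_i - s_i}{\Lambda_i}$ are never integers (the numerator is an odd multiple of $\frac{1}{2q}$ while $\Lambda_i$ is a multiple of $\frac1q$), so the problematic boundary cases $s_i = 0$ or $s_i = 1$ of the continuous proof simply cannot occur — the "with probability 1" caveat becomes "always." The box-constraint argument ($\Lambda_i < 1$ forces at least one of $x_i^\text{lower}, x_i^\text{upper}$ away from its clamp, and $\Lambda_i = 1$ only fails on a measure-zero/non-occurring set) carries over essentially verbatim, now as a deterministic statement given the support of $s_i$.

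The remaining task is the exact count: show $\Pr_{s_i}\bigl(\ceil{\frac{x_i-s_i}{\Lambda_i}} \neq \ceil{\frac{y_i-s_i}{\Lambda_i}}\bigr) = \delta_i/\Lambda_i$. I would reuse the two-case analysis on $\ceil{\frac{x_i}{\Lambda_i}} - \ceil{\frac{y_i}{\Lambda_i}} \in \{0,1\}$ exactly as in Lemma~\ref{lem:hinge}; note that the continuous proof already phrases its key count in discrete language (``there are exactly $q(x_i - y_i) = q\delta_i$ values of $s_i$ for which this occurs, out of a total $\Lambda_i q$ values''), so the arithmetic transfers directly. The one point that needs genuine care — and which I expect to be the main obstacle — is verifying that the total number of admissible values of $s_i$ really is $\Lambda_i q$ and that, within each sub-case, the relevant open interval for $s_i/\Lambda_i$ captures exactly $q\delta_i$ (resp.\ $q(1-\delta_i)$) of the midpoint grid points, with no off-by-one error arising from the half-integer shift $+\frac{1}{2q}$ in the definition of $\gU_{(q)}$. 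Concretely, $s_i$ ranges over $\{\frac{1}{2q}, \frac{3}{2q}, \dots, \frac{2\Lambda_i q - 1}{2q}\}$, which has exactly $\Lambda_i q$ elements; since $x_i - y_i = \delta_i$ is an integer multiple of $1/q$, any half-open interval of length $\delta_i$ in $s_i$-space contains exactly $q\delta_i$ grid points regardless of alignment, because the grid spacing is $1/q$ and the shift by $\frac1{2q}$ guarantees no grid point lands on an endpoint. Once this counting is pinned down, collecting the cases gives probability $\delta_i/\Lambda_i$ in all of them, completing the proof.
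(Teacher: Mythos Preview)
Your proposal is correct and follows essentially the same approach as the paper's proof, which simply states that the argument is ``mostly identical'' to Lemma~\ref{lem:hinge} and then rewrites the two cases on $\ceil{\frac{x_i}{\Lambda_i}} - \ceil{\frac{y_i}{\Lambda_i}}$ with the inequality signs adjusted. Your explicit observation that the half-integer shift in $\gU_{(q)}$ keeps $s_i$ off the interval endpoints (so the strict/non-strict distinction is moot and the counting is exact) is a clearer justification than the paper provides, but the underlying structure is the same.
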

\begin{proof}
The proof is mostly identical to the proof of Lemma \ref{lem:hinge}, with minor differences occurring in the cases on $\ceil{\frac{x_i}{\Lambda_i} }-\ceil{\frac{y_i }{\Lambda_i} }$, which we show here for completeness:
    \begin{itemize}
        \item Case $\ceil{\frac{x_i}{\Lambda_i} }-\ceil{\frac{y_i}{\Lambda_i} } = 0$.  Then $\ceil{\frac{x_i - s_i}{\Lambda_i} }=\ceil{\frac{y_i - s_i}{\Lambda_i} }$ only in two cases:
        \begin{itemize}
            \item $\ceil{\frac{x_i-s_i}{\Lambda_i}} = \ceil{\frac{y_i-s_i}{\Lambda_i}} = \ceil{\frac{x_i}{\Lambda_i} }$ iff $\frac{s_i}{\Lambda_i} < \frac{y_i}{\Lambda_i} -(\ceil{\frac{y_i}{\Lambda_i} } -1)$   $ (\leq \frac{x_i}{\Lambda_i} -(\ceil{\frac{x_i}{\Lambda_i} } -1))$.
            \item  $\ceil{\frac{y_i-s_i}{\Lambda_i}} = \ceil{\frac{x_i-s_i}{\Lambda_i}} = \ceil{\frac{x_i}{\Lambda_i} }-1$ iff $\frac{s_i}{\Lambda_i} \geq \frac{x_i}{\Lambda_i} -(\ceil{\frac{x_i}{\Lambda_i} }-1)$ ( $\geq  \frac{y_i}{\Lambda_i} -(\ceil{\frac{y_i}{\Lambda_i} }-1)$).
        \end{itemize}
 Then  $\ceil{\frac{y_i - s_i}{\Lambda_i} } \neq \ceil{\frac{x_i - s_i}{\Lambda_i} }$ iff $\frac{y_i}{\Lambda_i} -(\ceil{\frac{x_i}{\Lambda_i} }-1) \leq \frac{s_i}{\Lambda_i} < \frac{x_i}{\Lambda_i} -(\ceil{\frac{x_i}{\Lambda_i} }-1)$, which occurs with probability $\frac{x_i - y_i}{\Lambda_i} = \frac{\delta_i}{\Lambda_i}$.
        
        \item Case $\ceil{\frac{x_i}{\Lambda_i} }-\ceil{\frac{y_i}{\Lambda_i} } = 1$.  Then $\ceil{\frac{x_i - s_i}{\Lambda_i} } \neq \ceil{\frac{y_i - s_i}{\Lambda_i} }$ only in two cases:
        \begin{itemize}
            \item $\ceil{\frac{y_i-s_i}{\Lambda_i}} = \ceil{\frac{y_i}{\Lambda_i} }$ and $\ceil{\frac{x_i-s_i}{\Lambda_i}} = \ceil{\frac{x_i}{\Lambda_i} } =  \ceil{\frac{y_i}{\Lambda_i} }+1$. This happens iff $\frac{s_i}{\Lambda_i} < \frac{x_i}{\Lambda_i} -\ceil{\frac{y_i}{\Lambda_i} }$ ($ \leq \frac{y_i}{\Lambda_i} -(\ceil{\frac{y_i}{\Lambda_i} }-1)$).
            \item $\ceil{\frac{y_i-s_i}{\Lambda_i}} = \ceil{\frac{y_i}{\Lambda_i} }-1$ and $\ceil{\frac{x_i-s_i}{\Lambda_i}} = \ceil{\frac{y_i}{\Lambda_i} }$. This happens iff $\frac{s_i}{\Lambda_i} \geq  \frac{y_i}{\Lambda_i} -(\ceil{\frac{y_i}{\Lambda_i} }-1)$ ($ \geq \frac{x_i}{\Lambda_i} -\ceil{\frac{y_i}{\Lambda_i} }$).
        \end{itemize}
        Therefore, $\ceil{\frac{y_i-s_i}{\Lambda_i}} = \ceil{\frac{x_i-s_i}{\Lambda_i}}$ iff:
        \begin{equation}
            \frac{x_i}{\Lambda_i} -\ceil{\frac{y_i}{\Lambda_i} } \leq \frac{s_i}{\Lambda_i} < \frac{y_i}{\Lambda_i} -(\ceil{\frac{y_i}{\Lambda_i} }-1) 
        \end{equation}
        Which is:
        \begin{equation}
            \frac{y_i}{\Lambda_i} -\ceil{\frac{y_i}{\Lambda_i} }  + \frac{\delta_i}{\Lambda_i} \leq \frac{s_i}{\Lambda_i} < \frac{y_i}{\Lambda_i} -\ceil{\frac{y_i}{\Lambda_i} } + 1
        \end{equation}
        which occurs with probability $1-\frac{\delta_i}{\Lambda_i}$. Then $\ceil{\frac{y_i-s_i}{\Lambda_i}} \neq \ceil{\frac{x_i-s_i}{\Lambda_i}}$  with probability $\frac{\delta_i}{\Lambda_i}$.
    \end{itemize}
\end{proof}

We now state and prove Theorem 3:
\setcounter{theorempart}{0}
\begin{theorempart}
 Let $\gD$ and $f(\cdot)$ be the $\Lambda$-distribution and base function used for Quantized Variable-$\Lambda$ smoothing, respectively. Let $\vx,\vy \in [0,1]_{(q)}^d$ be two points. For each dimension $i$, let $\delta_i := |x_i-y_i| $. The probability that $(x_i^\text{lower}, x_i^\text{upper}) \neq (y_i^\text{lower}, y_i^\text{upper})$ is given by $\Pr^{\text{split}}_i(\delta_i)$, where:
    \begin{equation}
        \Pr\nolimits^{\text{split}}_i(z) := \Pr_{\gD_i}(\Lambda_i \leq z) + z \E_{\gD_i}\left[ \frac{\1_{( \Lambda_i \in (z,1])}}{\Lambda_i} \right]
    \end{equation}
\end{theorempart}
\begin{proof}
Identical to Theorem \ref{thm:vls}-a, except using Lemma \ref{lem:hingequant} in place of Lemma \ref{lem:hinge}.

\end{proof}
\begin{theorempart}
Let $d(\cdot,\cdot)$ be a QECM defined by concave functions $g_1, ...,g_d$. Let $\gD$ and $f(\cdot)$ be the $\Lambda$-distribution and base function used for Quantized Variable-$\Lambda$ smoothing, respectively.  If $\forall i\in [d]$ and $\forall z \in [0,1]_{(q)}$, 
    \begin{equation}
        \Pr\nolimits^{\text{split}}_i(z) \leq g_i(z),
    \end{equation}
    then, the smoothed function $ p_{\gD,f}(\cdot) $ is 1-Lipschitz with respect to the metric $d(\cdot, \cdot)$.
\end{theorempart}
\begin{proof}
Identical to Theorem \ref{thm:vls}-b, except assuming $\vx,\vy\in [0,1]_{(q)}^d$
\end{proof}
\begin{theorempart}
    If $\gD_i$ is constructed as follows:
    \begin{itemize}
        \item On the interval $[\frac{1}{q},\frac{q-1}{q}]_{(q)}$, $\Lambda_i$ is distributed as:
        \begin{equation} 
               \Pr(\Lambda_i = z) = -qz \left[g_i\left(z-\frac{1}{q}\right)+ g_i\left(z+ \frac{1}{q}\right) - 2g_i(z) \right]  \,\,\,\, \forall z \in \left[\frac{1}{q},\frac{q-1}{q} \right]_{(q)}        
        \end{equation} 
        \item $\Pr(\Lambda_i = 1) = q \left[g_i(1) - g_i(\frac{q-1}{q})\right]$
        \item $\Pr(\Lambda_i = \infty) = 1- g_i(1)$
    \end{itemize}
    then \begin{equation}
         \Pr\nolimits_i^{\text{split}}(z) = g_i(z),\,\,\,\,\,\forall z \in [0,1]_{(q)}.
    \end{equation}
\end{theorempart}
\begin{proof}
We first show that this is in fact a normalized probability distribution:
\begin{equation}
    \begin{split}
        \sum_{j=1}^{q-1} \Pr\left(\Lambda_i = \frac{j}{q}\right)+ \Pr(\Lambda_i = 1) + \Pr(\Lambda_i = \infty) &=\\
          \sum_{j=1}^{q-1} -j\left[ g_i\left(\frac{j-1}{q}\right) + g_i\left(\frac{j+1}{q}\right)  - 2 g_i\left(\frac{j}{q}\right) \right] + q \left[g_i(1) - g_i\left(\frac{q-1}{q}\right)\right] +1- g_i(1) &=\\  
                    2 \sum_{j=1}^{q-1} j  g_i\left(\frac{j}{q}\right) - \sum_{j=0}^{q-2} (j+1)  g_i\left(\frac{j}{q}\right) - \sum_{j=2}^{q} (j-1)  g_i\left(\frac{j}{q}\right)
                    + q \left[g_i(1) - g_i\left(\frac{q-1}{q}\right)\right] +1- g_i(1) &=\\ 
                    \sum_{j=2}^{q-2} (2j - (j+1) - (j-1)) g_i\left(\frac{j}{q}\right) - g_i(0) + (2 -2) g_i\left(\frac{1}{q}\right) + (2(q-1) &\\- (q-2))g_i\left(\frac{q-1}{q}\right) -(q-1) g_i(1) + q \left[g_i(1) - g_i\left(\frac{q-1}{q}\right)\right] +1- g_i(1) &= 1\\
    \end{split}
\end{equation}
Where we use the fact that $g_i(0) = 0$ in the last line.

We now show that $\Pr\nolimits_i^{\text{split}}(z) = g_i(z)$ in the special case of $z= 1$:
\begin{equation}
    \begin{split}
        \Pr\nolimits^{\text{split}}_i(1) &= \Pr_{\gD_i}(\Lambda_i \leq 1) + 1 \E_{\gD_i}\left[ \frac{\1_{( \Lambda_i \in (1,1])}}{\Lambda_i} \right]\\
        &= \Pr_{\gD_i}(\Lambda_i \leq 1)\\
        &= 1-  \Pr_{\gD_i}(\Lambda_i = \infty)\\
          &= 1- (1-g_i(1)) = g_i(1)\\   
    \end{split}
\end{equation}
Where in the second line, we use that $(1,1]$ represents the empty set, so the term in the expectation is always zero.

Now, we handle the remaining case of $z\in [0, (q-1)/q]_{(q)}$:
\begin{equation}
    \begin{split}
        &\Pr\nolimits^{\text{split}}_i(z) \\
        &= \Pr_{\gD_i}(\Lambda_i \leq z) + z \E_{\gD_i}\left[ \frac{\1_{( \Lambda_i \in (z,1])}}{\Lambda_i} \right]\\
        &= \sum_{j=1}^{qz} \Pr\left(\Lambda_i = \frac{j}{q}\right)
        + z \left[ \sum_{j=qz+1}^{q-1} \Pr\left(\Lambda_i = \frac{j}{q}\right) \cdot \frac{q}{j} + \Pr(\Lambda = 1) \frac{1}{1} \right]\\
        &= \sum_{j=1}^{qz} 
       -j  \left[g_i\left(\frac{j-1}{q}\right)+ g_i\left( \frac{j+1}{q}\right) - 2g_i\left( \frac{j}{q}\right) \right]
        \\&+ z \left[ \sum_{j=qz+1}^{q-1} 
       -j  \left[g_i\left(\frac{j-1}{q}\right)+ g_i\left( \frac{j+1}{q}\right) - 2g_i\left( \frac{j}{q}\right) \right]
        \cdot \frac{q}{j} + \ q \left[g_i(1) - g_i\left(\frac{q-1}{q}\right)\right] \right]\\
        &= \sum_{j=1}^{qz} 
       -j  \left[g_i\left(\frac{j-1}{q}\right)+ g_i\left( \frac{j+1}{q}\right) - 2g_i\left( \frac{j}{q}\right) \right]
        \\&+ qz\left[ \sum_{j=qz+1}^{q-1} 
       - \left[g_i\left(\frac{j-1}{q}\right)+ g_i\left( \frac{j+1}{q}\right) - 2g_i\left( \frac{j}{q}\right) \right]
         + \left[g_i(1) - g_i\left(\frac{q-1}{q}\right)\right] \right]\\
        &=  - \sum_{j=0}^{qz-1} 
       (j+1) g_i\left(\frac{j}{q}\right)- 
       \sum_{j=2}^{qz +1} 
       (j-1)g_i\left( \frac{j}{q}\right) + 2 \sum_{j=1}^{qz} 
       j g_i\left( \frac{j}{q}\right) 
        \\&+ qz\left[ 
       -\sum_{j=qz}^{q-2} g_i\left(\frac{j}{q}\right)- \sum_{j=qz+2}^{q} g_i\left( \frac{j}{q}\right) + 2\sum_{j=qz+1}^{q-1} g_i\left( \frac{j}{q}\right) 
         + g_i(1) - g_i\left(\frac{q-1}{q}\right)\right] \\    
&=  \sum_{j=2}^{qz-2} 
       (2j - (j+1)-(j-1)) g_i\left(\frac{j}{q}\right)
       - g_i(0) + (2-2) g_i\left(\frac{1}{q}\right) + (2qz-qz+1)g_i(z) - qz g_i\left(\frac{qz+1}{q}\right)
        \\&+ qz\left[ 
      (2-1-1)\sum_{j=qz+2}^{q-2} g_i\left(\frac{j}{q}\right) -g_i(z) +(2-1)g_i\left(\frac{qz+1}{q}\right) + (2-1)g_i\left(\frac{q-1}{q}\right)  - g_i(1)
         + g_i(1) - g_i\left(\frac{q-1}{q}\right)\right] \\   
&=  (qz+1)g_i(z) - qz g_i\left(\frac{qz+1}{q}\right)
        \\&+ qz\left[ 
      -g_i(z) +g_i\left(\frac{qz+1}{q}\right) \right] \\  
     &= g_i(z)
    \end{split}
\end{equation}
Where we use the fact that $g_i(0) = 0$ in the second to last line. 

Now we have that $\Pr\nolimits_i^{\text{split}}(z) = g_i(z)\,\,\forall z \in [0,1]_{(q)}$, as desired.

\end{proof}

\section{Drawbacks of the ``Global $\Lambda$'' Method} \label{sec:global_lambda}
In the main text, we briefly discuss using a global value for $\Lambda$ in order to help with derandomization, as follows:
\begin{equation}
    \begin{split}
        \Lambda &\sim \gD_{\cdot}\\
    s_i &\sim \gU(0,\Lambda) \,\,\,\,\forall i     
    \end{split}
\end{equation}
 There are several issues with this approach. We will focus our discussion on the $\ell_p^p$ metric, with $\gD_\cdot$ given as in Corollary \ref{cor:betadis}.

Firstly, notice that if $\alpha > 1$, we have that  $\Lambda = \infty$ with a nonzero probability $1-1/\alpha$: when $\Lambda = \infty$, then the entire vector $\vx^{\text{lower}}$ will be the zero vector, and the entire vector $\vx^{\text{upper}}$  will consist of entirely ones. Then the particular value of $f([0,...,0]^T,[1,...,1]^T )$ will be weighted with weight $1-1/\alpha$, and all other, meaningful values in the ensemble with have a combined weight of $1/\alpha$: the final value of the smoothed function $p_{\gD,f}$ will the differ from the fixed $f([0,...,0]^T,[1,...,1]^T )$  only by at most $1/\alpha$ at any  point. In other words, we essentially have a 1-Lipschitz function scaled by $1/\alpha$, rather than a $1/\alpha$- Lipschitz function.\footnote{Note that a similar observation was made in \cite{Levine2021ImprovedDS} about using a global value of $s_i$ for $\Lambda > 1$} 

 However, even in the $\alpha = 1$ case, the ``global $\Lambda$'' technique still underperforms the method we ultimately propose, as shown in Figure \ref{fig:global_lambda} in the main text.  One way to understand this is to note that the guarantee provided by this method is unnecessarily tight. In particular, as mentioned in the main text, the global $\Lambda$ method produces a smoothed function $p_{\gD, f}$ that is a weighed average of functions which are each  $1/\Lambda$-Lipschitz with respect to the $\ell_1$ norm, for various values of $\Lambda$, by Theorem \ref{thm:old}.  Let each of these functions be $p_{\Lambda, f}$, so that 
\begin{equation}
    p_{\gD, f} \ = \E_\gD [p_{\Lambda, f}]
\end{equation}
Note that for each $\Lambda$, by the Lipschitz guarantee and $[0,1]$ bounds on the range:
\begin{equation}
    p_{\Lambda, f}(\vx) -   p_{\Lambda, f}(\vy)  \leq \min\left(\frac{\|\vx-\vy\|_1}{\Lambda} , 1\right)
\end{equation}
However, note that:
\begin{equation}
    \begin{split}
     p_{\gD, f}(\vx) -   p_{\gD, f}(\vy) & = \\
     \E_{\Lambda\sim \gD}[ p_{\Lambda, f}(\vx) -   p_{\Lambda, f}(\vy) ] &\leq\\ 
         \E_{\Lambda\sim \gD}\left[\min\left(\frac{\|\vx-\vy\|_1}{\Lambda}, 1\right)\right]= &\\
          \E_{\Lambda\sim \gD}\left[\frac{\|\vx-\vy\|_1}{\Lambda} \cdot \1_{\Lambda > \|\vx-\vy\|_1}\right] +       
\E_{\Lambda\sim \gD}[1 \cdot \1_{\Lambda \leq \|\vx-\vy\|_1}]= &\\        
          \E_{\Lambda\sim \gD}\left[\frac{\|\vx-\vy\|_1}{\Lambda} \cdot \1_{\Lambda > \|\vx-\vy\|_1}\right] +       
\E_{\Lambda\sim \gD}[1 \cdot \1_{\Lambda \leq \|\vx-\vy\|_1}]= &\\
\|\vx-\vy\|_1 \E_{\gD}\left[ \frac{\1_{( \Lambda \in (\|\vx-\vy\|_1,1])}}{\Lambda} \right] + \Pr_{\gD}(\Lambda \leq \|\vx-\vy\|_1 ) = &\Pr\nolimits^{\text{split}}_{\gD}(\|\vx-\vy\|_1)
    \end{split}
\end{equation}
Where $\Pr\nolimits_{\gD}^{\text{split}}$ is defined in terms of $\gD_\cdot$ exactly as in Theorem \ref{thm:quantizedzls}-a. Then, by the mechanics of Theorem \ref{thm:quantizedzls}-c and from the construction of $\gD_\cdot$, we have:
\begin{equation}
    p_{\gD, f}(\vx) -   p_{\gD, f}(\vy)  \leq \Pr\nolimits^{\text{split}}_{\gD}(\|\vx-\vy\|_1) = g_\cdot(\|\vx-\vy\|_1)
\end{equation}
In the case of $\ell^p_p$ metrics with $p<1$, this means:
\begin{equation}
    p_{\gD, f}(\vx) -   p_{\gD, f}(\vy)  \leq  \frac{\|\vx-\vy\|_1^p}{\alpha}
\end{equation}
But note that:
\begin{equation}
    p_{\gD, f}(\vx) -   p_{\gD, f}(\vy)  \leq  \frac{\|\vx-\vy\|_1^p}{\alpha} \leq \frac{\|\vx-\vy\|_p^p}{\alpha}
\end{equation}
In other words, we are imposing a tighter guarantee than necessary, which depends only on the $\ell_1$ distance between $\vx$ and $\vy$: the desired $\ell^p_p$ guarantee is everywhere at least as loose. So, while this technique technically works, it does not really respect the ``spirit'' of the fractional $\ell^p_p$ threat model.
\section{Designing $\gD_i$ for Derandomization using Mixed-Integer Linear Programming} \label{sec:milp}
As mentioned in Section \ref{sec:derand} in the main text, one challenge in the derandomization of our technique is to design a distribution $\gD_i$ such that all outcomes $(\Lambda_i, s_i)$ occur with a probability in the form $n/B$, where $n\in \sN$ is an integer, $B$ is a  constant integer, and additionally where:
\begin{equation} \label{eq:prapproxg}
     \Pr\nolimits_i^{\text{split}}(z) \approx g_i(z),\,\,\,\,\,\forall z \in [0,1]_{(q)}.
\end{equation}
However, strictly: 
\begin{equation}
     \Pr\nolimits_i^{\text{split}}(z) \leq  g_i(z),\,\,\,\,\,\forall z \in [0,1]_{(q)}.
\end{equation}
We first show that we can formulate  Equation \ref{eq:prapproxg} as a linear program in the case where we allow arbitrary probabilities for each value of $\Lambda$, and then show that we can convert it into a MILP to obtain probabilities in the desired form.

Note that we are working with the quantized form of Variable-$\Lambda$ smoothing: for convenience, we will therefore introduce the variables:

\begin{equation}
    g^j := g_i\left(\frac{j}{q}\right) \forall j \in [q]
\end{equation}
\begin{equation}
    v_j := \Pr\left(\Lambda_i = \left(\frac{j}{q}\right) \right) \forall j \in [q]
\end{equation}
Our distribution $\gD_i$ is then defined by the vector $\vv$: the probability that $\Lambda_i = \infty$ is determined by normalization ($\Pr(\Lambda_i = \infty) = 1-\Sigma_j v_j$) .

We make Equation \ref{eq:prapproxg}  rigorous by using the following objective:

\begin{equation}
\begin{split}
        &\text{minimize } \epsilon \text{  such that }\\
      &g_i(z)  - \epsilon \leq  \Pr\nolimits_i^{\text{split}}(z) \leq  g_i(z),\,\,\,\,\,\forall z \in [0,1]_{(q)}.
\end{split}
\end{equation}
Note that $\epsilon$ is a single scalar: we are attempting to achieve uniform convergence.
We can write $\Pr\nolimits_i^{\text{split}}(z)$  in the following form:
\begin{equation} \label{eq:prsplitlp}
    \begin{split}
        \Pr\nolimits_i^{\text{split}}(z) = &\\
        \Pr_{\gD_i}(\Lambda_i \leq z) + z \E_{\gD_i}\left[ \frac{\1_{( \Lambda_i \in (z,1])}}{\Lambda_i} \right] =&\\
        \sum_{j = 1}^{qz}  v_j + z\sum_{j=qz+1}^q \frac{v_j}{\left(\frac{j}{q}\right)} =&\\
         \sum_{j = 1}^{qz}  v_j + qz\sum_{j=qz+1}^q \frac{v_j}{j} \\
    \end{split}
\end{equation}
Then our optimization becomes (letting $k := qz$):

\begin{equation}
\begin{split}
        &\text{minimize } \epsilon \text{  such that }\\
      &g^k - \epsilon \leq   \sum_{j = 1}^{k}  v_j + k\sum_{j=k+1}^q \frac{v_j}{j}  \leq  g^k,\,\,\,\,\,\forall k \in [q].
\end{split}
\end{equation}

With additional constraints:
\begin{itemize}
    \item $v_j \geq 0,\,\, \forall j\in [q]$ (Probabilities are non-negative)
    \item $\sum_{j=1} ^{q} v_j \leq 1 $ (Normalization: recall that additional probability is assigned to $\Lambda = \infty$)
    \item $\epsilon \geq 0$
\end{itemize}
This linear program, with variables $\epsilon, \vv$, completely describes the problem of designing $\gD_i$. If $g_i(z)$ is concave (as it should be, by assumption), then this LP always has an optimal $\epsilon = 0$ solution, given in Theorem \ref{thm:quantizedzls}-c. (See the proof of that theorem in Appendix \ref{sec:proof_derand}). 

However, we now  want all outcomes to have probabilities in the form $n/B$. Note that for $\Lambda_i = j/q$, there are $j$ outcomes for $s_i$, each of which must have equal probabilities. We therefore need $\Lambda_i = j/q$ to occur with a probability in the form $\frac{nj}{B}$, for some integer $n$. We will then re-scale our parameters:
\begin{equation}
    w_j :=  \frac{B v_j}{j}  \,\,\,\forall j \in [q]
\end{equation}
Our optimization now becomes:
\begin{equation} \label{eq:MILP}
\begin{split}
        &\text{minimize } \epsilon \text{  such that }\\
      &g^k - \epsilon \leq   \sum_{j = 1}^{k}   \frac{j \cdot  w_j}{B} + k\sum_{j=k+1}^q \frac{w_j}{B}   \leq  g^k,\,\,\,\,\,\forall k \in [q].\\
      &w_j \in \sN\\
      & \sum_{j=1} ^{q} j \cdot w_j \leq B\\
      &\epsilon \geq 0
\end{split}
\end{equation}
This is a mixed-integer linear program, with variables $\epsilon, \vw$. Once solved, the desired distribution over $\Lambda$ can be read off from $\vw$. In practice, when using this method with $g_i(z) = z^p/\alpha$, we only solved the MILP directly for $\alpha = 1$, using budget $B=1000$: for larger $\alpha$, we used the fact that Equation \ref{eq:prsplitlp} is linear in $\vv$ to simply scale down $\Pr\nolimits_{\gD}^{\text{split}}(z)$ by scaling up $B$ as $B = 1000\alpha$, without changing the integer allocations of $\vw$: in practice, this just means adding additional $\Lambda_i = \infty$ outcomes to the list of possible outcomes that are uniformly selected from. Also, rather than optimizing over $\epsilon$, we held $\epsilon$ constant at 0.02, so that the problem became a feasibility problem, rather than an optimization problem. The results are shown in Figure \ref{fig:lambapproc} in the main text. Each of the two MILPs took $\approx$ 10 minutes or less to solve.

We can show that, with sufficiently large budget, arbitrarily close approximations can always be made. In particular, consider using the optimal real-valued solution from Theorem \ref{thm:quantizedzls}-c, and the simply rounding each $w_j$ down to integers. Because the coefficients on $w_j$'s in Equation \ref{eq:MILP} are all non-negative, the upper-bounds on these terms will all still be met. The only lower-bound, the $g^k - \epsilon$ term, will remain feasible because $\epsilon$ can be made arbitrarily large. Therefore, this rounding technique will not break feasibility. Now, let's look at optimality. Let $\tilde{w}_j$'s be the real-valued, optimal solutions, and  $w_j$'s be the rounded solution. Then we have:
\begin{equation}
          g^k - \epsilon \leq   \sum_{j = 1}^{k}   \frac{j \cdot  w_j}{B} + k\sum_{j=k+1}^q \frac{w_j}{B}  \leq  \sum_{j = 1}^{k}   \frac{j \cdot  \tilde{w}_j}{B} + k\sum_{j=k+1}^q \frac{\tilde{w}_j}{B}   =  g^k,\,\,\,\,\,\forall k \in [q].
\end{equation}
The tightest lower-bound on epsilon will be the constraint where:
\begin{equation}
        \epsilon = \left( \sum_{j = 1}^{k}   \frac{j \cdot  \tilde{w}_j}{B} + k\sum_{j=k+1}^q \frac{\tilde{w}_j}{B}  \right)-  \left( \sum_{j = 1}^{k}   \frac{j \cdot  w_j}{B} + k\sum_{j=k+1}^q \frac{w_j}{B}  \right)
\end{equation}
However, for each $j$, $\tilde{w}_j - w_j < 1$, so:
\begin{equation}
        \epsilon < \left( \sum_{j = 1}^{k}   \frac{j}{B} + k\sum_{j=k+1}^q \frac{1}{B}  \right) \leq \sum_{j = 1}^{q}   \frac{j}{B}  = \frac{q^2 + q}{2B}
\end{equation}
Therefore, with sufficiently large budget B, the error $\epsilon$ can be made arbitrarily small. 
\section{Deterministic $\ell_0$ Certificates} \label{sec:l0}
Consider the following ECM, parameterized by $\alpha$:
\begin{equation}
    g_i(z) := \begin{cases}
    0 \text{    if } z = 0\\
    \frac{1}{\alpha} \text{    otherwise}\\
    \end{cases}
\end{equation}
Note that the resulting metric $d(\vx,\vy)$ is in fact $\|\vx-\vy\|_0/\alpha$. However, because this is not a continuous function, we cannot apply Theorem \ref{thm:vls}-c directly. However, if we still want $\Pr^{\text{split}}_i(z) = g_i(z)$, we have two options:
\begin{itemize}
    \item Option 1: expand the support of $\gD_i$ to include $\Lambda_i = 0$, where, if $\Lambda_i = 0$, then $x_i^{\text{lower}} = x_i^{\text{upper}} = x_i$. We can then distribute $\Lambda_i$ as:
    \begin{equation}
    \Lambda_i = \begin{cases}
    0 \text{    with prob.  } \frac{1}{\alpha}\\
    \infty \text{    otherwise}\\
    \end{cases}
\end{equation}
It is easy to verify that in this case, $\Pr^{\text{split}}_i(z) = g_i(z)$. (In particular, if $x_i = y_i$, then $\Pr( (x^\text{lower}_i,  x^\text{upper}_i) \neq (y^\text{lower}_i,  y^\text{upper}_i)) = 0$; otherwise $\Pr( (x^\text{lower}_i,  x^\text{upper}_i) \neq (y^\text{lower}_i,  y^\text{upper}_i)) = \Pr(\Lambda = 0) = 1/\alpha$. See Figure \ref{fig:l0diag}.)
\begin{figure}
    \centering
    \includegraphics[width=0.6\textwidth]{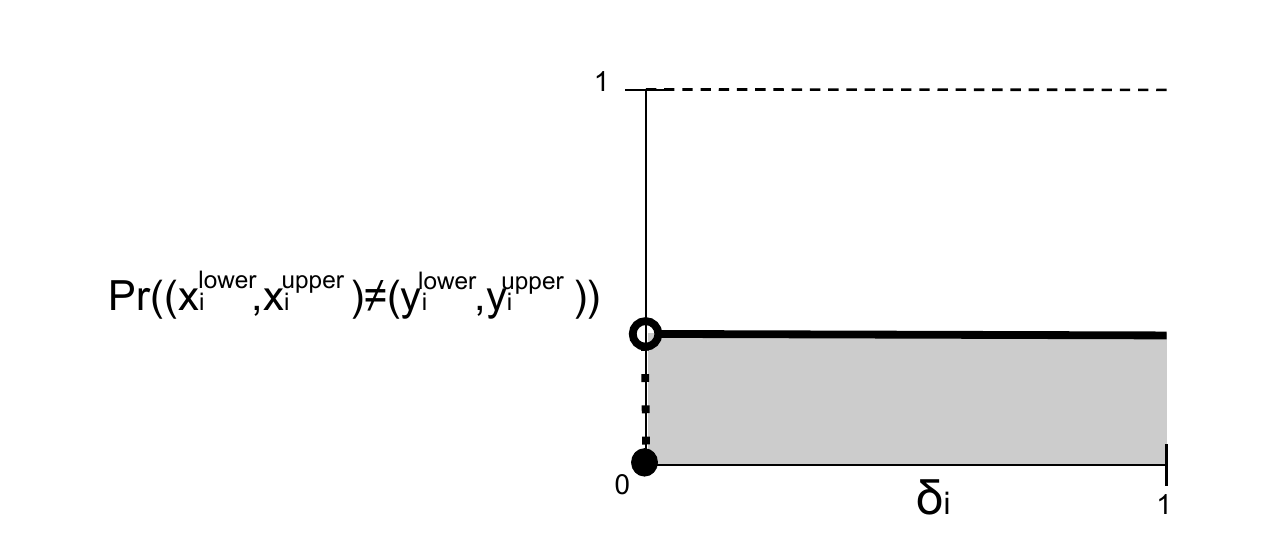}
    \caption{Diagram of the $\ell_0$ $g_i(z)$ function. }
    \label{fig:l0diag}
\end{figure}
\item Option 2: consider the quantized case. Then we can just apply Theorem \ref{thm:quantizedzls}-c directly, yielding
    \begin{equation}
    \Lambda_i = \begin{cases}
    1/q \text{    with prob.  } \frac{1}{\alpha}\\
    \infty \text{    otherwise}\\
    \end{cases}
\end{equation}
Note that with $\Lambda = 1/q$, the original value of the pixel is always preserved, with $x^\text{upper}_i = x_i + 0.5/q$, $x^\text{lower}_i = x_i - 0.5/q$.
\end{itemize}
In practice, we use Option 2 in our experiments, because we are using quantized image datasets (and for code consistency). However, either option will yield classifiers that $1/\alpha$-Lipschitz with respect to the  $\ell_0$ metric. 
If $\alpha$ is an integer (as in our experiments), then we only need $B = \alpha$ smoothing samples: each pixel is preserved ($\Lambda = 1/q$) in exactly one sample, and is ablated ($\Lambda = \infty$) in the other $\alpha-1$ samples. The choice of which pixels to retain in which samples should be arbitrary, but should remain fixed throughout training and testing. (This is a direct application of the ``fixed offset'' method mentioned in the paper).

In practice, this produces an algorithm which is very similar to the ``randomized ablation'' randomized $\ell_0$ certificate proposed in \cite{Levine2020RobustnessCF}: in both techniques, we are retaining some pixels unchanged while completely removing information about other pixels. In fact, this deterministic variant of ``randomized ablation'' was already proposed to provide provable robustness against poisoning attacks in \cite{levine2020deep}: in particular, the technique proposed for label-flipping poisoning attacks is basically identical, with the features being training-data labels rather than pixels: the idea is to train $\alpha$ separate models, each using a disjoint arbitrary subset of labels, and then take the consensus output at test time.  \cite{levine2020deep} note that the certificate is looser than that of \cite{Levine2020RobustnessCF}, due to the use of a union bound, however there are added benefits of determinism and using only a small number of smoothing samples (\cite{Levine2020RobustnessCF} uses 11,000 smoothing samples (1000 for prediction and 10,000 for bounding); in the case of \cite{levine2020deep}, each ``smoothing sample'' requires training a classifier).

Note that, on image data, there are two somewhat different definitions of ``$\ell_0$ adversarial attack'' which are often used: true $\ell_0$ attacks in the space of features, where each feature is a single color channel of a pixel value, and ``sparse'' attacks, where the attack magnitude signifies the number of pixel positions modified, but potentially all channels may be affected. Our method can be applied in both situations: to certify for ``sparse'' attacks, simply insure that $\Lambda_i = \Lambda_j$ if features $i,j$ are channels of the same pixel: then $\Pr( (x^\text{lower}_i,  x^\text{upper}_i) \neq (y^\text{lower}_i,  y^\text{upper}_i) \cup (x^\text{lower}_j,  x^\text{upper}_j) \neq (y^\text{lower}_j,  y^\text{upper}_j)) \leq 1/\alpha$ .
\begin{figure}
    \centering
    \includegraphics[width=\textwidth]{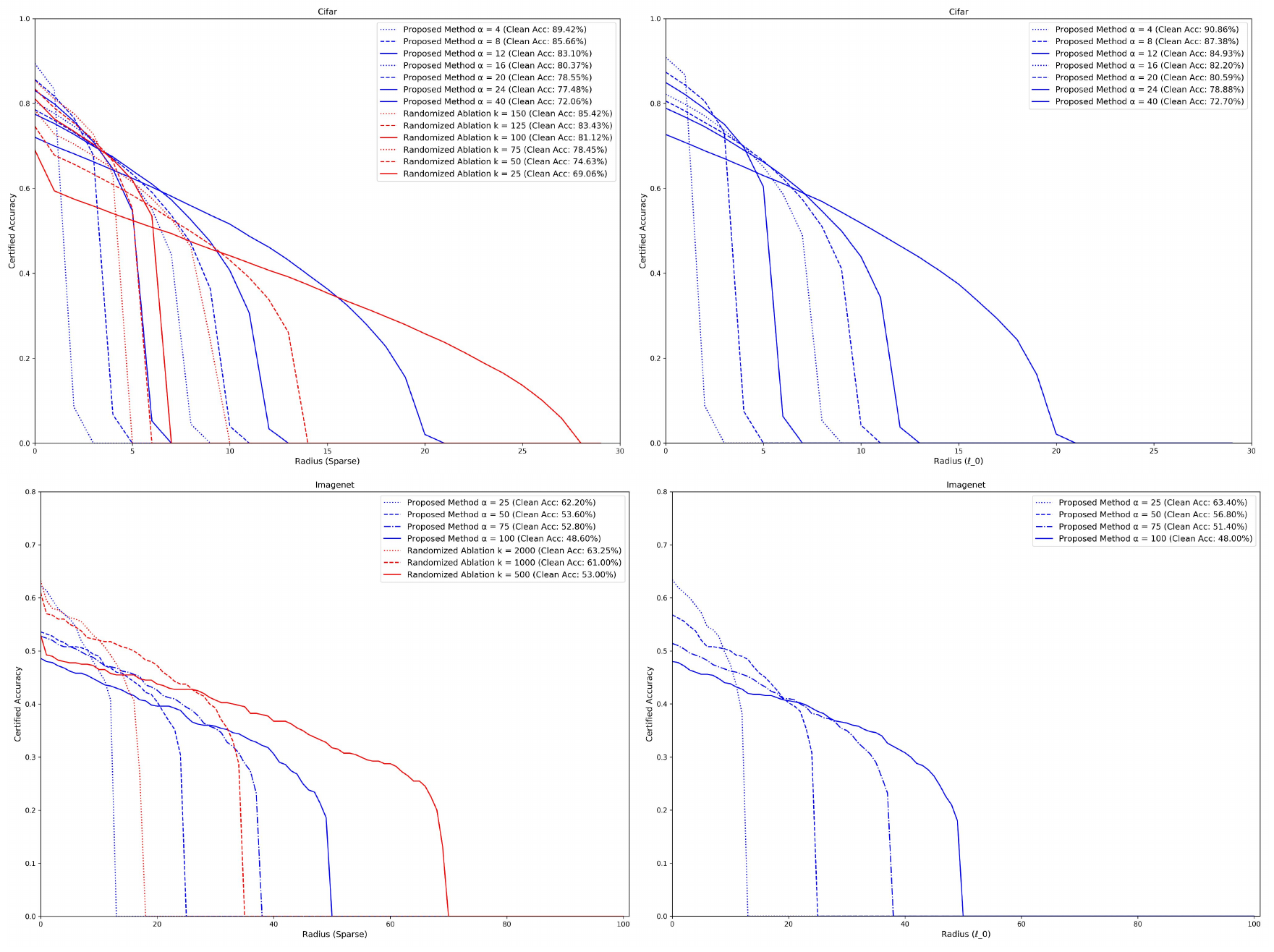}
    \caption{Sparse and $\ell_0$ certification results, on CIFAR-10 (top) and ImageNet (bottom). In the left column, we compare to \cite{Levine2020RobustnessCF}, a randomized method, where certificates were reported with 95\% confidence. Results are directly from that work: note that training times, model architectures, and parameters  may differ, in addition to the smoothing method. On ImageNet, we use a subset of 500 images from the validation set; the results from \cite{Levine2020RobustnessCF} are using a different random subset of 400 validation images, so this may cause some variance. The parameter $k$ is the number of pixels retained in each image in \cite{Levine2020RobustnessCF}:  because ImageNet has $50176$ pixels, the fraction of retained pixels is roughly 50000/k, which functionally corresponds to $\alpha$ in our model: it is the appropriate to compare $k= 2000$ with $\alpha = 25$, etc. For CIFAR-10, there are 1024 pixels, so a similar heuristic of $\alpha \approx 1000/k$ can be used. In the right column, we show certificates for $\ell_0$  attacks, where the attack budget represents the number of individual pixel channels, rather than whole pixels, attacked. \cite{Levine2020RobustnessCF} did not test for this threat model.}
    \label{fig:allcertsl0}
\end{figure}
In Figure \ref{fig:allcertsl0}, we compare the certificates generated by this deterministic ``sparse'' certificate to the results of \cite{Levine2020RobustnessCF}. While the reported certificates are somewhat worse, particularly on ImageNet, note that these are exact, rather than probabilistic certificates, and furthermore that the number of forward-passes required to certify is significantly reduced, leading to reduced certification times. For example, on ImageNet, the most computationally-intensive certification for the deterministic method used 100 forward-passes, and averaged 0.13 seconds / image for certification using a single GPU. By contrast, each randomized certification from \cite{Levine2020RobustnessCF} averaged 16 seconds, using four GPUs (note that this is around four times less efficient than expected, compared to the proposed derandomized method, based on the number of smoothing samples alone: other implementation differences must also be at play). We also provide certificates for ``$\ell_0$'' attacks, which \cite{Levine2020RobustnessCF} do not test. 
\section{Explicit Certification Procedure} \label{sec:cert_procedure}
In order to use our $\ell_p^p$ Lipschitz guarantee to generate $\ell_p$- norm certificates, we follow a procedure similar to the $\ell_1$ certificate from \cite{Levine2021ImprovedDS}. Concretely, for each class $c$, let $p_c(\vx)$ be the smoothed, $1/\alpha$-$\ell_p^p$-Lipschitz logit function that our algorithm produces. In our implementation, we have the \textit{base} classifier $f$ output ``hard'' classifications: $f_c(\vx) = 1$ if the base classifier classifies $\vx$ into class $c$, and zero otherwise. Therefore $p_c(\vx)$ can also be though of as the fraction of base classifier outputs with value $c$. 

If two points $\vx, \vy$ differ by at most $\delta$ in the $\ell_p$ ``norm'', then they must differ by at most  $\delta^p$ in the $\ell_p^p$ metric. Then by Lipschitz property, we have:
 
 \begin{equation}
     |p_c(\vx) - p_c(\vy)| \leq \frac{\delta^p}{\alpha}    \label{eq:appxcertmethod}
 \end{equation} 
Now, assume that $\vx$ is classified as class $c$ by the smoothed classifier ($c = \arg \max_{c'} p_{c'}(\vx)$). Let $c'$ be any other class. By algebra, we have:

\begin{equation}
  p_c(\vx) - p_{c'}(\vx) - |p_c(\vx) - p_c(\vy)|  - |p_{c'}(\vx) - p_{c'}(\vy)|  \leq p_c(\vy) - p_{c'}(\vy) 
\end{equation}

Therefore, using Equation \ref{eq:appxcertmethod}, we have:

\begin{equation}
  p_c(\vx) - p_{c'}(\vx) - \frac{2\delta^p}{\alpha}  \leq p_c(\vy) - p_{c'}(\vy)  \label{eq:appxcertmethod2}
\end{equation}
Then:
\begin{equation}
     \left(\frac{\alpha}{2} (p_c(\vx) - p_{c'}(\vx))\right)^{1/p} \geq \delta \implies p_c(\vy) \geq p_{c'}(\vy) 
\end{equation}
This means that the class is guaranteed not to change to $c'$ within an $\ell_p$ radius of $\left(\frac{\alpha}{2} (p_c(\vx) - p_{c'}(\vx))\right)^{1/p}$ of $\vx$. Computing the minimum of this quantity over all classes $c' \neq c$ gives the certified radius.

The above argument ignores the equality case: at radius $\delta$, the two class probabilities may still be equal, leading to an unclear classification result. To deal with this, we borrow a trick originally from \cite{levine2020randomized} (also used by \cite{Levine2021ImprovedDS}). Specifically, at classification time, we break ties deterministically using the class index: if  $p_c(\vx) = p_{c'}(\vx)$ and $c<c'$ then the class $c$ will be the final classification. In the case that $c<c'$, then $p_c(\vy) \geq p_{c'}(\vy) $ is a sufficient condition to ensure that class $c$ is chosen, so we can certify that the class $c$ will be chosen at all points up to and including radius $\delta = \left(\frac{\alpha}{2} (p_c(\vx) - p_{c'}(\vx))\right)^{1/p}$.

To deal with the other case, $c'<c$, we subtract any positive $\epsilon$ from both sides of Equation \ref{eq:appxcertmethod2}:
\begin{equation}
  p_c(\vx) - p_{c'}(\vx) - \epsilon  - \frac{2\delta^p}{\alpha}  \leq p_c(\vy) - p_{c'}(\vy)  - \epsilon 
\end{equation}
\begin{equation}
     \left(\frac{\alpha}{2} (p_c(\vx) - p_{c'}(\vx) - \epsilon )\right)^{1/p} \geq \delta \implies p_c(\vy) \geq p_{c'}(\vy) + \epsilon \implies p_c(\vy) > p_{c'}(\vy) 
\end{equation}
In our deterministic certification implementation, we use $\epsilon := 1/B$, where B is the number of (nonrandom) smoothing samples: this is the smallest difference possible between two values of $p_{cdot}(\cdot)$. Combining the two cases, we get the final form of our certificate:

\begin{equation}
     \min_{c': c'\neq c}\left[\left(\frac{\alpha}{2} \left(p_c(\vx) - p_{c'}(\vx) - \frac{\vone_{c'<c}}{B}\right)\right)^{1/p} \right] \geq \delta  \implies \vy\text{ is assigned class }c \,\,\,\,\,\,\,\,\,\forall \vx,\vy,\,\,\|\vx-\vy\|_p \leq \delta
\end{equation}

\section{Representations of Inputs} \label{sec:representations}
As we stated in the main text, we modified the architectures used for $f$ in order to accept both inputs $(\vx^{\text{lower}}, \vx^{\text{upper}})$, by doubling the number of input channels in the first layer. We tried a variety of alternative methods as well on CIFAR-10 for p=1/2:
\begin{itemize}
    \item `Center': using only a single input $\frac{\vx^{\text{lower}} + \vx^{\text{upper}}}{2}$, as in \cite{Levine2021ImprovedDS}, but with variable-$\Lambda$ smoothing.
    
    \item `Center Center': Same as `Center', but with channels duplicated.  This acted as an ablation study, to isolate the effect of the additional information of having both channels from the mere increase in network parameters from doubling the number of channels.
    \item `Center Error': Channels are $\frac{\vx^{\text{lower}} + \vx^{\text{upper}}}{2}$ and $\frac{\vx^{\text{upper}} - \vx^{\text{lower}}}{2}$.
    \item `Upper Lower': Channels are $\vx^{\text{upper}}$ and $\vx^{\text{lower}}$. This is the method presented the main text, and used in other experiments.
\end{itemize}
See Table \ref{tab:representations} for results. As would be anticipated, the general trend was:
\begin{equation}
    \text{L\&F (2021)} <  \text{`Center'} \approx \text{`Center Center'} < \text{`Center Error'} \approx \text{`Upper Lower'} 
\end{equation}

This tells us that Variable-$\Lambda$ smoothing has an advantage over \cite{Levine2021ImprovedDS} for p=1/2 certification, even if only the center of the interval is given to the base classifier. However, having full knowledge of the range of the interval clearly provides an added benefit.
\begin{table}[]
    \centering
    \begin{tabular}{|c|c|c|c|c|c|c|c|c|}
\hline
$\rho$&10&20&30&40&50&60&70&80\\
\hline
L\&F (2021)&42.69\%&35.04\%&28.89\%&23.46\%&18.81\%&13.76\%&8.38\%&1.27\%\\
(From $\ell_1$)&(60.42\%&(60.42\%&(60.42\%&(60.42\%&(60.42\%&(60.42\%&(60.42\%&(60.42\%\\
&@ $\alpha$=18)&@ $\alpha$=18)&@ $\alpha$=18)&@ $\alpha$=18)&@ $\alpha$=18)&@ $\alpha$=18)&@ $\alpha$=18)&@ $\alpha$=18)\\
\hline
L\&F (2021)&41.32\%&35.56\%&32.07\%&28.70\%&24.95\%&20.79\%&16.20\%&6.98\%\\
(From $\ell_1$)&(55.38\%&(50.11\%&(50.11\%&(50.11\%&(50.11\%&(50.11\%&(50.11\%&(50.11\%\\
(Stab. Training)&@ $\alpha$=12)&@ $\alpha$=18)&@ $\alpha$=18)&@ $\alpha$=18)&@ $\alpha$=18)&@ $\alpha$=18)&@ $\alpha$=18)&@ $\alpha$=18)\\
\hline
Center&49.83\%&42.26\%&36.54\%&31.10\%&25.65\%&19.93\%&13.53\%&2.68\%\\
&(68.35\%&(65.59\%&(65.59\%&(65.59\%&(65.59\%&(65.59\%&(65.59\%&(65.59\%\\
&@ $\alpha$=15)&@ $\alpha$=18)&@ $\alpha$=18)&@ $\alpha$=18)&@ $\alpha$=18)&@ $\alpha$=18)&@ $\alpha$=18)&@ $\alpha$=18)\\
\hline
Center&47.98\%&42.27\%&38.47\%&35.31\%&31.91\%&28.17\%&23.10\%&11.82\%\\
(Stab. Training)&(64.31\%&(56.96\%&(54.79\%&(54.79\%&(54.79\%&(54.79\%&(54.79\%&(54.79\%\\
&@ $\alpha$=9)&@ $\alpha$=15)&@ $\alpha$=18)&@ $\alpha$=18)&@ $\alpha$=18)&@ $\alpha$=18)&@ $\alpha$=18)&@ $\alpha$=18)\\
\hline
Center Center&49.78\%&42.15\%&36.15\%&31.17\%&25.49\%&19.87\%&13.21\%&2.55\%\\
&(66.06\%&(66.06\%&(66.06\%&(66.06\%&(66.06\%&(66.06\%&(66.06\%&(66.06\%\\
&@ $\alpha$=18)&@ $\alpha$=18)&@ $\alpha$=18)&@ $\alpha$=18)&@ $\alpha$=18)&@ $\alpha$=18)&@ $\alpha$=18)&@ $\alpha$=18)\\
\hline
Center Center&48.33\%&42.24\%&38.84\%&35.59\%&32.28\%&28.11\%&23.16\%&11.62\%\\
(Stab. Training)&(60.17\%&(54.83\%&(54.83\%&(54.83\%&(54.83\%&(54.83\%&(54.83\%&(54.83\%\\
&@ $\alpha$=12)&@ $\alpha$=18)&@ $\alpha$=18)&@ $\alpha$=18)&@ $\alpha$=18)&@ $\alpha$=18)&@ $\alpha$=18)&@ $\alpha$=18)\\
\hline
Center Error&56.66\%&49.61\%&43.50\%&37.76\%&32.26\%&25.80\%&18.51\%&4.99\%\\
&(75.80\%&(70.56\%&(70.56\%&(70.56\%&(70.56\%&(70.56\%&(70.56\%&(70.56\%\\
&@ $\alpha$=12)&@ $\alpha$=18)&@ $\alpha$=18)&@ $\alpha$=18)&@ $\alpha$=18)&@ $\alpha$=18)&@ $\alpha$=18)&@ $\alpha$=18)\\
\hline
Center Error&55.58\%&48.73\%&\textbf{45.08\%}&41.86\%&38.31\%&34.39\%&28.98\%&\textbf{16.45\%}\\
(Stab. Training)&(69.99\%&(63.02\%&(60.49\%&(60.49\%&(60.49\%&(60.49\%&(60.49\%&(60.49\%\\
&@ $\alpha$=9)&@ $\alpha$=15)&@ $\alpha$=18)&@ $\alpha$=18)&@ $\alpha$=18)&@ $\alpha$=18)&@ $\alpha$=18)&@ $\alpha$=18)\\
\hline
Upper Lower&\textbf{56.74\%}&\textbf{49.80\%}&43.60\%&37.97\%&32.37\%&25.83\%&18.19\%&5.02\%\\
&(73.22\%&(70.57\%&(70.57\%&(70.57\%&(70.57\%&(70.57\%&(70.57\%&(70.57\%\\
&@ $\alpha$=15)&@ $\alpha$=18)&@ $\alpha$=18)&@ $\alpha$=18)&@ $\alpha$=18)&@ $\alpha$=18)&@ $\alpha$=18)&@ $\alpha$=18)\\
\hline
Upper Lower&55.21\%&48.72\%&45.05\%&\textbf{42.26\%}&\textbf{38.62\%}&\textbf{34.42\%}&\textbf{29.01\%}&16.28\%\\
(Stab. Training)&(69.87\%&(62.74\%&(60.44\%&(60.44\%&(60.44\%&(60.44\%&(60.44\%&(60.44\%\\
&@ $\alpha$=9)&@ $\alpha$=15)&@ $\alpha$=18)&@ $\alpha$=18)&@ $\alpha$=18)&@ $\alpha$=18)&@ $\alpha$=18)&@ $\alpha$=18)\\
\hline

    \end{tabular}
    \caption{Comparison of $\ell_{1/2}$  CIFAR-10 certificates for a variety of noise representations. See text of Appendix \ref{sec:representations}.}
    \label{tab:representations}
\end{table}
\section{Effect of Pseudorandom Seed Value} \label{sec:seed}
As mentioned in the main text, we use cyclic permutations with pseudorandom offsets to generate the coupled distribution of $\gD$, using a seed value of 0. In Table \ref{tab:seed_compare}, we compare alternate choices of seed values for CIFAR-10  with $p = 1/2$. Note that the seed value has very little effect on the certified accuracy: certified accuracies are within 1 percentage point of each other. Similar conclusions about the effect of the seed hyperparameter were found in \cite{Levine2021ImprovedDS} for the $\ell_1$ case.
\begin{table*}
    \centering
    \begin{tabular}{|c|c|c|c|c|c|c|c|c|}
\hline
\multicolumn{9}{c}{$\ell_{1/2}$}\\
\hline
$\rho$&10&20&30&40&50&60&70&80\\
\hline
Seed: 0&56.74\%&49.80\%&43.60\%&37.97\%&32.37\%&25.83\%&18.19\%&5.02\%\\
&(73.22\%&(70.57\%&(70.57\%&(70.57\%&(70.57\%&(70.57\%&(70.57\%&(70.57\%\\
&@ $\alpha$=15)&@ $\alpha$=18)&@ $\alpha$=18)&@ $\alpha$=18)&@ $\alpha$=18)&@ $\alpha$=18)&@ $\alpha$=18)&@ $\alpha$=18)\\
\hline
Seed: 1&56.64\%&49.28\%&43.94\%&38.53\%&32.61\%&26.12\%&18.43\%&5.25\%\\
&(73.17\%&(70.14\%&(70.14\%&(70.14\%&(70.14\%&(70.14\%&(70.14\%&(70.14\%\\
&@ $\alpha$=15)&@ $\alpha$=18)&@ $\alpha$=18)&@ $\alpha$=18)&@ $\alpha$=18)&@ $\alpha$=18)&@ $\alpha$=18)&@ $\alpha$=18)\\
\hline
Seed: 2&56.60\%&49.35\%&43.60\%&38.01\%&32.10\%&25.80\%&18.52\%&4.70\%\\
&(73.10\%&(70.44\%&(70.44\%&(70.44\%&(70.44\%&(70.44\%&(70.44\%&(70.44\%\\
&@ $\alpha$=15)&@ $\alpha$=18)&@ $\alpha$=18)&@ $\alpha$=18)&@ $\alpha$=18)&@ $\alpha$=18)&@ $\alpha$=18)&@ $\alpha$=18)\\
\hline
Seed: 3&56.80\%&49.71\%&43.77\%&38.30\%&32.18\%&25.95\%&18.04\%&5.01\%\\
&(72.77\%&(70.74\%&(70.74\%&(70.74\%&(70.74\%&(70.74\%&(70.74\%&(70.74\%\\
&@ $\alpha$=15)&@ $\alpha$=18)&@ $\alpha$=18)&@ $\alpha$=18)&@ $\alpha$=18)&@ $\alpha$=18)&@ $\alpha$=18)&@ $\alpha$=18)\\
\hline
Seed: 4&56.82\%&49.70\%&43.64\%&37.79\%&32.09\%&25.98\%&18.54\%&5.04\%\\
&(73.09\%&(70.56\%&(70.56\%&(70.56\%&(70.56\%&(70.56\%&(70.56\%&(70.56\%\\
&@ $\alpha$=15)&@ $\alpha$=18)&@ $\alpha$=18)&@ $\alpha$=18)&@ $\alpha$=18)&@ $\alpha$=18)&@ $\alpha$=18)&@ $\alpha$=18)\\
\hline
Seed: 0&55.21\%&48.72\%&45.05\%&42.26\%&38.62\%&34.42\%&29.01\%&16.28\%\\
(Stab Training)&(69.87\%&(62.74\%&(60.44\%&(60.44\%&(60.44\%&(60.44\%&(60.44\%&(60.44\%\\
&@ $\alpha$=9)&@ $\alpha$=15)&@ $\alpha$=18)&@ $\alpha$=18)&@ $\alpha$=18)&@ $\alpha$=18)&@ $\alpha$=18)&@ $\alpha$=18)\\
\hline
Seed: 1&55.81\%&48.67\%&44.43\%&41.45\%&38.16\%&34.17\%&28.82\%&16.10\%\\
(Stab Training)&(69.84\%&(62.51\%&(60.07\%&(60.07\%&(60.07\%&(60.07\%&(60.07\%&(60.07\%\\
&@ $\alpha$=9)&@ $\alpha$=15)&@ $\alpha$=18)&@ $\alpha$=18)&@ $\alpha$=18)&@ $\alpha$=18)&@ $\alpha$=18)&@ $\alpha$=18)\\
\hline
Seed: 2&55.18\%&48.52\%&44.77\%&41.38\%&38.03\%&34.02\%&28.81\%&16.08\%\\
(Stab Training)&(69.83\%&(62.80\%&(60.13\%&(60.13\%&(60.13\%&(60.13\%&(60.13\%&(60.13\%\\
&@ $\alpha$=9)&@ $\alpha$=15)&@ $\alpha$=18)&@ $\alpha$=18)&@ $\alpha$=18)&@ $\alpha$=18)&@ $\alpha$=18)&@ $\alpha$=18)\\
\hline
Seed: 3&55.99\%&48.64\%&45.16\%&41.98\%&38.60\%&34.61\%&29.13\%&16.60\%\\
(Stab Training)&(70.27\%&(62.77\%&(60.02\%&(60.02\%&(60.02\%&(60.02\%&(60.02\%&(60.02\%\\
&@ $\alpha$=9)&@ $\alpha$=15)&@ $\alpha$=18)&@ $\alpha$=18)&@ $\alpha$=18)&@ $\alpha$=18)&@ $\alpha$=18)&@ $\alpha$=18)\\
\hline
Seed: 4&56.10\%&48.59\%&44.76\%&41.53\%&38.19\%&34.17\%&28.81\%&16.00\%\\
(Stab Training)&(69.87\%&(62.90\%&(60.30\%&(60.30\%&(60.30\%&(60.30\%&(60.30\%&(60.30\%\\
&@ $\alpha$=9)&@ $\alpha$=15)&@ $\alpha$=18)&@ $\alpha$=18)&@ $\alpha$=18)&@ $\alpha$=18)&@ $\alpha$=18)&@ $\alpha$=18)\\
\hline

    \end{tabular}
    \caption{Certified accuracy as a function of fractional $\ell_p$ distance $\rho$, for $p = 1/2$ on CIFAR-10, using various values of the seed for pseudo-random generation of cyclic permutations for $\gD$.  We test with $\alpha = \{1,3,6,9,12,15, 18\}$ where 1/$\alpha$ is the Lipschitz constant of the model, and report the highest certificate for each technique over all of the models. In parentheses, we report the the clean accuracy and the $\alpha$ parameter for the associated model. }
    \label{tab:seed_compare}
\end{table*}
\section{Effect of Cyclic Permutations vs. Arbitrary Permutations} \label{sec:cyclic}
In the main text, we mention that Theorem \ref{thm:quantizedzls} allows for the use of arbitrary permutations in defining the coupling for the distribution $\gD$. However, in practice, we choose to use only cyclic permutations of a single list of outcomes. This is because using arbitrary permutations involves storing in memory the complete permutation (each consisting of $B$ outcomes, with up to $B = 18,000$ in our experiments) for \textit{each} dimension. This does not scale efficiently to higher-dimensional problems. On CIFAR-10 with $p = 1/2$, we did attempt this arbitrary permutation method, using pseudo-randomly generated arbitrary permutations for each dimension. Results are found in Table \ref{tab:cyclic_compare}: in general, we find no major benefit to using arbitrary permutations.

\begin{table*}
    \centering
    \begin{tabular}{|c|c|c|c|c|c|c|c|c|}
\hline
\multicolumn{9}{c}{$\ell_{1/2}$}\\
\hline
&10&20&30&40&50&60&70&80\\
\hline
Cyclic Perm.&56.74\%&49.80\%&43.60\%&37.97\%&32.37\%&25.83\%&18.19\%&5.02\%\\
&(73.22\%&(70.57\%&(70.57\%&(70.57\%&(70.57\%&(70.57\%&(70.57\%&(70.57\%\\
&@ $\alpha$=15)&@ $\alpha$=18)&@ $\alpha$=18)&@ $\alpha$=18)&@ $\alpha$=18)&@ $\alpha$=18)&@ $\alpha$=18)&@ $\alpha$=18)\\
\hline
Cyclic Perm.&55.21\%&48.72\%&45.05\%&42.26\%&38.62\%&34.42\%&29.01\%&16.28\%\\
(Stab. Training)&(69.87\%&(62.74\%&(60.44\%&(60.44\%&(60.44\%&(60.44\%&(60.44\%&(60.44\%\\
&@ $\alpha$=9)&@ $\alpha$=15)&@ $\alpha$=18)&@ $\alpha$=18)&@ $\alpha$=18)&@ $\alpha$=18)&@ $\alpha$=18)&@ $\alpha$=18)\\
\hline
Arbitrary Perm.&56.85\%&49.62\%&43.74\%&38.12\%&32.08\%&25.94\%&18.29\%&4.70\%\\
&(72.90\%&(70.56\%&(70.56\%&(70.56\%&(70.56\%&(70.56\%&(70.56\%&(70.56\%\\
&@ $\alpha$=15)&@ $\alpha$=18)&@ $\alpha$=18)&@ $\alpha$=18)&@ $\alpha$=18)&@ $\alpha$=18)&@ $\alpha$=18)&@ $\alpha$=18)\\
\hline
Arbitrary Perm.&55.56\%&48.56\%&44.60\%&41.46\%&38.13\%&34.39\%&28.93\%&16.26\%\\
(Stab. Training)&(70.28\%&(62.73\%&(60.08\%&(60.08\%&(60.08\%&(60.08\%&(60.08\%&(60.08\%\\
&@ $\alpha$=9)&@ $\alpha$=15)&@ $\alpha$=18)&@ $\alpha$=18)&@ $\alpha$=18)&@ $\alpha$=18)&@ $\alpha$=18)&@ $\alpha$=18)\\
\hline
    \end{tabular}
    \caption{Certified accuracy as a function of fractional $\ell_p$ distance $\rho$, for $p = 1/2$ on CIFAR-10, using either pseudorandom cyclic permutations (as in the main text) or psuedorandom arbitrary permutations.  We test with $\alpha = \{1,3,6,9,12,15, 18\}$ where 1/$\alpha$ is the Lipschitz constant of the model, and report the highest certificate for each technique over all of the models. In parentheses, we report the the clean accuracy and the $\alpha$ parameter for the associated model. }
    \label{tab:cyclic_compare}
\end{table*}
\section{Complete Certification Results on CIFAR-10} \label{sec:complete_cifar}
In Figures \ref{fig:allcertshalf} and \ref{fig:allcertsthird}, we show the complete certification results for all models used in Table \ref{tab:cifar} in the main text. Note that our method dominates at every noise level, except when $\alpha = 1$: this is because when $\alpha =1$, the maximum possible certificate using our method is $(1/2)^{1/p}$, while it is $1/2$ using equivalence of norms from an $\ell_1$ certificate. However, this is largely irrelevant, because we show that by selecting larger values of the hyperparameter $\alpha$, we are able to achieve consistently larger certificates.

In Table \ref{tab:base_classifier_accuracies_cifar}, we provide the \textit{base} classifier accuracies for the models. Note that at large $\alpha$, the form of the certificates using our method, and using $\ell_1$ certificates through norm conversion, are essentially the same: both are (roughly):
\begin{equation}
    \min_{c': c'\neq c}\left[\left(\frac{\alpha}{2} \left(p_c(\vx) - p_{c'}(\vx) \right)\right)^{1/p} \right]
\end{equation}
where $p_c(\vx)$ is the fraction of the smoothing samples on which  the base classifier returns the class $c$ (see Appendix \ref{sec:cert_procedure} and Section \ref{sec:results} in the main text for details.) Therefore the success of our technique at producing larger certificates is entirely because the base classifier is more accurate under our fractional-$\ell_p$ noise than under splitting noise with a fixed $\Lambda = \alpha$.

\begin{figure}[]
    \centering
    \includegraphics[width=\textwidth]{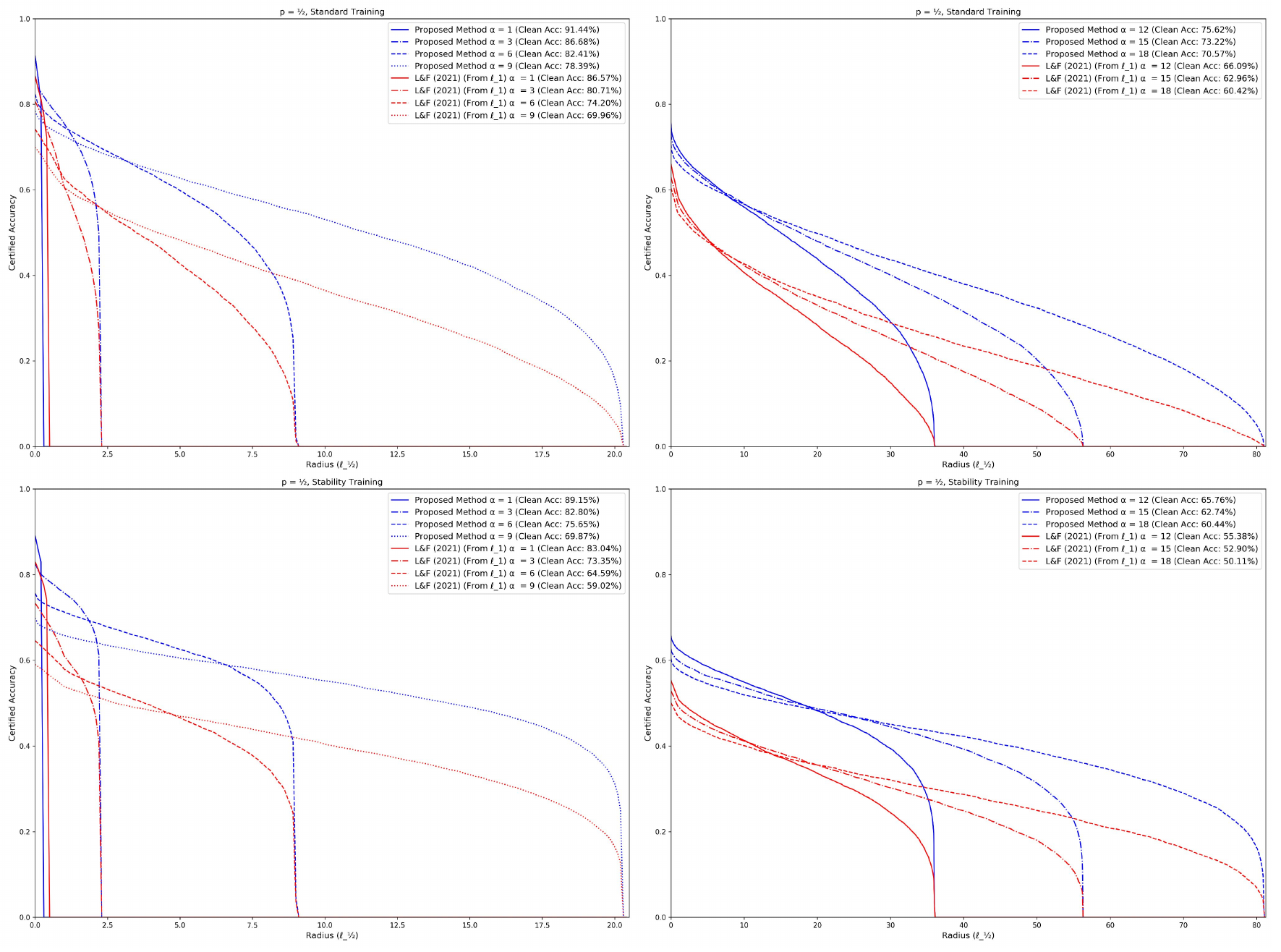}
    \caption{Full certification results for $p= 1/2$ on CIFAR-10. Left column shows $\alpha \in \{1,3,6,9\}$, right column shows $\alpha \in \{12,15,18\}$, top row shows standard training, and bottom row shows stability training. }
    \label{fig:allcertshalf}
\end{figure}
\begin{figure}[]
    \centering
    \includegraphics[width=\textwidth]{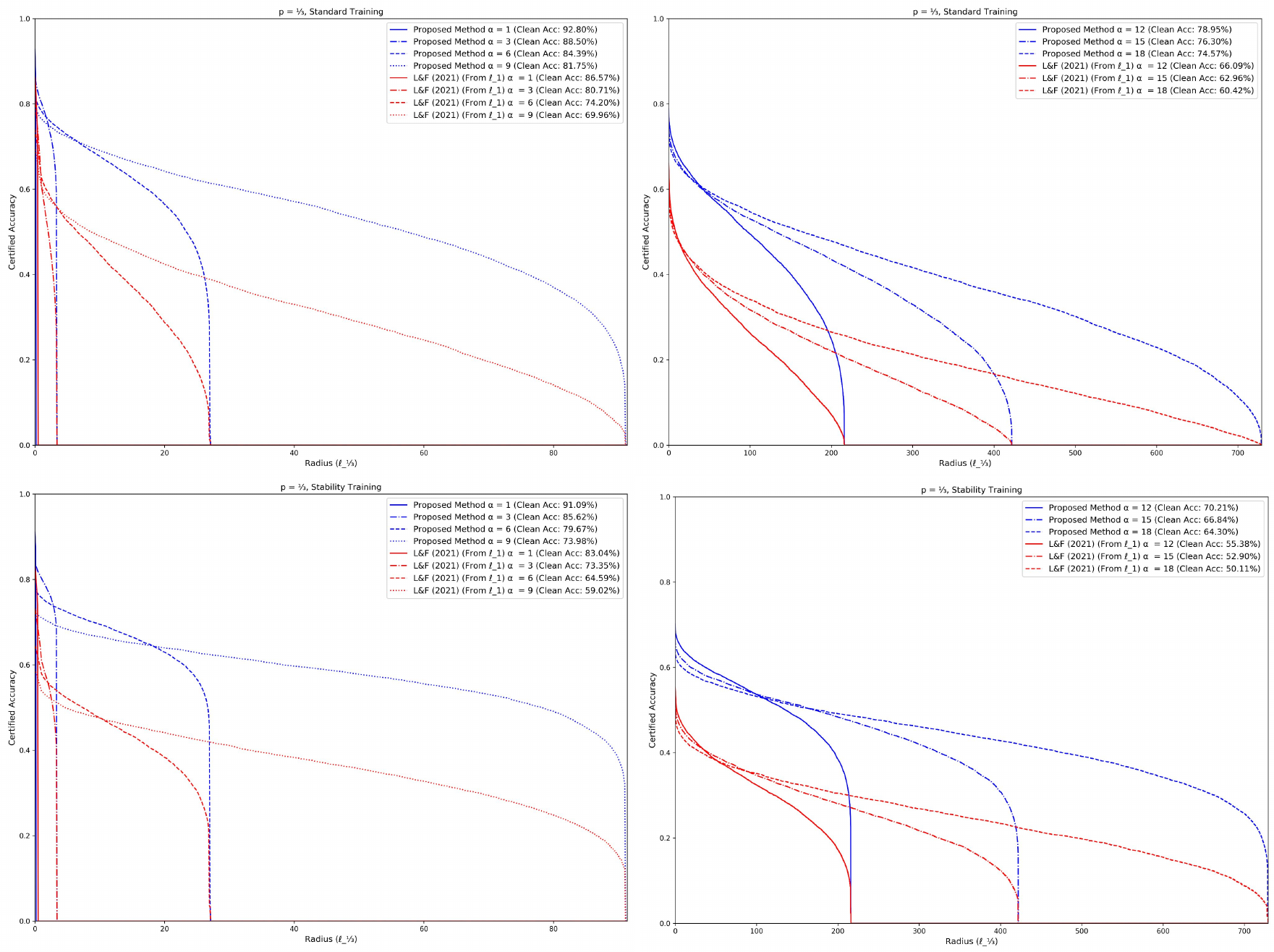}
    \caption{Full certification results for $p= 1/3$ on CIFAR-10. Left column shows $\alpha \in \{1,3,6,9\}$, right column shows $\alpha \in \{12,15,18\}$, top row shows standard training, and bottom row shows stability training. }
    \label{fig:allcertsthird}
\end{figure}

\begin{table*}
    \centering
    \begin{tabular}{|c|c|c|c|c|c|c|}
\hline
$\alpha$&$\ell_1$ (L\&F 2021)& $\ell_1$ (L\&F 2021) (Stability)& $\ell_{1/2}$ & $\ell_{1/2}$ (Stability) & $\ell_{1/3}$ & $\ell_{1/3}$ (Stability) \\
\hline
1&83.98\%&82.12\%&90.16\%&88.74\%&92.14\%&90.84\%\\
3&74.38\%&70.89\%&83.51\%&81.47\%&86.35\%&84.72\%\\
6&65.68\%&61.51\%&76.38\%&73.51\%&79.91\%&77.73\%\\
9&59.82\%&55.68\%&70.97\%&67.15\%&75.23\%&71.82\%\\
12&55.48\%&51.68\%&66.70\%&62.86\%&71.42\%&67.59\%\\
15&52.21\%&48.84\%&63.66\%&59.51\%&67.86\%&64.03\%\\
18&49.54\%&46.13\%&60.75\%&56.87\%&65.39\%&61.25\%\\
\hline
    \end{tabular}
    \caption{Base classifier accuracies on CIFAR-10. Note that as $p$ decreases, the base classifier accuracy increases for a fixed value of $\alpha$: this leads to larger certificates. }
    \label{tab:base_classifier_accuracies_cifar}
\end{table*}
\clearpage
\section{CIFAR-10 $p=1/2$ results with larger values of $\alpha$} \label{sec:expanded_cifar}
We repeated  $p=1/2$ CIFAR-10 experiments in Table \ref{tab:cifar} in the main text for the additional values of $\alpha \in \{21, 24, 27,30\}$. Summary results are presented in Table \ref{tab:cifar_extended}. While this increases certified accuracy under large perturbations, it does so at the cost of decreased clean accuracy. The conclusion that our method significantly outperforms \cite{Levine2021ImprovedDS} in the $p < 1$ case still holds. Full results for all classifiers are presented in Figure \ref{fig:cifarhalfhuge}, and base classifier accuracies are in Table \ref{tab:base_classifier_accuracies_cifar_huge}.

\begin{table*}[h!]
    \centering
    \begin{tabular}{|c|c|c|c|c|c|c|c|}
\hline
\multicolumn{8}{c}{$\ell_{1/2}$}\\
\hline
&30&60&90&120&150&180&210\\
\hline
L\&F (2021)&32.28\%&24.72\%&18.95\%&14.20\%&9.50\%&5.42\%&1.55\%\\
(From $\ell_1$)&(53.35\%&(53.35\%&(53.35\%&(53.35\%&(53.35\%&(53.35\%&(53.35\%\\
&@ $\alpha$=30)&@ $\alpha$=30)&@ $\alpha$=30)&@ $\alpha$=30)&@ $\alpha$=30)&@ $\alpha$=30)&@ $\alpha$=30)\\
\hline
L\&F (2021)&32.39\%&26.41\%&22.34\%&18.68\%&15.07\%&11.21\%&6.21\%\\
(From $\ell_1$)&(47.03\%&(44.38\%&(44.38\%&(44.38\%&(44.38\%&(44.38\%&(44.38\%\\
(Stab. Training)&@ $\alpha$=24)&@ $\alpha$=30)&@ $\alpha$=30)&@ $\alpha$=30)&@ $\alpha$=30)&@ $\alpha$=30)&@ $\alpha$=30)\\
\hline
\textbf{Variable-$\Lambda$}&\textbf{45.45\%}&37.45\%&30.71\%&24.90\%&19.40\%&13.11\%&5.67\%\\
&(66.56\%&(63.40\%&(63.40\%&(63.40\%&(63.40\%&(63.40\%&(63.40\%\\
&@ $\alpha$=24)&@ $\alpha$=30)&@ $\alpha$=30)&@ $\alpha$=30)&@ $\alpha$=30)&@ $\alpha$=30)&@ $\alpha$=30)\\
\hline
\textbf{Variable-$\Lambda$}&45.05\%&\textbf{38.16\%}&\textbf{33.74\%}&\textbf{29.79\%}&\textbf{25.83\%}&\textbf{20.84\%}&\textbf{14.19\%}\\
(Stab Training)&(60.44\%&(56.23\%&(52.58\%&(52.58\%&(52.58\%&(52.58\%&(52.58\%\\
&@ $\alpha$=18)&@ $\alpha$=24)&@ $\alpha$=30)&@ $\alpha$=30)&@ $\alpha$=30)&@ $\alpha$=30)&@ $\alpha$=30)\\
\hline

    \end{tabular}
    \caption{Certified accuracy as a function of fractional $\ell_p$ distance $\rho$, for $p = 1/2$ on CIFAR-10 under large perturbations, with large values of $\alpha$ ($\alpha \in \{21, 24, 27,30\}$) in addition to the $\alpha$ values used in the main text. As in Table \ref{tab:cifar}, we report the highest certificate for each technique over all of the models. }
    \label{tab:cifar_extended}
\end{table*}

\begin{figure}[h!]
    \centering
    \includegraphics[width=\textwidth]{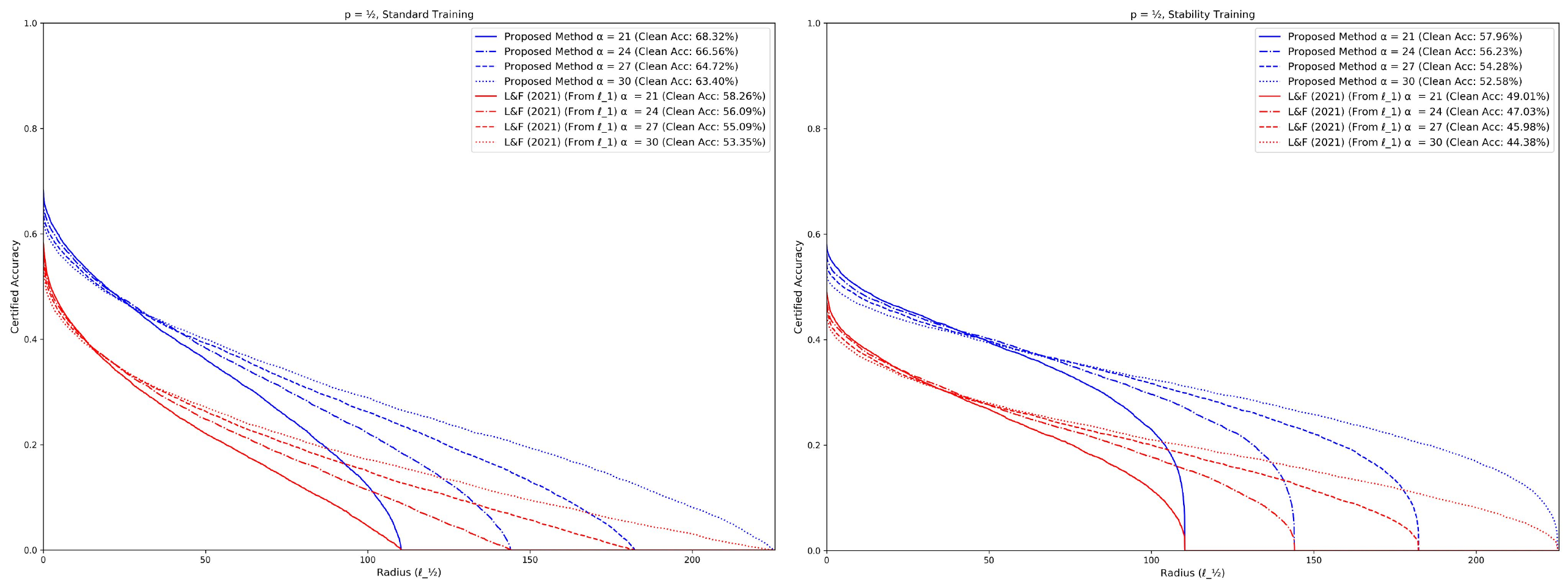}
    \caption{Full certification results for $p= 1/2$ on CIFAR-10, with $\alpha \in \{21,24,27,30\}$. Left panel shows standard training, right panel shows stability training. }
    \label{fig:cifarhalfhuge}
\end{figure}
\begin{table*}[h!]
    \centering
    \begin{tabular}{|c|c|c|c|c|}
\hline
$\alpha$&$\ell_1$ (L\&F 2021)& $\ell_1$ (L\&F 2021) (Stability)& $\ell_{1/2}$ & $\ell_{1/2}$ (Stability) \\
\hline
21&47.04\%&44.14\%&58.17\%&54.14\%\\
24&45.13\%&42.36\%&55.91\%&52.31\%\\
27&43.49\%&40.82\%&53.97\%&50.46\%\\
30&41.99\%&39.36\%&52.34\%&48.70\%\\
\hline
    \end{tabular}
    \caption{Base classifier accuracies for CIFAR-10, for large values of $\alpha$. }
    \label{tab:base_classifier_accuracies_cifar_huge}
\end{table*}

\section{Base Classifier Accuracies for ImageNet} \label{sec:base_imagenet}
Base classifier accuracies for the ImageNet results in the main text are provided in Table \ref{tab:base_classifier_accuracies_imagenet}.
\begin{table*}[h!]
    \centering
    \begin{tabular}{|c|c|c|}
\hline
$\alpha$&$\ell_1$ (L\&F 2021)& $\ell_{1/2}$ \\
\hline
6&52.50\%&58.67\%\\
12&45.49\%&53.51\%\\
18&40.39\%&49.82\%\\
\hline
    \end{tabular}
    \caption{Base classifier accuracies on ImageNet. Note that for $p=1/2$, the base classifier accuracy increases compared to $p=1$ for each fixed value of $\alpha$: this leads to larger certificates. }
    \label{tab:base_classifier_accuracies_imagenet}
\end{table*}
\end{document}